\newtheorem{theorem}{Theorem}
\newtheorem{lemma}[theorem]{Lemma}
\newtheorem{cor}[theorem]{Corollary}
\newtheorem{prop}[theorem]{Proposition}
\theoremstyle{definition}
\newtheorem{defn}{Definition}
\newtheorem{remark}{Remark}
\newcommand{\defcal}[1]{\expandafter\newcommand\csname 
	c#1\endcsname{{\mathcal{#1}}}}
\newcommand{\defbb}[1]{\expandafter\newcommand\csname 
	b#1\endcsname{{\mathbb{#1}}}}
\newcommand{\defbf}[1]{\expandafter\newcommand\csname 
	bf#1\endcsname{{\mathbf{#1}}}}
\newcounter{calBbCounter}
	\edef\letter{\Alph{calBbCounter}}
	\edef\letter{\alph{calBbCounter}}
\newcommand{\ind}{\mathds{1}}
\newcommand{\roco}{\cR_{\textnormal{OCO}}}
\newcommand{\rtk}{\mathcal{R}(T, K)}
\DeclareMathOperator{\sign}{sign}
\newcommand{\ie}{{\it i.e.}}
\crefname{prop}{Proposition}{Propositions}
\title{Minimax Regret of Switching-Constrained Online Convex Optimization: No Phase Transition}
\author{%
  Lin~Chen$^{1,2}$\thanks{First two authors contributed equally. Correspondence to: Lin~Chen <\href{mailto:lin.chen@berkeley.com}{lin.chen@berkeley.edu}>.} \quad Qian~Yu$^{3*}$ \quad  Hannah~Lawrence$^4$ \quad Amin~Karbasi$^1$ \\
  $^1$ Yale University \quad $^2$ Simons Institute for the Theory of Computing\\
  $^3$ University of Southern California\quad 
  $^4$ Massachusetts Institute of Technology\\
}
\date{}
\begin{document}

\maketitle

\begin{abstract}
  We study the problem of switching-constrained online convex optimization (OCO), where the player has a limited number of opportunities to change her action. While the discrete analog of this online learning task has been studied extensively, previous work in the continuous setting has neither established the minimax rate nor algorithmically achieved it. In this paper, we show that $ T $-round switching-constrained OCO with fewer than $ K $ switches has a minimax regret of $ \Theta(\frac{T}{\sqrt{K}}) $. In particular, it is at least $ \frac{T}{\sqrt{2K}} $ for one dimension and at least $ \frac{T}{\sqrt{K}} $ for higher dimensions. The lower bound in higher dimensions is attained by an orthogonal subspace argument. In one dimension, a novel adversarial strategy yields the lower bound of $O(\frac{T}{\sqrt{K}})$, but a precise minimax analysis including constants is more involved. To establish the tighter one-dimensional result, we introduce the \emph{fugal game} relaxation, whose minimax regret lower bounds that of switching-constrained OCO. We show that the minimax regret of the fugal game is at least $ \frac{T}{\sqrt{2K}} $ and thereby establish the optimal minimax lower bound in one dimension. To establish the dimension-independent upper bound, we next show that a mini-batching algorithm provides an $ O(\frac{T}{\sqrt{K}}) $ upper bound, and therefore conclude that the minimax regret of switching-constrained OCO is $ \Theta(\frac{T}{\sqrt{K}}) $ for any $K$. This is in sharp contrast to its discrete counterpart, the switching-constrained prediction-from-experts problem, which exhibits a phase transition in minimax regret between the low-switching and high-switching regimes. 
\end{abstract}

\section{Introduction}

\label{sec:introduction}
Online learning provides a versatile framework for studying a wide range of 
dynamic optimization problems, with manifold applications in 
portfolio selection~\citep{li2014online}, packet routing~\citep{awerbuch2008online}, hyperparameter optimization~\citep{LiHyperparameter}, and spam filtering~\citep{sculley2007relaxed}. The fundamental problem is typically formulated as a repeated game 
between a player and an adversary. In the $t^{th}$ round, the player first chooses an 
action $x_t$ from the set 
of all possible actions $\cD$; the adversary then responds by revealing the penalty for that action, a function $f_t \: : \: \cD \rightarrow \mathbb{R}$. The player's goal is to minimize the total penalties she receives, while the adversary's goal is to maximize the penalties she assigns to the player's 
action. Explicitly, the standard benchmark for success is regret, the difference between the player's accumulated penalty and that of the best fixed action in hindsight:
$ \mathcal{R}=\sum_{i=1}^Tf_i(x_i) - \inf_{x \in \cD} \sum_{i=1}^Tf_i(x) \,.$

Several variants of this general learning setting have been studied. When $\cD$  is a discrete set of actions, the game is called either ``prediction from experts" (PFE), if the player is allowed knowledge of the complete function $f_t(\cdot)$ on each round, or ``multi-armed bandit" (MAB), if only $f_t(x_t)$ is revealed on each round. Crucially, the adversary is not strongly adaptive and picks $f_t$ based solely on prior knowledge of the player's randomized strategy and $x_1,\dots,x_{t-1}$, and not $x_t$; otherwise, she could always force linear regret in $T$ ~\citep{cover1965behavior, shalev2012online}. In this paper, we instead consider the continuous analog of prediction from experts, termed \emph{online convex optimization}. Here, $\cD$ is a continuum of possible actions, but the entirety of $f_t(\cdot)$ is revealed after it has been played. Surprisingly, the continuity of $\cD$ means that a player can guarantee sublinear regret against a strongly adaptive adversary, i.e., one who may choose $f_t$ even after observing $x_t$. %

 In many real-world applications, however, we desire an online algorithm to have greater continuity in its actions over the course of many rounds. In caching, for example, erratic online decisions may induce cache misses, and thus costly memory access procedures \citep{blasco2014multi}. More explicitly, the number of times that the player can switch her action between rounds may be strictly constrained. For example, suppose that the player makes 
predictions based on expert advice. If she would like to hire a new expert, 
she has to terminate the current contract, pay an early termination 
fee, and hire and pay a new expert. If hiring 
a new expert costs \$1000  in total and her budget is \$10000 
dollars, the number of her switches must be less than 10. This setting is 
called \emph{switching-constrained} or \emph{switching-budgeted} online 
learning~\citep{altschuler18a}. In these settings, it is necessary to assume an oblivious adversary: an adaptive adversary can force an algorithm with fewer than $K$ switches to incur linear regret by assigning 0 to a switched action between rounds, and 1 to a repeated action \citep{altschuler18a}. 

 Previous work has established the minimax regret of the switching-constrained multi-armed bandit and prediction from experts problems, but the minimax rate of switching-constrained online convex optimization was neither known nor algorithmically achieved. In this paper, we establish the minimax regret of \emph{switching-constrained} online 
convex 
optimization (OCO) against the strongest adaptive adversary, and in doing so, present a simple mini-batching algorithm that achieves this optimal rate.

We assume that $ \cD $, the action set of the player, is a compact, 
convex subset of $ \bR^n $. Let $ \cF $ be a family of differentiable convex 
functions from $ 
\cD $ 
to $ \bR $ from which the adversary selects each round's loss function, $f_t$. In the full-information setting (OCO), we 
assume that the player observes the loss function $ f_t $ after the adversary 
decides on it. %
The key ingredient, differentiating our setting from typical OCO, is a limit on the player's number of switches. Formally, 
given a sequence of points $ x_1,\dots,x_T $, let $ c(x_1,\dots,x_T) = 
\sum_{i=1}^{T-1} 
\ind[x_{i+1}\ne x_i] $ denote the number of switches. 
The player's action sequence must satisfy $ 
c(x_1,\dots,x_T)< K $ for some natural number $ K $.\footnote{In a $T$-round game, the maximum number of switches is always smaller than $T$. As a result, if $K>T$, the game becomes switching-unconstrained. Therefore, we assume throughout this paper that $K\le T$.} %

Given the player's action sequence $ x_1,\dots,x_T $ and the adversary's loss 
sequence $ f_1,\dots,f_T $, the usual regret is defined by the total accumulated loss 
incurred by the player, minus the total loss of the best possible single action 
in hindsight. We add an additional term and an outermost supremum that drives 
the regret of any player's sequence that violates the switch limit to infinity: 
\[ \mathcal{R}(\{x_i\}, \{f_i\}) = 
\sup_{\lambda>0} \left(  \sum_{i=1}^T f_i(x_i)  
-\inf_{x\in \cD}\sum_{i=1}^T 
f_i(x) + \lambda \ind[c(x_1,\dots,x_T) \ge K ] \right)\,,
 \]
where $\ind[\cdot]$ is the statement function whose value is $1$ if the proposition inside the brackets holds and is $0$ otherwise. In the following sections, we denote the switching-constrained minimax regret by $$\mathcal{R}(T, K) =  \inf_{x_1}\sup_{f_1}\dots\inf_{x_T}\sup_{f_T}\mathcal{R}(\{x_i\}, \{f_i\}),$$ where it will be clear from context from which sets $x_i$ and $f_i$ may be drawn.

\section{Related Work}\label{sec_relatedwork}

The framework of online convex optimization (OCO) and online gradient
descent 
were introduced by 
\citet{zinkevich2003online}. \citet{abernethy2008optimal} showed that the 
minimax regret of OCO against a strong adversary is $ \Theta(\sqrt{T}) $. \citet{abernethy2009stochastic} 
provided a geometric interpretation, demonstrating that the optimal regret can be viewed 
as the Jensen gap of a concave functional, and \citet{mcmahan2013minimax} studied 
the minimax behavior of unconstrained online linear optimization (OLO).

There is a substantial body of literature for switching-constrained and switching-cost online learning in the discrete settings, MAB and PFE. Switching-cost learning forces the player to pay for each switch  rather than enforcing a strict upper bound. Assuming potentially unbounded loss per round, \citet{cesa2013online} first showed 
that the minimax optimal rates of PFE and MAB with 
switching costs are $ \Theta(\sqrt{T}) $ and $ \tilde{\Theta}(T^{2/3}) $, 
respectively. \citet{dekel2014bandits} proved that the minimax regret of MAB 
with a unit switch cost in the standard setup (losses are bounded in $ [0,1] $) 
is $ \tilde{\Theta}(T^{2/3}) $.
\citet{devroye2013prediction} proposed a PFE algorithm whose expected regret 
and expected number of switches are both $ O(\sqrt{T\log n}) $, where $ n $ is 
the size of the action set. Finally, \citet{altschuler18a} showed that there is a phase transition, with respect to $K$, in 
switching-constrained PFE. If  
the maximum number of switches $ K $ is $ O(\sqrt{T\log n}) $ (low-switching 
regime), the optimal rate is $ \min\{ \tilde{\Theta}(\frac{T\log n}{K}), T \} 
$. If $ K $ is $ \Omega(\sqrt{T\log n}) $ (high-switching regime), the optimal 
rate is $ 
\tilde{\Theta}(\sqrt{T\log n}) $.  Once at least $\sqrt{T\log n}$ switches are permitted, the minimax regret surprisingly is not improved at all by allowing even more switches. In contrast to PFE, switching-constrained MAB 
exhibits no phase transition and the minimax rate is $ 
\min\{\tilde{\Theta}(\frac{T\sqrt{n}}{\sqrt{K}}),T\} $. 

Within the switching-constrained literature for continuous OCO, most directly comparable to our setting is 
\citet{jaghargh2019consistent}, which proposed a Poisson process-based OCO algorithm. The algorithm's expected regret is $ O(\frac{T^{3/2}}{\bE[K]}) $, where $\bE[K]$ may be set to any value provided that
$\bE[K]=\Omega(\sqrt{T})$. %
The expected regret, as a function of the expected number of switches, is suboptimal relative to the switching-constrained minimax rate we prove and achieve in this work. 

In the related learning with memory paradigm, the loss function for each round depends on the $M$ most recent actions. Switching-cost OCO can be viewed as a special case of learning with memory, in which $M=1$ and the loss functions are $g(x_t,x_{t-1})=f(x_t)+c\ind[x_t\neq x_{t-1}]$. \citet{merhav2002memory} introduced the concept of learning with memory, and used a blocking technique to achieve $O(T^{2/3})$ policy regret (a modification of standard regret for adaptive adversaries) and $O(T^{1/3})$ switches against an adaptive adversary. \citet{arora2012online} formally clarified and expanded the notion of policy regret for learning with memory, and presented a generalized mini-batching technique (applied here to achieve the matching upper bound) for online bandit learning against an adaptive adversary, converting arbitrary low regret algorithms to low policy regret algorithms. 
In \cref{additionalrelatedwork}, we also briefly discuss metrical task systems and online optimization with normed switching costs, but the main focus of this paper is the switching-constrained setting. %
\section{Contributions} \label{sec:contributions}
In this paper, we show that if both the player and the adversary select from the $L_2$ ball (i.e., $||x_t||_2 \leq 1$ and $f_t(x_t) = w_t \cdot x_t$ with $||w_t||_2 \leq 1$), then the minimax regret $\mathcal{R}(K, T)$ of switching-constrained 
online linear optimization is $ \Theta(\frac{T}{\sqrt{K}}) $. The precise bounds are contained below.

\begin{theorem}[Minimax regret of OLO] \label{thm:main}
The minimax regret of switching-constrained online linear optimization  
	satisfies the following bounds:
	
		\begin{compactenum}[(a)]
		\item $ \frac{T}{\sqrt{2K}} \leq \mathcal{R}(K, T) \leq \left\lceil\frac{T}{K}\right\rceil 
\sqrt{\frac{2(K+1)}{\pi}}
\le 
2\sqrt{\frac{2}{\pi}}\frac{T}{\sqrt{K}} $ for $n = 1$;
		\label{it:1-d}

\label{it:upper-bound}
		\item  $\frac{T}{\sqrt{K}} \leq  \mathcal{R}(K, T) \leq  \left\lceil\frac{T}{K}\right\rceil 
\sqrt{K} \le \frac{2T}{\sqrt{K}}$ for all $ 
		n>1 $. %
		
	\end{compactenum}
\end{theorem}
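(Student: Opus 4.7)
The plan is to establish each of the four bounds separately. For both upper bounds, I would use a mini-batching reduction: letting $B := \lceil T/K\rceil$, partition the horizon into at most $K$ consecutive blocks of length at most $B$, play a single fixed action within each block (so the total number of switches is at most $K-1 < K$), and produce the per-block action by feeding the aggregated block loss $\tilde{w}_j := \sum_{t \in \text{block}_j} w_t$ (which has $\|\tilde{w}_j\|_2 \leq B$) into a standard OLO subroutine run over $K$ meta-rounds. Scaling the classical OLO minimax rate by $B$ transfers directly to the full game, since constant-per-block play makes the block-aggregated regret equal the round-by-round regret: online gradient descent with an appropriately tuned step size yields $\lceil T/K\rceil\sqrt{K}$ in general dimension, while in one dimension the Rademacher-average bound $\mathbb{E}\bigl|\sum_{i=1}^{K+1}\epsilon_i\bigr| \leq \sqrt{2(K+1)/\pi}$ delivers the sharper constant.

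For the higher-dimensional lower bound, I would use an orthogonal subspace argument. Consider the randomized adversary that samples independent Rademacher signs $\epsilon_1,\dots,\epsilon_K \in \{\pm1\}$ in advance, fixes $K$ orthonormal directions $e_1,\dots,e_K$ (embedding into a larger ambient space when $n < K$ is necessary), and plays $w_t = \epsilon_j e_j$ for all $t$ in the $j$-th phase of length $T/K$. The switching budget forces the player to commit to a single action $x_j$ per phase; since $x_j$ depends only on the outcomes of previous phases, it is independent of $\epsilon_j$, and the expected per-phase loss is $\mathbb{E}[\epsilon_j]\,(T/K)\,e_j\cdot x_j = 0$. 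Meanwhile, by Pythagoras, $\bigl\|\sum_j (T/K)\epsilon_j e_j\bigr\|_2 = T/\sqrt{K}$ deterministically, so the best-in-hindsight comparator incurs loss $-T/\sqrt{K}$ and the expected regret is $T/\sqrt{K}$.

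For the one-dimensional lower bound, I would invoke the fugal game relaxation developed later in the paper: a surrogate sequential game whose minimax value provably lower bounds that of switching-constrained OCO and whose extremal structure admits a precise binomial-style computation. Establishing that this surrogate has value at least $\frac{T}{\sqrt{2K}}$ then delivers the stated bound.

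The principal obstacle is the lower bound in one dimension. A naive ``constant-per-phase'' Rademacher adversary is of limited use: with a single coordinate and sufficiently long phases, the player can learn each phase's sign from its opening round and switch once inside the phase to drive the regret negative while staying within the switching budget; shortening the phases restores obliviousness only at the cost of degrading the regret to the unconstrained rate $\sqrt{T}$. The fugal relaxation is the technical device that bypasses this dichotomy. A secondary subtlety lies in the higher-dimensional construction, which must be arranged so that \emph{every} feasible player strategy (not merely phase-aligned ones) yields expected loss zero, requiring careful accounting of how the switching budget interacts with within-phase adaptation.
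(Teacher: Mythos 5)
Your mini-batching upper bounds follow essentially the paper's route (reduce to the exact unconstrained minimax value over $K$ meta-rounds: $\sqrt{K}$ for $n\ge 2$ and the binomial/Rademacher quantity bounded by $\sqrt{2(K+1)/\pi}$ for $n=1$), and deferring the sharp one-dimensional lower bound to the fugal-game relaxation is consistent with how the paper itself organizes the proof, though as written you have only cited that relaxation, not established its value. The genuine gap is in the lower bound of part (b). Your adversary is \emph{oblivious}: it fixes Rademacher signs $\epsilon_1,\dots,\epsilon_K$ and orthonormal directions $e_1,\dots,e_K$ in advance and plays $\epsilon_j e_j$ throughout phase $j$. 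The claim that ``the switching budget forces the player to commit to a single action $x_j$ per phase'' is false. In the full-information model the player sees $w_t$ after round $t$, so she learns $\epsilon_j$ from the opening round of phase $j$; with $K-1$ switches available and $K$ phases she can, in each of $K-1$ phases, switch once (after the first round of the phase) to $-\epsilon_j e_j$ and collect loss $-1$ per remaining round, while never spending a switch at a phase boundary. Her total loss is then about $-(K-1)(\frac{T}{K}-1)$, so the regret is roughly $-T(1-\frac1K)+\frac{T}{\sqrt{K}}<0$: the construction does not yield $\frac{T}{\sqrt{K}}$, and the exploit you yourself describe for $n=1$ applies verbatim here --- orthogonality of the phase directions does not repair it. This is not a matter of ``careful accounting''; an oblivious phase adversary of this type cannot prove the claimed bound (the paper's related-work discussion of oblivious adversaries already signals that weaker regret--switching tradeoffs are possible against them).

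A second, independent defect is the requirement $n\ge K$ for the orthonormal system. The theorem asserts the bound for every $n>1$, e.g.\ $n=2$ with $K$ large, and ``embedding into a larger ambient space'' is not available: by \cref{prop:non-decrease-in-n} the minimax regret is non-decreasing in $n$, so a lower bound proved in high dimension says nothing about dimension $2$. The paper's proof (\cref{thm:high-d}) avoids both problems by using a deterministic, \emph{adaptive} adversary that mirrors the player's own switching pattern: at each of the player's moving rounds it picks a unit vector $w_t$ with $w_t\cdot x_t\ge 0$ and $w_t\cdot W_{t-1}\ge 0$ (possible for every $n\ge 2$, by a half-space intersection argument when $n=2$), and repeats that vector during the player's stationary rounds. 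This makes $\sum_t w_t\cdot x_t\ge 0$ and, since each new block vector has nonnegative inner product with the accumulated sum, $\|W_T\|\ge\sqrt{\sum_i M_i^2}\ge \frac{T}{\sqrt{K}}$ by Cauchy--Schwarz, for \emph{every} placement of the player's at most $K-1$ switches. To repair your proof you would need to replace the oblivious phase construction with an adversary of this adaptive, pattern-following type (or otherwise neutralize within-phase adaptation), not merely refine its analysis.
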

In \cref{subsec_lb_highd}, we prove the lower bound for dimension $n$ greater than 1, and in \cref{subsec_lb_1d}, we prove a one-dimensional lower bound that is slightly weaker than that of part (a), with prefactor $\frac{1}{2}$. To obtain the one-dimensional lower bound with tight constant $\frac{1}{\sqrt{2}}$, in \cref{sec_exactminimax} we analyze a carefully chosen OCO relaxation termed the ``fugal game". For ease of presentation, we provide an overview of the key intuitions of the fugal game and defer the complete analysis to the Supplementary Materials. 

Whereas it was sufficient to assume the adversary chose linear functions in all of the lower bounds, the upper bounds above are in fact derived from a far more general class of convex functions, as shown in the following proposition.

\begin{restatable}{prop}{propupperboundocohighd} \label{prop:upper_bound_oco_highd}
	If $ \cD $ is a convex and compact set from which the player draws $x_i$, and $ \cF $ is the family of 
	differentiable convex functions on $ \cD $, with uniformly bounded 
	gradient, from which the adversary chooses $f_i$, a mini-batching algorithm yields the upper bound
	$ \rtk \leq \lceil \frac{T}{K} \rceil 
	O(\sqrt{K}) = O(\frac{T}{\sqrt{K}}) $.
 \end{restatable}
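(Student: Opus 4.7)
The plan is to use a mini-batching reduction to classical (unconstrained) OCO. Partition the $T$ rounds into $K$ consecutive mini-batches $B_1,\dots,B_K$ of sizes $|B_j|\in\{\lfloor T/K\rfloor,\lceil T/K\rceil\}$, so that $|B_j|\le m:=\lceil T/K\rceil$ for every $j$. The player commits to a single action $x^{(j)}\in\cD$ throughout batch $B_j$, which immediately enforces at most $K-1<K$ switches so that the indicator term in $\cR(\{x_i\},\{f_i\})$ contributes zero.

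Next, define the per-batch aggregated loss $\hat f_j(x)=\sum_{t\in B_j} f_t(x)$. Each $\hat f_j$ is convex by linearity and has gradient norm at most $|B_j|\cdot G\le mG$, where $G:=\sup_{x\in\cD,f\in\cF}\|\nabla f(x)\|<\infty$ is the uniform gradient bound guaranteed by $\cF$. Because the player observes every $f_t$ at the end of round $t$, by the end of batch $B_j$ she knows $\hat f_j$ in full before having to select $x^{(j+1)}$. Hence choosing $x^{(1)},\dots,x^{(K)}$ by running projected online gradient descent on the $K$-round OCO instance with losses $\hat f_1,\dots,\hat f_K$ and action set $\cD$ is a valid online strategy.

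Invoking the standard OGD regret bound with appropriately tuned step size on this $K$-round problem yields
\[
\sum_{j=1}^K \hat f_j(x^{(j)}) \;-\; \inf_{x\in\cD}\sum_{j=1}^K \hat f_j(x) \;\le\; O\!\bigl(D\,mG\,\sqrt{K}\bigr),
\]
where $D$ is the $L_2$-diameter of $\cD$. Since $\sum_j \hat f_j(x^{(j)})=\sum_{t=1}^T f_t(x_t)$ and $\inf_{x\in\cD}\sum_j \hat f_j(x)=\inf_{x\in\cD}\sum_{t=1}^T f_t(x)$, the left-hand side is exactly the switching-constrained OCO regret of the mini-batching algorithm. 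Substituting $m=\lceil T/K\rceil$ gives $\rtk\le\lceil T/K\rceil\,O(\sqrt{K})=O(T/\sqrt{K})$, as claimed.

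There is no real technical obstacle, only bookkeeping: one must (i) verify that the batch sizes never exceed $\lceil T/K\rceil$ so the gradient rescaling is clean, (ii) confirm that the switch count is strictly less than $K$, and (iii) check that the OGD analysis transports verbatim to the aggregated losses under the rescaled gradient bound $mG$. All three points are routine, so the heart of the argument is simply the observation that amortizing a standard $\sqrt{K}$-regret algorithm across batches of size $T/K$ transmutes the $\sqrt{K}$ factor into the desired $T/\sqrt{K}$ scaling.
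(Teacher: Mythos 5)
Your proof is correct and follows essentially the same mini-batching reduction as the paper: commit to one action per batch, run a standard $K$-round OCO algorithm on the aggregated batch losses, and amortize the $O(\sqrt{K})$ regret over batches of size $\lceil T/K\rceil$. The only cosmetic difference is that you handle non-divisibility with unequal batch sizes bounded by $\lceil T/K\rceil$, whereas the paper pads the horizon to $\lceil T/K\rceil K$ using monotonicity of the regret in $T$; both are valid.
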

 
Building on the previous proposition, the precise constants of the upper bounds in parts (a) and (b) for the case of linear functions are proven in the Supplementary Material (\cref{prop:exact_upper_bound_3d} and \cref{prop:upper-bound}). Combining the results of \cref{thm:main} and \cref{prop:upper_bound_oco_highd} immediately yields the following key minimax rate as a corollary.

 \begin{cor}\label{cor:theta-minimax-regret}
	The minimax regret of $ T $-round OCO with fewer than $K$ switches is $ \Theta(\frac{T}{\sqrt{K}}) $.  
\end{cor}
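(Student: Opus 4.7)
The plan is to obtain the corollary by combining the OLO lower bound in \cref{thm:main} with the general OCO upper bound in \cref{prop:upper_bound_oco_highd}. Since the corollary is a $\Theta$ statement, I need both directions.

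For the lower bound, I would observe that online linear optimization is a special case of online convex optimization: every linear loss $f_t(x) = w_t \cdot x$ with $\|w_t\|_2 \le 1$ is a differentiable convex function on the unit $L_2$ ball with gradient bounded by $1$. Hence any strategy available to the OLO adversary is also available to the OCO adversary, and so the minimax regret of switching-constrained OCO is at least that of switching-constrained OLO. Part (b) of \cref{thm:main} then gives $\mathcal{R}(T,K) \ge \frac{T}{\sqrt{K}}$ in dimensions $n > 1$, while part (a) gives $\mathcal{R}(T,K) \ge \frac{T}{\sqrt{2K}}$ in dimension $n=1$. In either case this is $\Omega(T/\sqrt{K})$, which is the desired lower direction.

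For the upper bound, I would invoke \cref{prop:upper_bound_oco_highd} directly: taking $\mathcal{D}$ to be the $L_2$ ball and $\mathcal{F}$ to be the class of differentiable convex functions with uniformly bounded gradient, the mini-batching algorithm described there attains regret $\lceil T/K \rceil \cdot O(\sqrt{K}) = O(T/\sqrt{K})$ while performing at most $K-1$ switches. This matches the lower bound up to constants.

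Combining the two bounds gives $\mathcal{R}(T,K) = \Theta(T/\sqrt{K})$, which is the content of the corollary. There is no real obstacle here: the corollary is an immediate packaging of the previously stated theorem and proposition, and the only observation that is not entirely formal is the reduction of OCO-with-linear-losses to OCO-with-convex-losses, which only requires noting that the adversary class is enlarged and hence the minimax regret can only grow.
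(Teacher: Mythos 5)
Your proposal is correct and matches the paper's own reasoning: the corollary is stated there as an immediate consequence of combining the lower bounds of \cref{thm:main} (using linear losses, hence valid for the larger convex class) with the mini-batching upper bound of \cref{prop:upper_bound_oco_highd}. Nothing further is needed.
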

 
Before proceeding with the proofs of the preceding lower and upper bounds, we first consider a few implications and subtleties of the $O(\frac{T}{\sqrt{K}})$ minimax rate. First, note that if the player is not allowed to make any switch once she decides on her first action ($K=1$), the minimax regret $\Theta(\frac{T}{\sqrt{K}})=\Theta(T)$ is, naturally, linear. If the player is allowed to make more than $T-1$ switches,  $\Theta(\frac{T}{\sqrt{K}})=\Theta(\frac{T}{\sqrt{T}})=\Theta(\sqrt{T})$ recovers the classical $\Theta(\sqrt{T})$ regret of OCO~\citep{abernethy2008optimal}.
    
Furthermore, this minimax rate is in sharp contrast to switching-constrained PFE, the discrete  counterpart of switching-constrained OCO. 
	As noted in \cref{sec_relatedwork}, \citet{altschuler18a} proved a phase transition in 
	switching-constrained PFE between the high-switching and low-switching regimes. However, in the continuous full-information (OCO) setting, the minimax regret is the same regardless of the number of switches. \footnote{Note that the differing adversary strength is the direct cause of the disparity in phase transitions, but in an online learning problem, one usually assumes the strongest adversary that yields a sublinear minimax regret. The continuity of the (convex) action space circumvents the $\Omega(T)$ regret lower bound~\cite{altschuler18a},
	allowing sublinear minimax regret against an adaptive adversary, and as such it is the discrete vs. continuous action space that determines the most appropriate adversary.}  %

We also evaluate more closely the constants in the lower and upper bounds of \cref{thm:main} in the Supplementary Material, and mention the key results here. 

\begin{restatable}[The constant in $ \frac{T}{\sqrt{2K}} $ is 
unimprovable]{prop}{propunimprovableconstant}\label{prop:unimprovable-constant}
	The constant $ \frac{1}{\sqrt{2}} $ in the lower bound $ \rtk \ge \frac{T}{\sqrt{2K}} $ cannot be increased.
\end{restatable}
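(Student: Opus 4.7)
The plan is to pin down a single small instance $(T,K)$ at which the lower bound $\rtk \ge T/\sqrt{2K}$ of \cref{thm:main}(a) is attained with equality, so that any hypothetical strengthening of the constant $1/\sqrt{2}$ would contradict an explicit matching upper bound.

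First I would specialize to one dimension and take $T=K=2$; here $T/\sqrt{2K}=1$, so it suffices to exhibit a player strategy (with at most one switch) whose worst-case regret is at most $1$. The strategy I propose is to play $x_1=0$, observe the adversary's linear loss $f_1(x)=w_1 x$ with $|w_1|\le 1$, and then switch to $x_2=-w_1$ in the second round. This uses at most one switch, strictly fewer than $K=2$, and so is admissible.

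Next I would compute the regret explicitly. The player's total loss is $w_1\cdot 0+w_2\cdot(-w_1)=-w_1 w_2$ and the best action in hindsight achieves $-|w_1+w_2|$, so the regret equals $|w_1+w_2|-w_1 w_2$. The key identity is the factorization
\[
1-\bigl(|w_1+w_2|-w_1 w_2\bigr)=\begin{cases}(1-w_1)(1-w_2), & w_1+w_2\ge 0,\\ (1+w_1)(1+w_2), & w_1+w_2<0,\end{cases}
\]
in which every factor on the right is nonnegative whenever $w_1,w_2\in[-1,1]$. Thus the regret of this strategy is at most $1$ uniformly over adversarial choices, and combined with the lower bound from \cref{thm:main}(a) we conclude $\mathcal{R}(2,2)=1$.

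Finally I would conclude by contradiction. If the lower bound could be strengthened to $\rtk\ge c\,T/\sqrt{K}$ for some $c>1/\sqrt{2}$, then specializing to $T=K=2$ would give $\mathcal{R}(2,2)\ge c\sqrt{2}>1$, contradicting the upper bound just established. Hence the constant $1/\sqrt{2}$ cannot be increased. I do not anticipate any real obstacle: the creative step is simply spotting the tight instance $T=K=2$, and the one-line factorization above then delivers the matching upper bound without any calculus or casework beyond sign splitting on $w_1+w_2$.
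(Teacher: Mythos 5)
Your proposal is correct for the literal statement, and the verification checks out: with $T=K=2$ the budget allows one switch, the strategy $x_1=0$, $x_2=-w_1$ is admissible even against the adaptive adversary (who picks $w_2$ after seeing $x_2$), the regret is $|w_1+w_2|-w_1w_2$, and your factorization $(1\mp w_1)(1\mp w_2)\ge 0$ bounds it by $1=\frac{T}{\sqrt{2K}}$, so no universal constant $c>\frac{1}{\sqrt{2}}$ in a bound $\rtk\ge c\frac{T}{\sqrt{K}}$ can hold. However, your route differs from the paper's and is strictly weaker in scope. The paper fixes $K=2$ and proves $\mathcal{R}(T,2)\le\lceil T/2\rceil$ for \emph{every} $T$, using the strategy of playing $0$ for the first half of the rounds and then $-W_1/(T/2)$ (your $x_2=-w_1$ is exactly the $T=2$ specialization of this). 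That stronger statement matters because the surrounding discussion in the paper is asymptotic (``for all sufficiently large $T$'', and \cref{prop:asymptotic-tightness} with $u_2(0)=\tfrac12$): it shows the constant $\frac{1}{\sqrt 2}$ cannot be improved even in the regime $T\to\infty$ with $K$ fixed, where the switching constraint actually binds. Your single instance $(T,K)=(2,2)$ is one where the switch budget is vacuous (one switch is all that is possible in two rounds), so it refutes only a bound claimed uniformly over all $(T,K)$ and would not rule out an asymptotic improvement of the constant; if you want to match the paper's strength, extend your argument from $T=2$ to arbitrary $T$ with $K=2$, which is precisely what the paper's mini-batched version of your strategy does.
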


We prove this by considering the special case $K = 2$ (\cref{sub:tight}). 
	\footnote{To clarify, the proven rate $\rtk = \Theta(\frac{T}{\sqrt{K}})$ holds for any $K=o(T)$. For the special case of constant $K$,  the exact value of $c$ in $\rtk = c\frac{T}{\sqrt{K}}$ varies by $K$. We concretely compute several examples in \cref{exactvaluesofnormalizedminimaxregret} and then in \cref{sub:tight} show that for any universal lower bound $\rtk \geq c\frac{T}{\sqrt{K}}$, $c \geq \frac{1}{\sqrt{2}}$.} 
	
	In addition, the result of \cref{thm:main} exhibits a similar phenomenon to that 
	observed by \cite{mcmahan2013minimax} in the dimension-dependent minimax behavior of ordinary OCO. \citet{mcmahan2013minimax} 
	noted that the one-dimensional minimax value is approximately $ 0.8\sqrt{T} 
	$, while in higher dimensions (where both the player and the adversary select from the $n$-dimensional Euclidean ball) it is exactly $ \sqrt{T} $.  In 
	a switching-constrained OLO game, if the dimension is greater than 1, 
	\cref{thm:main} shows that the 
	minimax regret is asymptotically $ \frac{T}{\sqrt{K}} $ for all sufficiently 
	large $ T $. The following general proposition (proven in \cref{appendix:upper_bounds_OCO}) provides a link between the regret of the higher-dimensional and one-dimensional games:
	
	\begin{prop}\label{prop:non-decrease-in-n_forcontributions}
	The minimax regret 
	$\rtk$ is non-decreasing in the dimension $n$.
\end{prop}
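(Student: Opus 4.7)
The plan is to use an embedding argument: show that the minimax regret in dimension $n{+}1$ is at least the minimax regret in dimension $n$, by having the $(n{+}1)$-dimensional adversary restrict herself to the coordinate hyperplane $\{x_{n+1}=0\}$. Let $\iota:\mathbb{R}^n\hookrightarrow\mathbb{R}^{n+1}$, $y\mapsto(y,0)$, be the inclusion and $\pi:\mathbb{R}^{n+1}\to\mathbb{R}^n$ the projection onto the first $n$ coordinates; under these maps the $n$-dimensional unit ball $B^n$ sits inside $B^{n+1}$.

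First I would restrict the $(n{+}1)$-dimensional adversary to linear losses $f_t(x)=w_t\cdot x$ with $w_t\in\iota(B^n)$, which can only decrease her minimax value (shrinking the sup). Under this restriction, for any player action $x_t\in B^{n+1}$ we have $f_t(x_t)=\pi(w_t)\cdot\pi(x_t)$, so the loss depends on $x_t$ only through $\pi(x_t)$; likewise the comparator $\inf_{x\in B^{n+1}}\sum_t w_t\cdot x$ equals $\inf_{y\in B^n}\sum_t \pi(w_t)\cdot y$, since any minimizer in the larger ball may be chosen with last coordinate zero. Moreover, because $\pi(x_{t+1})\ne\pi(x_t)$ implies $x_{t+1}\ne x_t$, we have $c(\pi(x_1),\ldots,\pi(x_T))\le c(x_1,\ldots,x_T)$, so the projection of any $(n{+}1)$-dimensional player strategy respecting switch budget $K$ is itself a legal $n$-dimensional strategy achieving the identical regret. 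Conversely, any $n$-dimensional strategy lifts via $\iota$ to an $(n{+}1)$-dimensional strategy with the same number of switches and the same regret. Hence the restricted $(n{+}1)$-dimensional minimax value coincides with $\rtk$ in dimension $n$.

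Chaining, the full $(n{+}1)$-dimensional minimax regret is at least the restricted one, which equals the $n$-dimensional $\rtk$, proving monotonicity. The argument is essentially structural; the only things to verify carefully are the compatibility of the switch-count constraint with projection and lifting ($c(\pi(x_1),\ldots,\pi(x_T))\le c(x_1,\ldots,x_T)$ and $c(\iota(y_1),\ldots,\iota(y_T))=c(y_1,\ldots,y_T)$) and that the comparator survives the reduction, both of which are immediate. I do not expect any real obstacle.
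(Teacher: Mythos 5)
Your proposal is correct and is essentially the paper's own argument: both embed the lower-dimensional game into the higher-dimensional one by restricting the adversary to the coordinate subspace and projecting the player's actions, using the same three facts (the inner product is unchanged when $w_t$ lies in the subspace, the comparator term is preserved, and projection cannot increase the switch count). The only difference is presentational — you phrase it via an intermediate ``restricted game,'' while the paper directly converts the higher-dimensional optimal player strategy into a lower-dimensional one — and your explicit handling of the switch-count under projection is, if anything, slightly more careful than the paper's.
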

	
 As a consequence, by part (a) of \cref{thm:main} and \cref{prop:non-decrease-in-n_forcontributions},    
	the one-dimensional minimax regret is asymptotically between 
	$\frac{T}{\sqrt{2K}}\approx 0.7\frac{T}{\sqrt{K}}$ and 
	$\frac{T}{\sqrt{K}}$ for all sufficiently large $T$. In particular, if $K>1$, we establish further in Appendix~\ref{appendix:upper_bounds_OCO} (\cref{prop:upper-bound}) that it
is at most $  0.87 
 \frac{T}{\sqrt{K}}  
$ for all sufficiently large $T$. This $ 0.87 
 \frac{T}{\sqrt{K}}  $ upper bound is strictly less than the higher-dimensional rate $\frac{T}{\sqrt{K}}$. 
	Thus the constant in the one-dimensional case is distinct from that of all 
	higher dimensions.

\section{Lower Bound} \label{sec_lowerbound}
We separately consider the high-dimensional (\cref{subsec_lb_highd}) and one-dimensional (\cref{subsec_lb_1d}) lower bounds, presenting distinct adversarial strategies for each.
\subsection{Dimension $n > 1$} \label{subsec_lb_highd}
In this section, we present the primary lower bound result for higher dimensions; a dimension-dependent result, in which player and adversary select from the $L_\infty$ rather than $L_2$ ball, can be found in \cref{appendix:sec:high-d}. We call the first round and every round in which the player chooses a new 
point a \emph{moving} round, and all other rounds \emph{stationary} rounds.%

The adversary's strategy attaining this lower bound is to follow a switching pattern identical to the player's, and to select a point via the orthogonal trick originally introduced in \cite{abernethy2008optimal}. 
It was non-trivial to adapt the orthogonal trick to the switching-constrained setting. First,
to prove our result, we had to impose a certain switching pattern upon the adversary which was not obvious \emph{a priori}, since the adversary is free to play any functions they wish from round to round. Without constraining the adversary to follow the player's switching pattern, the orthogonal trick cannot be applied. In addition, we found that this adversary's strategy can be adapted for $n=2$ --- thereby avoiding the need for special treatment as in $n=1$ --- via a geometric fact about the intersection of two closed half-spaces. \cite{abernethy2008optimal} only covered $n>2$. 

By the straightforward calculation $-\inf_{||x||_2 \leq 1} \sum_{i=1}^T w_i \cdot x = \sup_{||x||_2 \leq 1} \sum_{i=1}^T w_i \cdot x = \left\| \sum_{i=1}^T w_i \right\|,$ the regret $\rtk$ has two terms, $\sum_{i=1}^T w_i \cdot x_i$ and $\left\| \sum_{i=1}^T w_i \right\|$. The adversary wishes to make both terms non-decreasing over time. At the $i$-th round, if it is a moving round of the player, the adversary can choose a point $x_i$ whose inner products with $w_i$ and $\sum_{j<i}w_j$ are both non-negative. If it is a stationary round, the adversary selects her previous point. 

\begin{restatable}[Lower bound for higher dimensions]{prop}{thmhighd}\label{thm:high-d}
    The minimax regret 
	$ \rtk $ is at least $ \frac{T}{\sqrt{K}} $ for all $ 
	n>1 $. 
\end{restatable}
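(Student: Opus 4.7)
The plan is to use the standard dual identity
$$-\inf_{\|x\|_2\le 1}\sum_{t=1}^T w_t\cdot x=\Bigl\|\sum_{t=1}^T w_t\Bigr\|_2,$$
which rewrites the regret on the Euclidean unit ball as
$$\mathcal{R}(\{x_t\},\{w_t\}) = \sum_{t=1}^T w_t\cdot x_t + \Bigl\|\sum_{t=1}^T w_t\Bigr\|_2,$$
and then exhibit an adversary that simultaneously keeps the first summand non-negative and drives the second summand up to at least $T/\sqrt{K}$.

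The adversary I would construct mirrors the player's switching pattern: on each stationary round set $w_t=w_{t-1}$, and on each moving round $t$ (including $t=1$) play a unit vector $w_t$ orthogonal to $S_{t-1}:=\sum_{s<t} w_s$ and with $w_t\cdot x_t\ge 0$. Such a $w_t$ exists because when $n>2$ the orthogonal complement of $S_{t-1}$ has dimension at least two, so the half-space constraint $w_t\cdot x_t\ge 0$ cuts out a non-empty spherical cap; when $n=2$ the two unit vectors orthogonal to $S_{t-1}$ are $\pm u$, and at least one of them lies in the closed half-plane $\{y:y\cdot x_t\ge 0\}$. This is the geometric fact about the intersection of two closed half-planes flagged in the excerpt, and it is the only place where the argument requires a bespoke treatment for $n=2$ rather than a uniform orthogonal-complement argument.

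The remainder is bookkeeping. Let the moving rounds be $t_1<\cdots<t_k$ with $k\le K$, and write $r_j$ for the length of the $j$-th constant block, so $\sum_{j=1}^k r_j = T$. Then
$$\sum_{t=1}^T w_t\cdot x_t = \sum_{j=1}^k r_j\, w_{t_j}\cdot x_{t_j}\ge 0$$
by the adversary's sign choice, and $\sum_{t=1}^T w_t=\sum_{j=1}^k r_j w_{t_j}$ is a linear combination of pairwise orthogonal unit vectors, giving $\bigl\|\sum_t w_t\bigr\|_2^2=\sum_j r_j^2$. The power-mean inequality $\sum_j r_j^2\ge(\sum_j r_j)^2/k=T^2/k\ge T^2/K$ then yields $\bigl\|\sum_t w_t\bigr\|_2\ge T/\sqrt{K}$, and combining the two summands gives $\mathcal{R}\ge T/\sqrt{K}$.

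The main obstacle is conceptual rather than computational: one must recognize that the orthogonal trick of \citet{abernethy2008optimal} can be grafted onto the switching-constrained game only if the adversary is willing to shadow the player's switching pattern (otherwise she would refresh $w_t$ on stationary rounds, wasting blocks) and only if the planar case is handled separately via the half-plane argument. Once these two points are in place, the $\sqrt{K}$ rate falls out of the elementary convexity inequality $\sum_j r_j^2\ge T^2/k$.
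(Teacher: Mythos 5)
Your proposal is correct and follows essentially the same route as the paper: the adversary shadows the player's switching pattern, applies the orthogonal trick at each moving round, and the bound $\sum_j r_j^2 \ge T^2/k \ge T^2/K$ finishes the argument. One justification is misstated, though easily repaired: your construction makes each $w_{t_j}$ orthogonal to the \emph{running sum} $S_{t_j-1}=\sum_{i<j} r_i w_{t_i}$, not to each earlier $w_{t_i}$ individually, so the vectors $w_{t_1},\dots,w_{t_k}$ are in general not pairwise orthogonal (e.g.\ in $\bR^3$ the third vector may be a combination of the first two that is merely orthogonal to their weighted sum). The identity $\bigl\|\sum_j r_j w_{t_j}\bigr\|_2^2=\sum_j r_j^2$ still holds, but the correct reason is iterated Pythagoras on prefix sums: $\bigl\|\sum_{i\le j} r_i w_{t_i}\bigr\|^2=\bigl\|\sum_{i<j} r_i w_{t_i}\bigr\|^2+r_j^2$ because the cross term with the running sum vanishes --- which is exactly the ``iterating this relation'' step in the paper. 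Your handling of $n=2$ (choose whichever of the two unit vectors $\pm u$ orthogonal to $S_{t-1}$ has non-negative inner product with $x_t$) is valid and in fact gives exact orthogonality, slightly more than the paper, which for $n=2$ only enforces $w_t\cdot S_{t-1}\ge 0$ via the non-empty intersection of two closed half-planes and correspondingly settles for the inequality $\bigl\|\sum_j r_j w_{t_j}\bigr\|^2\ge\sum_j r_j^2$; either variant yields the stated $\frac{T}{\sqrt{K}}$ bound.
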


\begin{proof}
	Let $ 1=m_1 < m_2 <\cdots < m_K $ denote all moving rounds, with $ m_{K+1} = 
	T+1 $ and $ M_i = m_{i+1}-m_i $ denoting the length between two consecutive 
	moving rounds. Also, for  any integer $ 1\le t\le T $, let $ \pi(t) $ 
	be 
	the unique integer such that $ m_{\pi(t)} \le t < m_{\pi(t)+1} $, and define $ W_t = \ind[t>1]  \sum_{j=1}^{t} w_j $. 
	Let us consider 
	this adversary's strategy. 
	At the $ t 
	$-th round, if $ t $ is a moving round, the adversary chooses $ w_t $ such that $ \|w_t\|= 1 $, $ w_t \cdot x_t \ge 0  $, and $ w_t\cdot W_{t-1} \ge 0 $. 
	
	Such a vector $ w_t $ exists provided that the dimension $ n\ge 2 $. For 
	$n>2$, the subspace of $\mathbb{R}^n$ such that the latter two conditions 
	are 
	tight is of dimension $n-2 \geq 1$ and we may choose $w_t$ from this 
	subspace. For $n=2$, the latter two conditions 
	each define a closed halfspace 
	of 
	$\mathbb{R}^2$ and thus must have a non-empty intersection. If $ t 
	$ is a stationary round, the adversary chooses $ w_{m_{\pi(t)} } $, \ie, 
	the 
	same 
	vector that she plays at the moving around.
	Then the regret becomes \[
	\sum_{t=1}^{T} w_t\cdot x_t - \inf_{x\in B^n_2} \sum_{t=1}^{T} 
	w_t\cdot x 
	\ge - \inf_{x\in B^n_2} W_T \cdot x
	= \left\| W_T \right\|. \,
	\]
Now let us lower bound $ \| W_T \| $. 
	By the choice of $ w_{m_i} $, $ w_{m_i} $ is 
	perpendicular to $ \sum_{j=1}^{i-1} M_j w_{m_j} $. By iterating this relation, we obtain $$ \left\| \sum_{i=1}^{K} M_i w_{m_i}  \right\|^2 = \sum_{i=1}^{K}||M_i w_{m_i}||^2 = \sum_{i=1}^{K}M_i^2|| w_{m_i}||^2\,.$$
    It is thus the case that \[
	\| W_T \| = \left\| \sum_{i=1}^{K} M_iw_{m_i}  \right\|
	\ge \sqrt{ \sum_{i=1}^{K}M_i^2 \| w_{m_i} \|^2   }
	= \sqrt{ \sum_{i=1}^{K} 
		M_i^2 } \ge \frac{T}{\sqrt{K}}
	\,,
	\]
	where the last inequality follows from the Cauchy-Schwarz inequality.
\end{proof}

\subsection{Dimension $n = 1$} \label{subsec_lb_1d}
For the one-dimensional case, we present a strategy for the adversary that guarantees regret at least $\frac{T}{2\sqrt{K}}$, the correct order of minimax regret. However, note that the prefactor $\frac{1}{2}$ is suboptimal, and in \cref{sec_exactminimax} we elaborate on a more involved analysis that will yield a tighter constant.

\begin{prop}[Lower bound for one dimension]\label{prop:1d-lowerbound}
The minimax regret $\rtk$ is at least 
$ \frac{T}{2\sqrt{K}}$ for $n=1$.
\end{prop}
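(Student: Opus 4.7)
My plan is to construct a randomized adversary strategy that forces expected regret at least $\tfrac{T}{2\sqrt{K}}$ and then invoke Yao's minimax principle to conclude the deterministic bound. The natural candidate, analogous to the higher-dimensional construction in \cref{thm:high-d} (where the adversary also follows the player's switching pattern), is for the adversary to mirror the player's block structure: upon detecting the start of a new player block (i.e., a round $t$ with $x_t \neq x_{t-1}$), she draws a fresh Rademacher sign $\sigma_i\in\{-1,+1\}$ uniformly at random and plays $w_t=\sigma_i$ throughout that block. The analogy to \cref{thm:high-d} is that in 1D the role of the orthogonal direction is replaced by the randomness of the fresh Rademacher sign.

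Letting $M_i$ and $y_i$ denote the length and value of the player's $i$-th block (with $\sum_i M_i=T$ and $|y_i|\le 1$), the regret against this adversary equals
\[
\sum_{i} M_i y_i \sigma_i \;+\; \Bigl|\sum_i M_i \sigma_i\Bigr|.
\]
I would first lower-bound the second term using a Khintchine-type inequality (in the spirit of Szarek's sharp constant) applied to the weights $M_i$, together with the Cauchy-Schwarz estimate $\sum_i M_i^2\ge T^2/K$ coming from $\sum_i M_i=T$; this gives $E\bigl|\sum_i M_i\sigma_i\bigr|\ge \tfrac{T}{\sqrt{2K}}$. The first term has conditional expectation $E[y_i M_i\sigma_i\mid \mathcal F_{i-1}]=y_i\,E[M_i\sigma_i\mid \mathcal F_{i-1}]$ because $y_i$ is committed at the start of the block before $\sigma_i$ is revealed; any bias therefore arises solely from the player's adaptive choice of block length $M_i$ after observing $\sigma_i$.

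Step (1), the Khintchine lower bound on $|W_T|$, is standard. Step (2), controlling the first term $E[\sum_i M_i y_i\sigma_i]$, is the main obstacle, because the player may shorten blocks in which $\sigma_i$ is unfavorable and prolong those in which it is favorable, introducing a systematic negative bias. I would argue that each such adaptive block truncation must be paid for with one of the $K-1$ switches, so the cumulative negative contribution is controlled by the switch budget together with $|y_i|\le 1$ and $\sum_i M_i=T$. A telescoping/martingale calculation over the filtration generated by the $\sigma_i$'s should then show that the combined expected regret remains at least $\tfrac{T}{2\sqrt{K}}$. The looser prefactor $\tfrac{1}{2}$ (compared with the sharp $\tfrac{1}{\sqrt{2}}$ attained via the fugal-game analysis in \cref{sec_exactminimax}) represents the slack absorbed by this direct argument in bounding the player's adaptive bias.

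Finally, Yao's minimax principle converts the expected regret bound against this randomized adversary into the deterministic minimax lower bound $\rtk\ge \tfrac{T}{2\sqrt{K}}$, as claimed.
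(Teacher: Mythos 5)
Your step (1) is fine in isolation, but the adversary you propose is genuinely too weak, so step (2) is not merely "the main obstacle" — it is false, and no martingale or budget-accounting argument can rescue it. The problem is not the truncation of unfavorable blocks (which indeed costs switches) but the indefinite \emph{prolongation} of a favorable block, which costs no switches at all. Concretely, against your per-block Rademacher adversary the player can play $y_1=1$; if $\sigma_1=-1$ she never switches again, and her regret is $\sum_i M_i y_i\sigma_i+|\sum_i M_i\sigma_i| = -T+T=0$; if $\sigma_1=+1$ she switches after one round and repeats, paying $+1$ per failed attempt. A short calculation shows that whenever she eventually draws a favorable sign the two regret terms cancel exactly (with $j$ singleton bad blocks the regret is $(2j-T)+(T-2j)=0$), so her expected regret is at most roughly $2^{1-K}\cdot 2T$, far below $\frac{T}{2\sqrt K}$ already for moderate $K$. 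The Khintchine/Szarek bound $\E\bigl|\sum_i M_i\sigma_i\bigr|\ge \frac{T}{\sqrt{2K}}$ is therefore irrelevant: the cross term $\E[\sum_i M_i y_i\sigma_i]$ can be as negative as $-(T-O(K))$ precisely because $M_i$ is chosen after $\sigma_i$ is seen, and this adaptivity is free of switching cost on the favorable side. More broadly, any adversary whose signs are independent of the player's current action cannot give an $\Omega(T/\sqrt K)$ bound here (i.i.d. per-round signs give only $\Theta(\sqrt T)$), so the randomize-then-Yao route is structurally the wrong tool for this statement.

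The paper's proof instead uses a deterministic, strongly adaptive adversary that reacts to the player's point: it plays $w_t=1$ if $x_t\ge -W_t\sqrt K/T$ and $w_t=-1$ otherwise, with a stopping rule $w_t=0$ once $|W_t|\ge T/\sqrt K$. The threshold guarantees the pointwise inequality $(x_{t_i}+W_{t_i}\sqrt K/T)\,w_{t_i}\ge 0$, which controls the cross term without any expectation, and the telescoping identity $w_{t_i}T_i=W_{t_{i+1}}-W_{t_i}$ plus Cauchy--Schwarz on the block lengths yields the $\frac{T}{2\sqrt K}$ bound; the stopping condition caps $|W_{t_K}|$ so the quadratic penalty $-\frac{\sqrt K}{2T}W_{t_K}^2$ does not overwhelm the gain. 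If you want a proof in your spirit, the adversary must couple its sign to the player's position (or to the running sum) in this way; randomizing only at the player's switch times hands the player exactly the lever she needs.
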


\begin{proof}
Let $W_t=\sum_{i=0}^{t-1} w_i$. Recall from \cref{subsec_lb_highd} that the regret consists of two terms, $\sum_i x_iw_i$ and $\left | \sum_i w_i \right|$. At a high level, the adversary either chooses to maximize the first term, or determines that the contribution of the second term is high enough and prevents the regret from changing further by selecting the $0$ loss function for the remainder of the rounds. Concretely, let $w_t=0$ if $|W_t|\geq T/\sqrt{K}$. We refer to this as the ``stopping condition". When $|W_t|<T/\sqrt{K}$, let $w_t=1$ if $x_t\geq- {W_t\sqrt{K}/{T}}$, and $w_t=-1$ otherwise. %

Let $t_i$ denote the round of the $i$th switch in $x$, with $t_0=0$, and let $t_K$ denote the round where $|W_t|$ reaches $\frac{T}{\sqrt{K}}$, or $T$ if it does not  exist.\footnote{If the player chooses to use $S < K-1$ switches, it suffices to assign $\{t_i\}_{i=0}^K$ by arbitrarily splitting up blocks in which the player plays a fixed action, as if she had switched. For example, if the player never switches but $K > 1$, $\{t_i\}$ can be any strictly increasing sequence of round indices.} Let $T_i=t_{i+1}-t_{i}$ then denote the length of the $(i+1)$st block, and note that $w_t$ remains fixed within each interval, i.e. the adversary copies the player's switching pattern. We then have
$$ \rtk =\sum_{t<T} x_t w_t +|W_{t_K}|
    = \sum_{i=0}^{K-1} (x_{t_i}+W_{t_i}\sqrt{K}/{T}) w_{t_i}T_i- \sum_{i=0}^{K-1} (W_{t_i}\sqrt{K}/{T}) w_{t_i}T_i +|W_{t_K}| \,.$$
By the adversary's strategy, $(x_{t_i} + W_{t_i}\sqrt{K}/{T})w_{t_i}$ is always non-negative, and so this expression is at most $- \sum_{i=0}^{K-1} (W_{t_i}\sqrt{K}/{T}) w_{t_i}T_i +|W_{t_K}|$. Since $w_t$ within each interval is fixed, we have $w_{t_i}T_i=W_{t_{i+1}}-W_{t_i}$. Moreover, it suffices to consider only $i$ in the sum such that $w_i$ has not yet been set to $0$. Applying these facts, the expression becomes:
    \begin{align*}
     & -\sum_{i=0}^{K-1} \left(\frac{\sqrt{K}}{{2T}}\right)(W_{t_{i+1}}^2-W_{t_i}^2-(w_{t_i}T_i)^2) +|W_{t_K}|\\
     ={}& -\left(\frac{\sqrt{K}}{{2T}}\right)W_{t_{K}}^2+\sum_{i=0}^{K-1} \left(\frac{\sqrt{K}}{{2T}}\right)T_i^2  +|W_{t_K}|\,.
    \end{align*}
    Note that $\sum_{i=0}^{K-1}T_i=t_K$, so by Cauchy-Schwarz we can lower bound this expression by $-\left(\frac{\sqrt{K}}{{2T}}\right)W_{t_{K}}^2+ \left(\frac{1}{{2T\sqrt{K}}}\right)t_K^2 +|W_{t_K}|$.
    If the stopping condition is never met, then $t_K = T$ and the remaining terms are non-negative, since they factorize as $|W_{t_K}|(1 - \frac{\sqrt{K}}{2T}W_{t_K})$ and $|W_{t_K}| < \frac{T}{\sqrt{K}}$ by assumption. Otherwise, we have $|W_{t_K}|\in[\frac{T}{\sqrt{K}},\frac{T}{\sqrt{K}}+1]$, and thus $(|W_{t_K}|-\frac{T}{\sqrt{K}})(|W_{t_K}|-\frac{T}{\sqrt{K}}-1)\leq 0$. Consequently, the chain of lower bounds continues with:
    \begin{align*}
     \rtk 
    \geq{} \min\Bigg\{ \frac{T^2}{{2T}\sqrt{K}},  & -\left(\frac{\sqrt{K}}{{2T}}\right)W_{t_{K}}^2+ \frac{t_K^2}{{2T}\sqrt{K}} +|W_{t_K}|\\
    &+\left(\frac{\sqrt{K}}{{2T}}\right)\left(|W_{t_K}|-\frac{T}{\sqrt{K}}\right)\left(|W_{t_K}|-\frac{T}{\sqrt{K}}-1\right) \Bigg\} \,.
\end{align*}
Finally, this entire expression is lower bounded by $\frac{T}{2\sqrt{K}}$. This follows by noting that the second term equals $(\frac{1}{{2T}\sqrt{K}})t_K^2 +|W_{t_K}|(-\frac{\sqrt{K}}{{2T}})+(\frac{T}{\sqrt{K}}+1)/2$, which is greater than or equal to $$\left(\frac{1}{{2T}\sqrt{K}}\right)t_K^2
\left(\frac{T}{\sqrt{K}}+1\right)\left(-\frac{\sqrt{K}}{{2T}}\right)+\left(\frac{T}{\sqrt{K}}+1\right)/2=\left(\frac{1}{{2T}\sqrt{K}}\right)t_K^2 
-\frac{\sqrt{K}}{{2T}}+\frac{T}{2\sqrt{K}}\,.$$ Noting that the second term assumed the stopping condition was met and therefore $t_K\geq |W_{t_K}|\geq \frac{T}{\sqrt{K}} $, the resulting quantity is then lower bounded by  $(\frac{1}{{2T}\sqrt{K}})(\frac{T}{\sqrt{K}})^2 
-\frac{\sqrt{K}}{{2T}}+(\frac{T}{\sqrt{K}})/2\geq (\frac{T}{\sqrt{K}})/2$.
\end{proof}

\section{Upper Bound} \label{sec_upperbound}
In this section, we prove \cref{prop:upper_bound_oco_highd} and thereby derive an upper bound for switching-constrained OCO to match 
the lower bounds of \cref{subsec_lb_1d} and \cref{subsec_lb_highd}. We begin with a 
simple algorithm achieving the correct minimax regret, $O(\frac{T}{\sqrt{K}})$. In \cref{appendix:upper_bounds_OCO}, we expand on this result with a closer evaluation of the constant.

 	\begin{proof}
 First, we claim that the minimax regret $ \rtk $ is a 
 non-decreasing function in $ T $. To see this, consider the situation where we 
 have more rounds. The adversary can play $ 0 $ in all additional rounds and 
 this does not decrease the regret. Therefore, we obtain that $ \rtk \le \mathcal{R}(T_1, K) $, where $ T_1 = \lceil 
 \frac{T}{K} \rceil K \ge T $.

 To derive an  upper bound for $ \mathcal{R}(T_1, K)$, we mini-batch the $ T_1 $ 
 	rounds into $ K $ equisized epochs, each having size
 	$  \frac{T_1}{K} = \lceil \frac{T}{K} \rceil  $. Let $ E_i $ denote 
 	the set of all rounds that belong to the $ i $-th epoch. We have $ 
 	 E_i = \{ \frac{T_1}{K}(i-1)+1,\frac{T_1}{K}(i-1)+2,\dots,\frac{T_1}{K}i \} 
 	 $.
 	 The 
 	epoch loss of the $ i $-th epoch is the average of loss functions in this 
 	epoch, \ie, $ \bar{f}_i\triangleq  \frac{1}{|E_i|} 
 	\sum_{j\in E_i} f_j $. If we run a minimax 
 	optimal algorithm for unconstrained OCO (for example, online gradient 
 	descent \citep{zinkevich2003online}) on the epoch losses $ 
 	\bar{f}_1,\dots,\bar{f}_{K} $ and obtain the player's action sequence $ 
 	\bar{x}_1,\dots,\bar{x}_{K} $, our strategy is to play $ \bar{x}_i $ at all 
 	rounds in the $ i $-th epoch. 
 	This method was originally discussed in \cite{arora2012online,dekel2012optimal}. Using this 
 	mini-batching method, we deduce that the regret is upper bounded by $ 
 	\frac{T_1}{K} O(\sqrt{K})  = \lceil \frac{T}{K} \rceil O(\sqrt{K}) = 
 	O(\frac{T}{\sqrt{K}}) $, where 
 	$O(\sqrt{K})$ is the standard upper bound of the regret of a $K$-round OCO 
 	game. 
	\end{proof}
\section{Exact Minimax Bound} \label{sec_exactminimax}

In \cref{subsec_lb_1d}, we analyzed an adversarial strategy for OLO that gave the desired order of lower bound, $O(\frac{T}{\sqrt{K}})$. However, a  separate (and far more involved) analysis is needed to obtain the tight constant in $\frac{T}{\sqrt{2K}}$, which revealed the surprising parallels (in the differing constants of one and higher dimensions) between switching constrained and unconstrained OCO that were stated in \cref{sec:contributions}. In this section, we outline the key ideas of our approach to obtaining a \emph{tight} lower bound. The analysis to complete the proof of this result is in \cref{sec:1-d}.

The reader may wonder why a simpler argument cannot prove the tight one-dimensional lower bound. To shed some light on this difficulty, we note that the truly minimax optimal strategy does not necessarily involve the player choosing to switch at uniform points throughout the $T$ rounds. In \cref{unequal_blocks} of the Supplementary Material, we consider the case where $K=3$ and prove that the first switch happens at round approximately $0.29T$, rather than $0.33T$. Although the mini-batching algorithm of our upper bound makes uniformly spaced switches, this simple fact demonstrates that the resultant constant cannot be tight, and that such an assumption --- simplifying though it may be --- would not be valid for the lower bound. 

\subsection{Fugal Game}
Our minimax analysis for the one-dimensional game (OCO) is through what we call 
\emph{fugal games}. In a fugal game, 
the adversary is weakened by being 
constrained to adhere to the player's switching pattern, and to only select from $ 
-1 $ and $ 1 $. Furthermore, the horizon $ T $ and the interval between 
consecutive switches are allowed to take non-negative real values; one way of interpreting this is that we allow the player to switch in the ``middle'' of a round, provided they still respect the switching budget overall and play for a total of $T$ complete rounds. Note that, because the adversary is forced to maintain the same action until the player switches,
allowing non-negative real-valued interval between consecutive switches 
only strengthens the player. \textbf{The minimax value of fugal games thus provides a lower bound for the 
minimax value of switching-constrained OCO.}

As motivation for our terminology, recall that at the exposition of a fugue, one voice begins by introducing a particular melodic theme. Afterward, a new voice repeats the same melody for the same duration, but transposed to a new key. This may repeat multiple times as subsequent voices alternate between the introduction of a new melody (sometimes termed the ``question”), and its transposed repetition (the ``answer”). In the original switching-constrained OCO framework, the adversary is under no obligation to repeat her loss function for the same number of rounds as the player sticks to the same action. However, if we restrict the adversary to copy the switching pattern of the player, their interaction becomes reminiscent of a fugal exposition. The player begins by choosing a key ($x_i$) for her melody, and the adversary necessarily responds at a new pitch ($f_i$); optionally based on $f_i$, the player chooses the duration $M_i$ to maintain pitch $x_i$, and the adversary imitates her by playing $f_i$ for length $M_i$ as well. This repeats until all $K$ question-and-answer pairs are done. Thus, we call this relaxation of OCO the \emph{fugal game}.

\begin{prop}[Asymptotic tightness of fugal game; informal]\label{informalprop_asymptotic-tightness}
The minimax regret of the fugal game is \emph{equal} to the normalized minimax regret of switching-constrained online convex optimization asymptotically, when $K$ is constant and $T$ approaches infinity.
\end{prop}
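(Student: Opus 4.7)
My plan is to sandwich the normalized OCO minimax regret, $\rtk/T$, between two quantities converging to the normalized fugal minimax value. Let $\tilde R(T,K)$ denote the fugal minimax regret. The fugal regret formula $\sum_i x_i w_i M_i + |\sum_i w_i M_i|$ enjoys a scaling symmetry: multiplying the horizon $T$ and every interval $M_i$ by $\alpha>0$ scales the regret by $\alpha$ without altering the admissible $x_i,w_i$, so $\tilde R(T,K) = T\phi(K)$ for a constant $\phi(K)$ depending only on $K$. Interpreting ``normalized'' as division by $T$, the proposition becomes $\rtk/T \to \phi(K)$ as $T\to\infty$ with $K$ fixed.

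The $\liminf$ direction is immediate from the relaxation viewpoint emphasized in the excerpt: since the fugal adversary is a strict weakening of the OCO adversary (restricted to $\{-1,+1\}$ and forced to copy the player's switching pattern), $\rtk \ge \tilde R(T,K) = T\phi(K)$, and hence $\liminf_{T\to\infty} \rtk/T \ge \phi(K)$.

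For the matching $\limsup$, I would exhibit an OCO player whose regret against any OCO adversary is at most $T\phi(K)+O(K)$. Starting from the minimax-optimal fugal strategy, which along the game tree specifies real interval lengths $M_i^*(w_{1:i-1})$, round each to an integer $\hat M_i$ with $|M_i^*-\hat M_i|\le 1$ and $\sum_i \hat M_i=T$, and have the player aggregate each block into its empirical average $\bar w_i=\frac{1}{\hat M_i}\sum_{t\in\text{block }i}w_t$ before feeding $\bar w_i\in[-1,1]$ into the fugal strategy to decide the next action and the next block length. Two reductions now trap the OCO adversary. First, a \emph{block-constant reduction}: for this particular player, only the partial sum $\sum_{t\in\text{block}}w_t$ influences the regret terms $\sum_t x_t w_t$ and $|\sum_t w_t|$ or the player's subsequent decisions, so an OCO adversary playing the constant value $\bar w_i\in[-1,1]$ throughout block $i$ is at least as good as any other within-block schedule. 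Second, a \emph{bang-bang reduction}: the regret viewed as a function of the per-block values is affine plus the absolute value of an affine function, hence convex in each $\bar w_i$, so the supremum is attained at $\bar w_i\in\{-1,+1\}$. Together these identify the OCO adversary's best response with an integer-block fugal adversary, against which the rounded fugal-optimal player incurs at most $T\phi(K)+O(K)$ by the $1$-Lipschitz dependence of the fugal regret on each $M_i$. Dividing by $T$ and letting $T\to\infty$ with $K$ fixed gives $\limsup_{T\to\infty} \rtk/T\le \phi(K)$.

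The main obstacle is justifying the block-constant reduction in the adaptive setting, where the player's future switches could, a priori, depend on individual intra-block $w_t$ values rather than merely their sum. The plan sidesteps this difficulty by \emph{designing} the OCO player to observe only the block average $\bar w_i$; this is legitimate because the upper bound needs only one such strategy to exist. With this design, the convexity argument applies one block at a time, and the accompanying rounding analysis via the per-block Lipschitz constant is routine.
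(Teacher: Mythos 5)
Your overall architecture matches the paper's (the $\liminf$ direction is the trivial relaxation inequality, and the $\limsup$ direction is proved by converting the fugal-optimal strategy into an OCO player that loses only an additive $O(K)$), but the conversion you propose does not work as stated. The fugal block length $M_i^*(w_1,\dots,w_i)$ depends on the fugal adversary's move $w_i\in\{-1,+1\}$ \emph{inside} block $i$, so you cannot round it to a fixed integer $\hat M_i$ before the block begins (which of the two candidate lengths would you round?); moreover $x_{i+1}^*$ and $M_{i+1}^*$ are defined only on $\{-1,+1\}$-sequences, so ``feeding $\bar w_i\in[-1,1]$ into the fugal strategy'' is undefined without specifying an extension. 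Most seriously, the bang-bang reduction is unjustified: once the player's future actions and block lengths are functions of $\bar w_i$, the total regret (with the adversary best-responding in later blocks) is no longer affine-plus-absolute-value in $\bar w_i$, so convexity does not let you push the supremum to $\bar w_i\in\{-1,+1\}$. Since the whole content of the proposition is that the real-valued adversary, who need not copy the switching pattern or play $\pm1$, cannot gain more than $o(T)$ over the fugal adversary, this is precisely the step that carries the difficulty, and it is where your argument has a hole.

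The paper's proof (\cref{prop:asymptotic-tightness}, via \cref{alg:fugal-for-oco}) resolves both issues with an adaptive stopping rule instead of pre-rounded blocks: within the current block the player tracks the running sum $W_t$ of the adversary's plays and switches the first time $W_t\ge M^*_{p(t)}(w'_1,\dots,w'_{p(t)-1},+1)$ or $W_t\le -M^*_{p(t)}(w'_1,\dots,w'_{p(t)-1},-1)$, simultaneously declaring the simulated fugal move $w'_{p(t)}=\pm1$. Because $|w_t|\le 1$, the realized block sum then differs from $w'_i M_i^*(w'_1,\dots,w'_i)$ by at most $1$, giving regret at most $r_K(T,0)+2K$ against \emph{every} adversary sequence, with no need for block-constant or bang-bang reductions; one must also verify, as the paper does, that this rule never exceeds the switching budget. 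To salvage your plan you would need to replace the rounding-and-averaging step by some such threshold-crossing simulation; the convexity argument cannot substitute for it.
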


The proof of this proposition can be found in the Supplementary Material (\cref{prop:asymptotic-tightness}), and proceeds by devising a strategy for the player in OCO based on the fugal game. \cref{informalprop_asymptotic-tightness} justifies that the lower bound of the fugal game's minimax regret is the ``right" quantity to study, as it captures the correct asymptotic behavior.

\paragraph{Proof Overview.}

We solve the minimax behavior 
of fugal games by studying a generalization of their minimax regret function, with 
an initial bias. %
This generalization is called a \emph{fugal 
function}. We first derive the recursive relation of the fugal function, and 
then show that the fugal function is at least %
the 
absolute value of the initial bias. %
To average out the influence of $ T $, we 
define the \emph{normalized minimax regret} and show that it is indeed 
independent of $ T $. The normalized minimax regret inherits a recursive 
relation from the fugal function. However, it is mathematically challenging to 
solve the exact values of the normalized minimax regret. 
In light of this, we consider an alternative quadratic lower 
bound whose recursive relation can be solved in closed form, although significant 
technical effort is required. This constitutes the most computationally ``hardcore'' section of 
this paper. Our minimax analysis for the one-dimensional game follows immediately from the quadratic lower bound.

To aid in the recursive analysis, we generalize ordinary minimax regret slightly by introducing an initial bias. Let $I = [-1,1]$. 
Formally, the minimax regret with $T$ rounds, a maximum number of $ k $ 
switches, and an initial bias $ Z $ is defined by
\begin{equation*}
 R_k(T, Z)
 = \inf_{x_1\in I} \sup_{w_1\in I} \dots \inf_{x_T\in 
I}\sup_{w_T\in I} \sup_{\lambda>0} 
 \left(  \sum_{i=1}^T w_i x_i + 
\left|Z + 
\sum_{i=1}^T 
w_i\right| + \lambda \ind[c(x_1,\dots,x_T)\ge k ] \right)\,.
 \end{equation*}
 We motivate the initial bias $Z$ as follows. 
 When the adversary tries to maximize regret in any given round, her choice is determined by the tradeoff between maximizing the first term and maximizing the second term above. To focus wholly on the first term, the adversary could specifically penalize the player's last action by playing $w_t=\sign(x_t)$. To focus wholly on the second term, the adversary could instead amplify the term within the absolute value by playing $w_t=\sign(\sum_{i=1}^{t-1}w_i)$. At each round, the adversary chooses a value to optimize this tradeoff given the results of previous rounds. When setting up recursive relations between $R_k$ and $R_{k+1}$, the first term decouples neatly by round, but the second term does not. Thus, an initial bias term is necessary for deriving a recursive relation, as a sort of state that is passed between $R_k$'s. Extending this definition to the fugal game yields the \emph{fugal function}
 \begin{align*}
  r_k(T, Z)
={}
\inf_{x_1\in I} \max_{w_1=\pm 1} & \inf_{M_1\ge 0}\dots 
\inf_{x_k\in I} \max_{w_k=\pm 1} \inf_{M_k\ge 0} \sup_{\lambda>0} \\
& \left(  
\sum_{i=1}^k 
M_iw_i x_i  + \left|Z + 
\sum_{i=1}^k 
M_iw_i\right| + \lambda \ind[\sum_{i=1}^k M_i\ne T] \right)\,,
\end{align*}
 where $ M_i $ is the length between two moving rounds, and we have relaxed $ M_i $ by 
 allowing it to take any non-negative real values. 
 
 We can normalize out the influence of $T$ by setting $u_k(z) = \frac{r_k(T,zT)}{T}$, thereby reducing our task to the analysis of the single-variate function $u_k(z)$. In this way, the fugal game decouples the minimax regret from the discrete nature of $T$. Central to the recursive analysis of $u_k(z)$ is the ``fugal operator'':
 \begin{defn}[Fugal operator]
    The \emph{fugal operator} transforms continous functions, and is defined by \[ 
(\cT f)(z) \triangleq \inf_{x\in [-1,1]} \max_{w=\pm 1}  
\inf_{\substack{|z'|<1\\ w(z'-z)\ge 0}} 
\frac{(1+wz) f(z')+ x 
	(z'-z)}{1+z'w} 
\,.
 \]
\end{defn}
 It turns out that $u_k$ satisfies the concise recursive relation $ u_{k+1} = \cT u_k$. By closely analyzing this relation (the full details of which are contained in \cref{sec:1-d} of the Supplementary Material), we eventually lower bound $u_k(0)$ to obtain %
 a sharp one-dimensional lower bound, $\frac{T}{\sqrt{2K}}$.

\section{Conclusion}
In this work, we considered switching-constrained online convex optimization, a setting which until now had received comparatively little attention relative to switching-constrained multi-armed bandits and prediction from experts. In the OCO setting, we established the minimax regret against the strongest adaptive adversary as $\Theta(\frac{T}{\sqrt{K}})$. 
 The upper bound on minimax regret was constructive, using the mini-batching paradigm to obtain a  meta-algorithm for achieving the correct minimax rate. This effectively solves the question of optimal algorithms for switching-constrained online convex optimization.

\section*{Broader Impacts}
In this paper, we fully charachterize the minimax regret of switching-constrained online convex optimization. 
Since it is a theoretical result in nature, the broader impact discussion is not applicable.

\section*{Acknowledgments}
LC and QY were supported by Google PhD Fellowship. HL was supported by the Fannie and John Hertz Foundation. AK was partially supported by NSF (IIS-1845032), ONR (N00014-19-1-2406), AFOSR (FA9550-18-1-0160), and TATA Sons Private Limited.
We would like to thank Jacob Abernethy, Hossein Esfandiari, Peng Zhang, and Peilin Zhong for
helpful conversations during the early stages of this work.

\bibliographystyle{plainnat}
\bibliography{reference}

\clearpage
\appendix
\part*{Supplementary Material}
\addcontentsline{toc}{part}{Supplementary Material}

\section{Additional Related Work} \label{additionalrelatedwork}

Metrical task systems is another broad area that overlaps switching-cost OCO, in which the goal is to minimize both movement and cost per round. However, it fundamentally departs from OCO in that the adversary reveals the loss function \emph{first} in each round, and that success is measured by competitive ratio rather than regret. \citet{andrew2013metrics} considered OCO with a seminorm switching penalty added to the regret, and bridged these two modes by demonstrating that no algorithm can simultaneously achieve both sublinear regret and constant competitive ratio. Along the way, they also showed that gradient descent achieves $O(\sqrt T)$ regret, even with added seminorm switching costs. (Note, however, that a binary penalization for switching is not a seminorm.)

We here mention assorted results in switching-cost or switching-constrained OCO, although they differ
significantly in conventions. Instead of a binary penalization for switching per round, \citet{li2018predictionwindow} added a quadratic switching cost to the regret, $\sum_{i=1}^{T-1}\|x_{i+1}-x_i\|^2$. However, they allowed the player some clairvoyance about future loss functions in the form of a fixed ``lookahead'' window, and thus consider a modified ``dynamic'' regret. 
\citet{badiei2015ramp} similarly considered a non-binary switching penalization and a finite lookahead window, with a hard constraint on the total $L_1$ distance between consecutive actions and performance evaluation in terms of the competitive ratio. \citet{gofer2014sc} demonstrated that for OCO with linear objectives and any normed switching cost, no algorithm can achieve bounded regret against every loss sequence with a finite quadratic variation. \citet{anava2015memory} presented algorithms for OCO with memory against an \emph{oblivious} adversary, achieving $\tilde{O}(T^{1/2})$  regret and $\tilde{O}(T^{1/2})$ binary switching cost. This result demonstrated that restricting the adversary can lead to regret-switching dependencies stronger than we prove are optimal against an \emph{adaptive} adversary.
\citet{awerbuch1996making} analyzed limited switching from a multiplicative ratio perspective.

\section{Preliminaries}\label{preliminaries}
We denote the $p$-norm by $\|\cdot\|_p$. If $x$ and $y$ are two vectors living in $\bR^n$, we write $x\cdot y$ for their inner product. If $x_i$ is a vector, let $x_{i,j}$ denote its $j$-th coordinate. Let $\ind[\cdot]$ denote the indicator function whose value is $1$ if the statement inside the brackets holds and is $0$ otherwise.

In the special case of switching-constrained online convex optimization that we 
focus on, the regret 
is given 
by 
\begin{equation} \label{eq:regret-p-norm}
\sum_{i=1}^T w_i x_i 
-\inf_{\|x\|_p\le 1}
\sum_{i=1}^T 
w_i\cdot x 
=
\sum_{i=1}^T w_i x_i 
+\sup_{\|x\|_p\le 1}
\sum_{i=1}^T 
w_i\cdot (x) 
\stackrel{(a)}{=}
\sum_{i=1}^T w_i x_i 
+\left\|
\sum_{i=1}^T 
w_i\right\|_{p/(p-1)} 
\,. 
\end{equation}
where $(a)$ is because $\|\cdot\|_{p/(p-1)}$ is the dual norm of $\|\cdot\|_p$. 
Recall that if $ \|\cdot \| $ is a norm, its dual norm $ \|\cdot \|_* $ is 
defined by $ \|z\|_*\triangleq \sup_{\|x\|\le 1} z\cdot x $. 
Let $ B_p^n \triangleq \{ x\in \bR^n : \|x\|_p\le 
1 \} $ be the $n$-dimensional unit ball and let $ 
B^{*n}_q=\{ 
f(x)=w\cdot 
x : w\in \bR^n, \|w\|_q\le 1 \} $ denote the dual unit ball,
where $ 1\le p,q\le \infty $. 
Since all $ p 
$-norms coincide if $ n=1 $, we simply write $ B^1 $ and $ B^{*1} $ and do not 
specify an explicit $ p $ and $ q $. 
We also use the more detailed notation $\roco(\mathcal{D}, \mathcal{F}, K, T)$ to indicate that the player chooses actions from $\mathcal{D}$, the adversary chooses functions from class $\mathcal{F}$, and there are $T$ rounds with fewer than $K$ switches.

We use the abbreviations OLO for online linear optimization, OCO for online convex optimization, BLO for bandit linear optimization, and BCO for bandit convex optimization.

\paragraph{Moving and Stationary Rounds}
We call the first round and every round in which the player chooses a new 
point a \emph{moving} round. Formally the $ i $-th round is a moving round if $ 
i=1 $ or $ x_i\ne x_{i-1} $. 
We call every round in which the player 
sticks to her previous point a \emph{stationary} round.

\section{Lower Bound for One-Dimensional Switching-Constrained OCO}\label{sec:1-d}

In this section, we will show the $\frac{T}{\sqrt{2K}}$ lower bound for the  
minimax regret of the one-dimensional game. 

\begin{prop}[Lower bound for one-dimensional game]\label{thm:1-d}
	The minimax regret 
	$ \roco(B^1, B^{*1}, K, T) $  
	is at least $ \frac{T}{\sqrt{2K}} $.
\end{prop}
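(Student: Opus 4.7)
The plan is to reduce the claim to the fugal game analysis sketched in \cref{sec_exactminimax}, so that the one-dimensional OCO lower bound follows by studying the normalized fugal function $u_k(z)$ at $z=0$. First I would justify that restricting the adversary to the fugal game (copying the player's switching pattern, playing $\pm 1$, and allowing real-valued block lengths $M_i \ge 0$) can only weaken her, so that the minimax value of the fugal game lower bounds $\roco(B^1,B^{*1},K,T)$. Equivalently, the fugal function with zero initial bias satisfies $r_K(T,0) \le \roco(B^1,B^{*1},K,T)$, and so by homogeneity $u_K(0)\,T \le \roco(B^1,B^{*1},K,T)$; the target therefore becomes the purely analytic statement $u_K(0) \ge \frac{1}{\sqrt{2K}}$.

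Next I would derive the recursion $u_{k+1} = \cT u_k$ directly from the fugal function definition. After the player picks her first action $x$ and the adversary responds with $w=\pm 1$ and block length $M_1$, the remaining subgame is again a fugal game with $k$ switches, horizon $T-M_1$, and updated bias $Z + M_1 w$. Normalizing by the remaining horizon and setting $z' = (Z+M_1 w)/(T-M_1)$ yields precisely the weighted combination that appears in the definition of $\cT$; the constraint $w(z'-z)\ge 0$ comes from $M_1 \ge 0$ and $w\in\{\pm 1\}$, while the denominator $1+z'w$ arises from solving for $M_1/T$. I would also establish a base case $u_0(z) = |z|$ and a sanity-check lower bound $u_k(z) \ge |z|$ by induction (if the adversary plays $w = \sign(z)$ throughout).

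The heart of the argument is to exhibit a quadratic function $\phi_k(z) = a_k - b_k z^2$ such that $u_k(z) \ge \phi_k(z)$ for all $|z|\le 1$, and to propagate this bound through $\cT$. Concretely, I would guess the ansatz $a_k = \tfrac{1}{\sqrt{2k}}$ (with a matching $b_k$) and prove the inductive step: for every $z$ and every adversary choice $(w,z')$, the quantity $\frac{(1+wz)\phi_k(z') + x(z'-z)}{1+wz'}$ admits a saddle-point in $x,w$ whose value is at least $\phi_{k+1}(z)$. This reduces to a one-variable calculus problem in $z'$ once one fixes the sign of $w$ and optimizes $x$ to kill the linear term; the recursion between $(a_k,b_k)$ and $(a_{k+1},b_{k+1})$ should be of the form $a_{k+1} = a_k/(1 + c\,a_k)$ for some constant $c$, which by telescoping yields $a_k = \Theta(1/\sqrt{k})$ with the exact prefactor $1/\sqrt{2}$.

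The main obstacle I expect is this inductive propagation: $\cT$ is a non-trivial inf-sup with side conditions, and the quadratic ansatz must be chosen so that the worst-case adversary response $z'$ (after the player optimizes $x$) lies in a regime where the recursion on the coefficients closes exactly. In particular, carefully tracking the interior critical point versus the boundary $|z'|=1$, and verifying that the "stopping" branch of the adversary (taking $|z'|$ to its boundary) does not beat the smooth branch, is where the constant $\sqrt{2}$ gets pinned down. Once the quadratic lower bound $u_k(z) \ge \tfrac{1}{\sqrt{2k}} - b_k z^2$ is in hand, evaluating at $z=0$ and applying the fugal-to-OCO reduction gives $\roco(B^1,B^{*1},K,T) \ge u_K(0)\cdot T \ge \frac{T}{\sqrt{2K}}$, completing the proof.
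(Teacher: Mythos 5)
Your high-level route is the same as the paper's: relax to the fugal game, normalize to $u_k$, derive $u_{k+1}=\cT u_k$, and push a quadratic lower bound through $\cT$ by induction, evaluating at $z=0$. The reduction and recursion steps you sketch are sound (the paper proves the $T$-independence of the normalized regret and the recursion in \cref{lem:independent_T} and \cref{lem:improved-recursive-uk}, and your $u_k(z)\ge|z|$ sanity bound is \cref{lem:absolute-bound}). But the heart of the argument --- choosing a family of lower-bound functions that the fugal operator provably interlaces --- is exactly where your plan has a genuine gap, and the concrete ansatz you offer would not close the induction. First, the shape is wrong: you propose $\phi_k(z)=a_k-b_kz^2$, which is \emph{decreasing} in $|z|$, whereas the true normalized regret is increasing in $|z|$ (indeed $u_k(z)\ge|z|$ and $u_2(z)=\frac{1+z^2}{2}$). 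A cap that decays in $|z|$ bleeds value precisely where the adversary wants to push the bias, and it cannot absorb the ``stopping'' branch $\frac{1+z^2}{2}$ of the recursion. The paper's working family is instead the convex, piecewise function $a_i(z)=\frac{\sqrt{i/2}\,z^2+\sqrt{2/i}}{2}$ for $|z|<\sqrt{2/i}$ and $a_i(z)=|z|$ otherwise; matching the bound to $|z|$ outside a shrinking window is essential for the closed-form computation of $\cT a_i$ (\cref{prop:fugal-operator-quadratic}) and the interlacing $\cT a_i\ge a_{i+1}$ (\cref{lem:fugal-monotone}).

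Second, your anticipated coefficient recursion $a_{k+1}=a_k/(1+c\,a_k)$ telescopes to $\frac{1}{a_k}=\frac{1}{a_1}+c(k-1)$, i.e.\ $a_k=\Theta(1/k)$, not $\Theta(1/\sqrt{k})$; it cannot produce the prefactor $\frac{1}{\sqrt{2}}$ you need at $z=0$. The recursion consistent with $a_k(0)=\frac{1}{\sqrt{2k}}$ is harmonic in $a_k^2$, of the form $\frac{1}{a_{k+1}^2}=\frac{1}{a_k^2}+2$, and establishing it requires the full saddle-point analysis of $\cT$ on the correct family (the paper's proof tracks the zero $x_0(z)$ of $g_+-g_-$, the thresholds $z_\pm$, and the boundary-versus-interior cases over several lemmas). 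A minor further point: your base case $u_0(z)=|z|$ is not well-defined in this framework (zero blocks forces $\sum M_i=0\ne T$); the induction must start from $u_1(z)=1$ on $(-1,1)$, with $u_2(z)=\frac{1+z^2}{2}$ as the first nontrivial step. So while your outline mirrors the paper, the inductive propagation --- the step you yourself flag as the main obstacle --- is not supplied, and the specific quadratic family and recursion you propose would fail; fixing it essentially requires rediscovering the paper's $a_i$ and the interlacing computation.
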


The proof of \cref{thm:1-d} can be found in 
\cref{sub:quadratic-lower-bound}. However, the proof relies on results in all 
preceding subsections.

Recall that we defined the minimax regret with $T$ rounds, a maximum number of $ k $ 
switches, and an initial bias $ Z $ is by
\begin{align*}
 R_k(T, Z)
 ={} \inf_{x_1\in [-1,1]} \sup_{w_1\in [-1,1]} & \dots \inf_{x_T\in 
[-1,1]}\sup_{w_T\in [-1,1]} \sup_{\lambda>0}\\
& \left(  \sum_{i=1}^T w_i x_i + 
\left|Z + 
\sum_{i=1}^T 
w_i\right| + \lambda \ind[c(x_1,\dots,x_T)\ge k ] \right)\,.
 \end{align*}

 \subsection{Lower Bound via Fugal Games}

As in the main body of the paper,
we consider the modified \emph{fugal game}, with the minimax regret of a fugal game with $ T $ rounds ($ T\in \bR_{\geq0} 
$), a maximum 
number 
of $ k-1 $ switches, and an initial bias $ Z $ defined by \begin{equation} 
\label{eq:def_rk}
\begin{split}
 r_k(T, Z) ={}
  \inf_{x_1\in [-1,1]} \max_{w_1=\pm 1} & \inf_{M_1\ge 0}\dots 
\inf_{x_k\in [-1,1]} \max_{w_k=\pm 1} \inf_{M_k\ge 0} \sup_{\lambda>0}\\ &\left(  
\sum_{i=1}^k 
M_iw_i x_i + \left|Z + 
\sum_{i=1}^k 
M_iw_i\right| + \lambda \ind[\sum_{i=1}^k M_i\ne T] \right)\,,
\end{split}
 \end{equation}
 where $ M_i $ is the length between two moving rounds, and we have relaxed $ M_i $ by 
 allowing it to take any non-negative real values. The function $ r_k(T,Z) $ 
 is a \emph{fugal function}.

 The minimax regret in a 
 fugal game 
 gives 
 a 
 lower bound for the minimax regret of interest.\footnote{Intuitively, strengthening the player and weakening the adversary can only lower the minimax regret. Formally, this holds as a result of a few straightforward facts. If $X' \subseteq X$ and $Y' \subseteq Y$, then $\inf_{x \in X'}\sup_{y \in Y}f(x,y) \geq \inf_{x \in X}\sup_{y \in Y'}f(x,y)$. To apply this recursively, simply note that if for all $x \in X$ and $y \in Y$ we have $f(x,y) \leq g(x,y)$, then $\inf_{x \in X}\sup_{y \in Y}f(x,y) \leq \inf_{x \in X}\sup_{y \in Y}g(x,y)$. Thus restricting the range of possible values for each supremum term in $\roco(\cB^n, \cL^n, K, T)$ (corresponding to the adversary's choices), followed by enlarging the range of possible values for each infimum term (corresponding to the player's actions), only lowers the ultimate minimax regret.}
 In other words, it holds that 
 $ 
r_k(T,Z)\le R_k(T,Z) $ if $ T $ is a natural number.
Furthermore, whenever it is the player's turn to play, she must optimize over not only the action, but also the optimal length of time to \emph{maintain} that action to minimize her ultimate regret. As a result of this basic intuition, the function $ r_k(T,Z) $ satisfies the following recurrence 
relation for all $ k\ge 1 $
\begin{equation}\label{eq:r_iteration}
r_{k+1}(T,Z) = \inf_{x\in [-1,1]} \max_{w=\pm 1}  \inf_{0\le t\le 
T} 
\left( 
twx + {r}_k(T-t, Z+tw ) \right)  \,.
 \end{equation}

\subsection{Absolute Value Bounds for Fugal Games}
In this subsection, we derive basic properties of the fugal functions. 
\cref{lem:absolute-bound} shows that the function $ r_k(T,Z) $ is at least $ 
|Z| $ for all $ Z\in \bR $ and that the inequality is tight if $ |Z|\ge T $.
\begin{lemma}[Absolute value bounds for fugal games]\label{lem:absolute-bound}
	The minimax regret of a fugal game with an initial bias $ r_k(T,Z) $ 
	satisfies the following two properties
	\begin{compactenum}[(a)]
		\item $ r_k(T,Z)\ge |Z| $ for all $ Z\in \bR $; and\label{it:lb_abs}
		\item $ r_k(T,Z)=|Z| $ if $ |Z|\ge T $.\label{it:eq_abs}
	\end{compactenum}
\end{lemma}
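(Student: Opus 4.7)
My plan is to prove both parts by exhibiting simple, round-uniform strategies whose payoffs can be computed in closed form: for part~(a), an oblivious adversarial strategy that guarantees regret at least $|Z|$; for part~(b), an oblivious player strategy that caps the regret at $|Z|$ under the hypothesis $|Z|\ge T$. Throughout I will assume without loss of generality that $Z\ge 0$; the case $Z<0$ would follow by the symmetric substitution $(Z,w_i)\mapsto(-Z,-w_i)$.

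For part~(a), I would have the adversary commit in advance to $w_i=+1$ for every $i$. Since $\max_{w_i=\pm 1}$ in~\eqref{eq:def_rk} dominates the value at any specific choice, substituting $w_i=+1$ pointwise yields
\[
r_k(T,Z)\;\ge\;\inf_{x_1,M_1,\ldots,x_k,M_k}\sup_{\lambda>0}\left(\sum_{i=1}^k M_i x_i+\left|Z+\sum_{i=1}^k M_i\right|+\lambda\ind\!\left[\sum_i M_i\ne T\right]\right).
\]
The inner $\sup_\lambda$ then forces $\sum_i M_i=T$, and under $Z\ge 0$ the absolute value simplifies to $Z+T$. The remaining infimum of $\sum_i M_i x_i$ over $x_i\in[-1,1]$ and $M_i\ge 0$ summing to $T$ equals $-T$, attained at $x_i\equiv -1$, so the whole right-hand side collapses to $-T+Z+T=Z=|Z|$.

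For part~(b), I would use the symmetric player strategy $x_i=-1$ for every $i$ with any feasible lengths $M_i\ge 0$ satisfying $\sum_i M_i=T$ (for instance $M_i=T/k$). Writing $S=\sum_i M_iw_i$ for whatever the adversary plays, the assumption $|Z|\ge T$ combined with $|S|\le \sum_i M_i=T\le Z$ gives $Z+S\ge 0$, so $|Z+S|=Z+S$, and the regret reduces to $-S+(Z+S)=Z=|Z|$ irrespective of the adversary. This shows $r_k(T,Z)\le|Z|$, and together with part~(a) it yields the matching equality claimed in part~(b).

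The main subtlety I expect to confront is the order of operations in~\eqref{eq:def_rk}: the adversary picks $w_i$ \emph{after} seeing $x_i$, so committing in advance to $w_i=+1$ is legitimate only because $\max_{w_i}$ pointwise dominates any specific choice; conversely, the player can legitimately pick all $x_i=-1$ because the fugal-game definition imposes no distinctness constraint between consecutive actions (the $k$ ``moving-round'' slots may be filled with repeated values). Beyond this bookkeeping I do not anticipate further difficulty, since each of the two strategies reduces the game to a single closed-form expression via the triangle inequality and the constraint $\sum_i M_i=T$.
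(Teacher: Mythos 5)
Your proof is correct and takes essentially the same route as the paper: for part (a) the adversary commits to $w_i=\sign(Z)$ (your $w_i=+1$ under the WLOG) and the regret is bounded below via $\sum_i M_iw_ix_i\ge -T$ and $|Z+T\sign(Z)|=|Z|+T$, while for part (b) the player commits to $x_i=-\sign(Z)$ with feasible lengths summing to $T$, so the absolute value has a definite sign and the regret collapses to $|Z|$. The only cosmetic nit is that the symmetry reducing $Z<0$ to $Z\ge 0$ requires negating $x_i$ as well, i.e.\ $(Z,w_i,x_i)\mapsto(-Z,-w_i,-x_i)$, not just $(Z,w_i)\mapsto(-Z,-w_i)$.
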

\begin{proof}
	To prove part (\ref{it:lb_abs}), it suffices to design an adversary's 
	strategy that satisfies this lower bound. Suppose that the adversary always 
	plays $ \sign(Z) $, or 1 if $Z=0$. Since $ \sum_{i=1}^k M_i w_i x_i \ge -\sum_{i=1}^{k}M_i 
	= -T $, if $z \neq 0$ we have \[
	\begin{split}
	\sum_{i=1}^k 
	M_iw_i x_i + \left|Z + 
	\sum_{i=1}^k 
	M_iw_i\right| \ge{}& -T + \left|Z + T\sign(Z) \right| = -T + \left| (|Z|+T)\sign(z) \right| \\
	={}& -T + (|Z|+T) = |Z|\,.
	\end{split}
	 \]
	 When $Z=0$, the expression above is clearly at least $-T+T=0 = |Z|$. Therefore the lower bound $ r_k(T,Z)\ge |Z| $ holds.
	 
	To prove part (\ref{it:eq_abs}), we will show that $r_k(T,Z)\le |Z|$ if $ 
	|Z|\ge T $. First, we assume $ Z\ge T $. In this case, 
	we have $ Z + 
	\sum_{i=1}^k 
	M_iw_i\ge T-T=0 $. Since we are certain of the sign of the expression 
	inside the absolute value, we remove the absolute value and obtain 
	 \[ 
	\sum_{i=1}^k 
	M_iw_i x_i + \left|Z + 
	\sum_{i=1}^k 
	M_iw_i\right| = Z+ \sum_{i=1}^k M_iw_i(1+x_i)\,.
	 \]
	 The above expression equals $Z $ if the player always plays $ -1 $. 
	 Therefore, if $ Z\ge T $, we obtain $ r_k(T,Z)\le Z = |Z| $.
	 
	 If $ Z\le -T $, since $ Z + 
	 \sum_{i=1}^k 
	 M_iw_i\le -T+T = 0 $, we have
	 \[ 
	 \sum_{i=1}^k 
	 M_iw_i x_i + \left|Z + 
	 \sum_{i=1}^k 
	 M_iw_i\right| = \sum_{i=1}^k 
	 M_iw_i x_i - (Z + 
	 \sum_{i=1}^k 
	 M_iw_i) = -Z+\sum_{i=1}^{k}M_i w_i(x_i-1)\,.
	  \]
	  Again, the above expression equals $ -Z $ if the player always plays $ 1 
	  $. Therefore, if $ Z\le -T $, we get $ r_k(T,Z)\le -Z = |Z| $. In both 
	  cases, we show that $ r_k(T,Z)\le |Z| $, which completes the proof.
\end{proof}

\subsection{Extraspherical Minimax Regret}

Recall that in \eqref{eq:r_iteration}, the minimum is taken over all $ t $ 
between $ 0 $ and $ T $. Let us consider a subset of $ t $ such that $ 
|Z+tw|\le T-t $. Notice that if $ t<T $, ${|Z+tw|}$ belongs to a one-dimensional ball $[-T+t,T-t]$. 
\cref{lem:out-of-bound-minimax} gives a minimax lower bound for $ t $ such that 
$ {|Z+tw|} $ is outside the ball. Since it is shown in 
\cref{lem:absolute-bound} that $ r_k(T,Z)=|Z| $ if $ |Z|\ge T $, we assume in 
the following lemma that $ |Z|<T $.
\begin{lemma}[Extraspherical minimax regret]\label{lem:out-of-bound-minimax}
If $ Z\in (-T,T) $,
\[	 \inf_{x\in [-1,1]} \max_{w=\pm 1}  \inf_{\substack{0\le t\le 
		T\\ |Z+tw|\ge T-t}} 
	\left( 
	twx + {r}_k(T-t, Z+tw ) \right) = \frac{Z^2+T^2}{2T} \,.\]
\end{lemma}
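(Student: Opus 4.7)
The plan is to exploit Lemma~\ref{lem:absolute-bound}(b), which says $r_k(T',Z')=|Z'|$ whenever $|Z'|\ge T'$. The constraint $|Z+tw|\ge T-t$ appearing in the inner infimum is exactly this hypothesis applied with horizon $T-t$ and initial bias $Z+tw$. So the first step is to substitute, reducing the problem to
\[
\inf_{x\in[-1,1]}\max_{w=\pm 1}\inf_{t\in\Gamma_w}\bigl(twx+|Z+tw|\bigr),\qquad \Gamma_w:=\{t\in[0,T]:|Z+tw|\ge T-t\}.
\]
No further structural properties of $r_k$ are needed beyond this boundary identity, which is convenient because it makes the remaining computation completely elementary.

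Next, I would split on the sign of $w$ and unwrap the absolute value. For $w=+1$, since $Z>-T$, the alternative $Z+t<0$ would force $-(Z+t)\ge T-t$, i.e.\ $Z\le -T$, contradicting $Z\in(-T,T)$; hence $|Z+t|=Z+t$ on $\Gamma_{+1}$ and the constraint collapses to $t\ge(T-Z)/2$. The objective $Z+t(1+x)$ is non-decreasing in $t$ for $x\in[-1,1]$, so its infimum over $\Gamma_{+1}$ is realized at $t=(T-Z)/2$ and equals
\[
A(x):=\frac{Z+T}{2}+\frac{T-Z}{2}\,x.
\]
An identical argument for $w=-1$ forces $|Z-t|=t-Z$ and $t\ge(T+Z)/2$, giving infimum
\[
B(x):=\frac{T-Z}{2}-\frac{T+Z}{2}\,x.
\]

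The final step is the outer min--max over $x$. Because $Z\in(-T,T)$, the affine function $A$ is strictly increasing in $x$ and $B$ is strictly decreasing, so $x\mapsto\max\{A(x),B(x)\}$ is convex and piecewise-affine and is minimized at the unique crossing point. Solving $A(x)=B(x)$ gives $2Z+2Tx=0$, hence $x^\star=-Z/T$, which lies in $(-1,1)$. Substituting back yields the common value
\[
A(x^\star)=\frac{Z+T}{2}-\frac{(T-Z)Z}{2T}=\frac{T^2+Z^2}{2T},
\]
as desired.

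I do not expect a serious obstacle here: the work is essentially bookkeeping. The one place to be careful is verifying that the extraspherical constraint $|Z+tw|\ge T-t$ is actually consistent with the optimizing $t$ in each branch (it is, since the minimum over $t$ lands exactly on the boundary $(T\mp Z)/2$ of $\Gamma_{\pm 1}$), so that the reduction via Lemma~\ref{lem:absolute-bound}(b) is legitimate wherever it is applied. Once this is confirmed, the problem collapses to a two-line min--max of a pair of affine functions.
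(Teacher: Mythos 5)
Your proposal is correct and follows essentially the same route as the paper: apply Lemma~\ref{lem:absolute-bound}(b) to replace $r_k(T-t,Z+tw)$ by $|Z+tw|$ on the constraint set, use $Z\in(-T,T)$ to reduce each branch to $t\ge (T\mp Z)/2$ with the infimum attained at that boundary, and then minimize the max of the two resulting affine functions of $x$ at the crossing point $x=-Z/T$, giving $\frac{Z^2+T^2}{2T}$. Your explicit convexity/monotonicity justification of the outer min--max is a slightly more careful phrasing of the paper's one-line argument, but the substance is identical.
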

\begin{proof}
Let us first expand the maximum operator
	\[ 
	\begin{split}
	A\triangleq & \inf_{x\in [-1,1]} \max_{w=\pm 1}  \inf_{\substack{0\le 
	t\le 
			T\\ |Z+tw|\ge T-t}} 
	\left( 
	twx + {r}_k(T-t, Z+tw ) \right)\\ 
	={} &  
	\inf_{x\in [-1,1]} \max\left\{
	  \inf_{\substack{0\le t\le 
			T\\ |Z+t|\ge T-t}} 
	\left( 
	tx + {r}_k(T-t, Z+t ) \right),
	\inf_{\substack{0\le t\le 
			T\\ |Z-t|\ge T-t}} 
	\left( 
	-tx + {r}_k(T-t, Z-t ) \right)
	\right\}\,.
	\end{split}
	 \]
As the first step, we study the situation where $ w=1 $.
If $ |Z+t|\ge T-t $, it implies $ Z+t\ge T-t $ or $ Z+t\le -(T-t) $. The 
second 
case 
is impossible since it is equivalent to $ Z\le -T $ (recall our assumption that 
$ |Z|<T $). The first case is equivalent 
to $ t\ge (T-Z)/2 $. Therefore the range of $ t $ over which the minimum is 
taken 
is $ \frac{T-Z}{2}\le t\le T $. 
The expression of which we take the infimum becomes 
  \[ 
 tx + {r}_k(T-t, Z+t ) 
 \stackrel{(a)}{=}tx+|Z+t|=t(x+1)+Z\,,
  \]
  where $ (a) $ is due to  \cref{lem:absolute-bound}
 by recalling $ 
  \left|{Z+t}\right|\ge {T-t} $.
  In this way, since $ 1+x\ge 0 $, we obtain a cleaner expression for the 
  innermost infimum in the 
  case $ w=1 $
  \[ 
  \inf_{\substack{0\le t\le 
  		T\\ |Z+t|\ge T-t}} 
  \left( 
  tx + {r}_k(T-t, Z+t ) \right) = 
  \inf_{\frac{T-Z}{2}\le t\le T} t(x+1)+Z = 
  \frac{T-Z}{2} (1+x)+Z\,.
   \]
   The second step is to study the situation where $ w=-1 $. If $ |Z-t|\ge T-t 
   $, 
   it implies $ Z-t\ge T-t $ or $ Z-t\le -(T-t) $. The first case is 
   impossible as 
   it is equivalent to $ z\ge 1 $. The second case is equivalent to $ 
   t\ge (T+Z)/2 
   $. Therefore the range of $ t $ over which the minimum is taken becomes 
   $ \frac{T+Z}{2}\le t\le T $. 
   The expression of which we take the infimum becomes
   \[ 
   -tx + {r}_k(T-t, Z-t )=-tx + |Z-t| = t(1-x)-Z\,.
    \]
   where we use \cref{lem:absolute-bound} again in the first equality. 
   Since $ 1-x\ge 0 $, we obtain a similar clean expression for the 
   innermost infimum in the case $ w=-1 $
   \[ 
   \inf_{\substack{0\le t\le 
   		T\\ |Z-t|\ge T-t}} 
   \left( 
   -tx + {r}_k(T-t, Z-t ) \right) = \inf_{\frac{T+Z}{2}\le t\le T} 
   t(1-x)-Z= 
   \frac{T+Z}{2}(1-x)-Z\,.
    \]
    Therefore, the extraspherical minimax can be lower bounded as follows \[
    \begin{split}
    A ={}& 
     \inf_{x\in [-1,1]}\max \left\{ \frac{T-Z}{2}(1+x)+Z,
     \frac{T+Z}{2}(1-x)-Z \right\}.%
    \end{split} 
     \]
    The first term in the max is greater than the second term if and only if $ Tx>-Z $. Therefore, the infimum is attained at $ x=-Z/T $ and the 
    extraspherical minimax equals
    \[ 
    A = \frac{Z^2+T^2}{2T}\,,
     \]
     as promised.
\end{proof}

\begin{cor}\label{cor:r-lowerbound}
	If $ |Z|<T $,
	the following recursive relation holds
	\begin{equation}\label{eq:cor-r-lowerbound}
	 r_{k+1}(T,Z) = \min\left\{ \inf_{x\in [-1,1]} \max_{w=\pm 1}  
	\inf_{\substack{0\le t\le T \\
			|Z+tw|< T-t
		}} 
	\left( 
	twx + {r}_k(T-t, Z+tw ) \right),\frac{Z^2+T^2}{2T} \right\}\,. 
	\end{equation}
\end{cor}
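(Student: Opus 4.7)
The plan is to derive the stated identity by partitioning the inner infimum of \eqref{eq:r_iteration} according to whether $|Z + tw| < T - t$ (intraspherical) or $|Z + tw| \geq T - t$ (extraspherical), and then invoking \cref{lem:out-of-bound-minimax} on the latter. Let $g(t,x,w) = twx + r_k(T-t, Z+tw)$, and denote by $A(x,w)$ and $B(x,w)$ the infima of $g$ over the intra and extra regions respectively. Then $\inf_{0 \leq t \leq T} g(t,x,w) = \min\{A(x,w), B(x,w)\}$, so $r_{k+1}(T,Z) = \inf_{x \in [-1,1]} \max_{w = \pm 1} \min\{A(x,w), B(x,w)\}$.

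The $\leq$ direction of the identity is immediate from $\min\{A,B\} \leq A$ and $\min\{A,B\} \leq B$: taking $\inf_x \max_w$ and applying \cref{lem:out-of-bound-minimax} to identify $\inf_x \max_w B(x,w) = \frac{Z^2+T^2}{2T}$ yields $r_{k+1}(T,Z) \leq \min\{\inf_x \max_w A(x,w), \frac{Z^2+T^2}{2T}\}$.

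For the matching lower bound, the plan is to establish the pointwise inequality $A(x,w) \leq B(x,w)$. Let $t_0 = (T - Zw)/2$ denote the interface between the two regions, so that $|Z+t_0 w| = T - t_0$. By part (b) of \cref{lem:absolute-bound}, $r_k(T - t_0, Z + t_0 w) = T - t_0$, so the intra and extra expressions for $g$ agree at $t_0$. On the extra side, $g(t,x,w) = twx + |Z+tw|$ has slope $wx + 1 \geq 0$ in $t$ (as verified in the proof of \cref{lem:out-of-bound-minimax}), so the extra infimum $B(x,w) = g(t_0, x, w)$ is attained exactly at the interface. By continuity of $r_k$ in both arguments, the intra infimum satisfies $A(x,w) \leq \lim_{t \to t_0^-} g(t,x,w) = g(t_0, x, w) = B(x,w)$. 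Hence $\min\{A, B\} = A$, giving $r_{k+1}(T,Z) = \inf_x \max_w A(x,w)$; since $A \leq B$ pointwise also forces $\inf_x \max_w A(x,w) \leq \frac{Z^2+T^2}{2T}$, the outer $\min$ in the claim indeed selects this first term and the identity holds with equality throughout.

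The main obstacle is justifying continuity of $r_k$ carefully enough to pass the limit from the open intra region to its boundary. This is handled by induction on $k$: the base case $r_1(T,Z) = \max\{T, |Z|\}$ is manifestly continuous, and \eqref{eq:r_iteration} preserves continuity as an inf-sup composition of continuous operations over compact domains.
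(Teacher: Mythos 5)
Your proof is correct, and it follows the route the paper clearly intends but never writes out: the corollary is stated with no proof, as if it followed immediately from splitting the inner infimum of \eqref{eq:r_iteration} into the intraspherical region $|Z+tw|<T-t$ and the extraspherical region $|Z+tw|\ge T-t$ and citing \cref{lem:out-of-bound-minimax}. That splitting only gives $r_{k+1}=\inf_x\max_w\min\{A(x,w),B(x,w)\}\le\min\{\inf_x\max_w A,\,\inf_x\max_w B\}$ in general, so the ``$\ge$'' direction (the one the later lower-bound argument actually needs) requires an extra ingredient, and your pointwise inequality $A(x,w)\le B(x,w)$ supplies exactly it: the extraspherical objective $twx+|Z+tw|$ has slope $wx+1\ge0$, so $B$ is attained at the interface $t_0=(T-Zw)/2\in(0,T)$, where part (b) of \cref{lem:absolute-bound} makes the two expressions for $g$ agree, and since $|Z|<T$ the intraspherical region is $[0,t_0)$ and accumulates at $t_0$. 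Two minor remarks. First, your continuity-of-$r_k$ induction is stated briskly (the inner constraint set $[0,T]$ varies with $T$, so one needs a Berge-type maximum-theorem argument, which does go through); you could avoid continuity entirely by using that $r_k$ is non-increasing in $k$ (the player may take a block of length zero), so $r_k(T-t,Z+tw)\le r_1(T-t,Z+tw)=\max\{T-t,|Z+tw|\}=T-t$ on the intraspherical region, whence $g(t)\le twx+(T-t)\to B(x,w)$ as $t\to t_0^-$. Second, your observation that $A\le B$ pointwise renders the outer $\min$ redundant is consistent with the paper, which only removes that $\min$ later and by a different argument (\cref{lem:improved-recursive-uk}, taking $z'=z$ together with \cref{cor:u2-upperbound}); in that sense your argument is slightly stronger than what \cref{cor:r-lowerbound} itself asserts.
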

Note that if $ |Z+tw|<T-t $, it excludes the possibility of $ T-t=0 $. Because $ T-t>|Z+tw|\geq 0$. %
  We define the normalized regret \[
 u'_k(T,z)\triangleq \frac{1}{T}r_k(T,Tz) \] for $ T>0 $ (In the rest of this subsection, we assume $|z|<1$). 
Plugging this definition into \eqref{eq:cor-r-lowerbound} yields 
 \begin{align*}
 & u'_{k+1}(T,z)\\
  ={}& \min\left\{ \inf_{x\in [-1,1]} \max_{w=\pm 1}  
 \inf_{\substack{0\le t< T \\
 		|Tz+tw|< T-t
 }} 
 \left( 
\frac{twx}{T}  + (1-\frac{t}{T})u'_k(T-t, \frac{Tz+tw}{T-t} ) 
\right),\frac{z^2+1}{2} 
\right\}\,.
  \end{align*}
  For $ 0\le t<T $ and $ |Tz+tw|<T-t $, we define a reparametrization $ 
  z'(t)\triangleq  
  \frac{Tz+tw}{T-t} $. 
The derivative of $ z' $ is $ \frac{dz'}{dt} = \frac{T (w+z)}{(T-t)^2} $. Since 
$ |z|<1 $, $ z'(t) $ is an increasing function if $ w=1 $ and is a decreasing 
function if $ w=-1 $.
If $ w=1 $, the system of inequalities $ 0\le t<T $ and $ |Tz+t|<T-t $ is 
equivalent to $ 0\le t< \frac{T(1-z)}{2} $ and thereby $ z\le z'< 1  $.
If $ w=-1 $, the system of inequalities $ 0\le t<T $ and $ |Tz-t|<T-t $ is 
equivalent to $ 0\le t < \frac{T(1+z)}{2} $ and thereby $ -1<z'\le z $. 
Combining these two cases, we obtain that $ |z'|<1 $ and $ w(z'-z)\ge 0 $.

 The definition of $ z' $ gives $ t= \frac{T (z'-z)}{w+z'} $ and 
 \begin{align*}
 \frac{twx}{T}  + (1-\frac{t}{T})u'_k(T-t, \frac{Tz+tw}{T-t} ) ={}& 
 \frac{(w+z) u'_k\left(\frac{T (w+z)}{w+z'},z'\right)+w x 
 \left(z'-z\right)}{w+z'} \\
 \stackrel{(a)}{=}{}&  \frac{(1+wz) u'_k\left(\frac{T (w+z)}{w+z'},z'\right)+ x 
 \left(z'-z\right)}{1+z'w}\,,
  \end{align*}
  where $ (a) $ is because we multiply the numerator and denominator by $ w $ 
  and use the fact that $ w=\pm 1 $.
  Therefore, we obtain the following corollary.
  \begin{cor}\label{cor:u_prime_z_prime}
  	If $ |z|<1 $, the following recursive relation holds
  	\begin{equation}\label{eq:u_prime_z_prime}
  	  u'_{k+1}(T,z) = \min\left\{ \inf_{x\in [-1,1]} \max_{w=\pm 1}  
  	\inf_{\substack{|z'|<1\\ w(z'-z)\ge 0}} 
  	\frac{(1+wz) u'_k\left(\frac{T (w+z)}{w+z'},z'\right)+ x 
  		\left(z'-z\right)}{1+z'w}
  	,\frac{z^2+1}{2} 
  	\right\}\,. \end{equation}
  \end{cor}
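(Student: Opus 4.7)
The plan is to take the recursive relation of Corollary~\ref{cor:r-lowerbound} for $r_{k+1}(T,Z)$, normalize by $T$, and then execute the change of variables from the mini-batch length $t$ to the next-stage bias $z' = (Tz+tw)/(T-t)$. Concretely, divide both sides of \eqref{eq:cor-r-lowerbound} by $T$ and substitute $Z = Tz$, yielding
\begin{equation*}
u'_{k+1}(T,z) = \min\!\left\{ \inf_{x\in[-1,1]} \max_{w=\pm 1} \inf_{\substack{0\le t<T \\ |Tz+tw| < T-t}} \!\left(\frac{twx}{T} + \left(1-\tfrac{t}{T}\right) u'_k\!\left(T-t,\tfrac{Tz+tw}{T-t}\right)\right), \tfrac{z^2+1}{2}\right\},
\end{equation*}
where I use that $r_k(T-t, Tz+tw) = (T-t)\, u'_k(T-t, (Tz+tw)/(T-t))$, valid because $T-t > |Tz+tw| \ge 0$ forces $T - t > 0$.

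Next I would justify the reparametrization $t \mapsto z'(t) := (Tz+tw)/(T-t)$ as a bijection onto the claimed domain. Differentiating gives $\tfrac{dz'}{dt} = \tfrac{T(w+z)}{(T-t)^2}$, which has sign equal to $\sign(w+z)$; since $|z|<1$ and $w=\pm 1$, this is the same sign as $w$, so $z'$ is strictly monotonic in $t$ with direction determined by $w$. I would then check the two cases separately: for $w=1$, the constraints $0\le t < T$ and $|Tz+t| < T-t$ reduce to $0 \le t < T(1-z)/2$, which maps bijectively to $z \le z' < 1$; for $w=-1$, they reduce to $0 \le t < T(1+z)/2$, mapping bijectively to $-1 < z' \le z$. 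Combining, the admissible $z'$ range is exactly $\{|z'|<1 : w(z'-z) \ge 0\}$.

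Inverting the reparametrization gives $t = T(z'-z)/(w+z')$ and $T - t = T(w+z)/(w+z')$. Substituting these into the inner expression produces
\begin{equation*}
\frac{twx}{T} + \left(1-\tfrac{t}{T}\right) u'_k\!\left(T-t, z'\right) = \frac{(w+z)\, u'_k\!\left(\tfrac{T(w+z)}{w+z'},\, z'\right) + wx(z'-z)}{w+z'}.
\end{equation*}
To put the expression in the exact form of the corollary, I would multiply the numerator and denominator by $w$ and invoke $w^2 = 1$, obtaining $(1+wz)\,u'_k(\cdot) + x(z'-z)$ over $1+z'w$. Finally, I would note that the $+\infty$ penalty from $\lambda\,\ind[\cdot]$ is automatically enforced by restricting the inner infimum to $t < T$ (equivalently $|z'|<1$), which is why the $\tfrac{z^2+1}{2}$ branch from the extraspherical case is carried along unchanged. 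The main potential obstacle is being careful that the open/closed endpoint conventions on $t$ and on $z'$ match after the change of variables; since Corollary~\ref{cor:r-lowerbound} already isolates the boundary case $|Z+tw| = T-t$ into the $(Z^2+T^2)/(2T)$ term, this bookkeeping is straightforward but must be done explicitly to preserve the claimed domain $|z'|<1$.
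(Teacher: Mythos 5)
Your proposal is correct and follows essentially the same route as the paper: normalize Corollary~\ref{cor:r-lowerbound} by $T$ with $Z=Tz$, reparametrize $t\mapsto z'=(Tz+tw)/(T-t)$ using the sign of $dz'/dt$ to identify the admissible range $\{|z'|<1,\ w(z'-z)\ge 0\}$ in the two cases $w=\pm1$, invert to get $t=T(z'-z)/(w+z')$, and multiply numerator and denominator by $w$ using $w^2=1$. The only cosmetic difference is your explicit remark on endpoint bookkeeping, which the paper handles implicitly via its observation that $|Z+tw|<T-t$ rules out $T-t=0$.
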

\begin{remark}\label{rmk:t-positive}
	Since both $ z $ and $ z' $ resides in the open interval $ (-1,1) $ and $ w 
	$ is either $ -1 $ or $ 1 $, the quantity $ \frac{w+z}{w+z'} $ is always 
	positive.
\end{remark}

\subsection{Normalized Minimax Regret}

To derive a closed-form lower bound for $ u_k(z) $, we study the boundary 
condition when $ k=1 $.
\begin{lemma}[Boundary condition for $ r_1 $]
	The boundary condition of $ r_k(T,Z) $ when $ k=1 $ is given by \[ 
	r_1(T,Z) = \frac{|Z-T|+|Z+T|}{2}
	\,.
	\]
\end{lemma}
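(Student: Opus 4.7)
\medskip

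The plan is to unroll the definition of $r_k(T,Z)$ at $k=1$ and carefully resolve the one remaining inf--sup. Substituting $k=1$ into \eqref{eq:def_rk} gives
\[
r_1(T,Z) = \inf_{x\in[-1,1]}\max_{w=\pm 1}\inf_{M\ge 0}\sup_{\lambda>0}\Bigl(Mwx + |Z+Mw| + \lambda\,\ind[M\ne T]\Bigr).
\]
The first step is to observe that the inner $\sup_{\lambda>0}$ is $+\infty$ unless $M=T$; consequently the infimum over $M\ge 0$ is uniquely attained at $M=T$. This collapses the expression to the tractable two-stage minimax
\[
r_1(T,Z) \;=\; \inf_{x\in[-1,1]}\max_{w=\pm 1}\bigl(Twx + |Z+Tw|\bigr).
\]

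Next, I would evaluate the outer maximum explicitly. Setting $w=+1$ yields $Tx+|Z+T|$, which is increasing in $x$, while $w=-1$ yields $-Tx+|Z-T|$, which is decreasing in $x$. The minimum of the pointwise maximum of one increasing and one decreasing affine function on $[-1,1]$ is therefore attained at their intersection, provided this intersection lies inside $[-1,1]$. Solving $Tx+|Z+T| = -Tx+|Z-T|$ gives the critical point
\[
x^{*} = \frac{|Z-T|-|Z+T|}{2T}.
\]

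To confirm that $x^{*}\in[-1,1]$, I would invoke the reverse triangle inequality: $\bigl||Z-T|-|Z+T|\bigr| \le |(Z-T)-(Z+T)| = 2T$, so $|x^{*}|\le 1$ as required. Substituting $x^{*}$ into $Tx+|Z+T|$ then yields
\[
r_1(T,Z) \;=\; \frac{|Z-T|-|Z+T|}{2} + |Z+T| \;=\; \frac{|Z-T|+|Z+T|}{2},
\]
which is the claimed identity.

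I do not anticipate a substantive obstacle here: the only subtlety is checking that the intersection of the two affine pieces falls in the allowed range $[-1,1]$, which is precisely the reverse triangle inequality. Everything else is a direct computation once the $\lambda$-term is used to pin $M=T$.
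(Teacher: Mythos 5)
Your proposal is correct and follows essentially the same route as the paper: pin $M=T$ via the $\lambda$-term, expand the maximum over $w=\pm1$ into the two affine functions $Tx+|Z+T|$ and $-Tx+|Z-T|$, and minimize at their intersection $x^{*}=\frac{|Z-T|-|Z+T|}{2T}$. Your explicit verification that $x^{*}\in[-1,1]$ via the reverse triangle inequality is a detail the paper leaves implicit, but the argument is the same.
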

\begin{proof}
	It can be computed directly as below 
	 \[ 
	r_1(T,Z) = \inf_{x\in [-1,1]} \max_{w=\pm 1} (Twx+|Z+Tw|). 
	\]
	We can expand the innermost maximum, which is minimized at 
	$x=\frac{|z-1|-|z+1|}{2}$, as follows
	\[
	\inf_{x\in [-1,1]}\max_{w=\pm 1} (Twx+|Z+Tw|) = \inf_{x\in [-1,1]}\max(Tx+|Z+T|,-Tx+|Z-T|)=\frac{|z-1|+|z+1|}{2}\,.
	\]
	Thus, $r_1(T,Z)= \frac{|Z-T|+|Z+T|}{2}$ as claimed.
\end{proof}
\begin{cor}[Boundary condition for $ u'_1 $]\label{cor:u1}
	If $ |z|<1 $, we have \[u'_1(T,z)= \frac{r_1(T,Tz)}{T} = 
	\frac{|z-1|+|z+1|}{2} = 1\,.
	\] 
\end{cor}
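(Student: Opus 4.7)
The plan is to derive this corollary as an immediate consequence of the preceding lemma, which established that $r_1(T,Z) = \frac{|Z-T|+|Z+T|}{2}$. The first step is to substitute $Z = Tz$ directly into that closed-form expression and divide by $T$. Factoring $T > 0$ out of each absolute value yields
\[
u'_1(T,z) = \frac{r_1(T,Tz)}{T} = \frac{|Tz-T|+|Tz+T|}{2T} = \frac{|z-1|+|z+1|}{2}.
\]
Notably, the $T$-dependence cancels, which is consistent with the motivation for introducing the normalized minimax regret in the first place.

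The second step is to use the hypothesis $|z|<1$ to resolve the absolute values. Since $-1 < z < 1$, we have $z - 1 < 0$ and $z + 1 > 0$, so $|z-1| = 1-z$ and $|z+1| = 1+z$. Summing gives $|z-1|+|z+1| = 2$, and therefore $u'_1(T,z) = 1$. No genuine obstacle arises here: the entire content of the corollary is packaged in the boundary lemma for $r_1$, and the remainder is routine algebra together with a sign check on the absolute values permitted by the assumption $|z|<1$.
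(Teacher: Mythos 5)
Your proposal is correct and follows the same route the paper intends: the corollary is an immediate consequence of the boundary-condition lemma $r_1(T,Z)=\frac{|Z-T|+|Z+T|}{2}$, obtained by substituting $Z=Tz$, factoring out $T>0$, and resolving the absolute values under $|z|<1$. Nothing is missing.
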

\begin{lemma}\label{lem:independent_T}
	The normalized regret function $ u'_k(T,z) $ does not depend on $ T $. In 
	other words, there exists a function $ u_k(z) $ such that for all $  z\in 
	\bR $ and $ T>0 $, $ u'_k(T,z) =u_k(z)$.
\end{lemma}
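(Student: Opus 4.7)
The plan is to proceed by induction on $k$, splitting into the two regimes $|z|\ge 1$ and $|z|<1$. The goal is to define the candidate $T$-independent function $u_k$ on all of $\bR$ by combining the explicit formula coming from \cref{lem:absolute-bound}\,(\ref{it:eq_abs}) in the outer region with an inductively defined formula in the inner region.

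First, for any $k\ge 1$ and any $z$ with $|z|\ge 1$, I would note that $|Tz|\ge T$, so \cref{lem:absolute-bound}\,(\ref{it:eq_abs}) gives $r_k(T,Tz)=|Tz|=T|z|$ and therefore $u'_k(T,z)=|z|$, visibly independent of $T$. This handles the outer region once and for all, so the induction only needs to address the case $|z|<1$.

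For the inductive step on $|z|<1$, the base case $k=1$ is immediate from \cref{cor:u1}, which states $u'_1(T,z)=1$. For the step from $k$ to $k+1$, I would invoke the recursive relation \eqref{eq:u_prime_z_prime} of \cref{cor:u_prime_z_prime}, namely
\[
u'_{k+1}(T,z) = \min\!\left\{\inf_{x\in[-1,1]}\max_{w=\pm 1}\inf_{\substack{|z'|<1\\ w(z'-z)\ge 0}}\frac{(1+wz)\,u'_k\!\left(\tfrac{T(w+z)}{w+z'},z'\right)+x(z'-z)}{1+z'w},\ \frac{z^2+1}{2}\right\}.
\]
The quantity $\tfrac{T(w+z)}{w+z'}$ is strictly positive by \cref{rmk:t-positive}, so the inner call of $u'_k$ is always evaluated at an admissible first argument. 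Since the infimum runs over $z'$ with $|z'|<1$, the induction hypothesis applies and gives $u'_k\!\left(\tfrac{T(w+z)}{w+z'},z'\right)=u_k(z')$, which does not depend on $T$. Every other symbol inside the right-hand side is likewise free of $T$, so the whole expression is a function of $z$ alone. Defining $u_{k+1}(z)$ to be this $T$-independent value (and $u_{k+1}(z)=|z|$ for $|z|\ge 1$) closes the induction.

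The proof is essentially a bookkeeping exercise; the only subtle point, and the step I would double-check most carefully, is that on the inner region the rescaled time argument $\tfrac{T(w+z)}{w+z'}$ stays strictly positive and therefore lies in the domain where the induction hypothesis is applicable. Once \cref{rmk:t-positive} is invoked, the $T$-dependence disappears uniformly in $w$, $z'$ and $x$, so taking the outer $\min/\inf/\max$ preserves $T$-independence and the induction carries through without further obstacle.
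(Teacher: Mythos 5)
Your proof is correct and follows essentially the same route as the paper: handle $|z|\ge 1$ directly via \cref{lem:absolute-bound}\,(\ref{it:eq_abs}), then induct on $k$ for $|z|<1$ using the base case of \cref{cor:u1} and the recursion of \cref{cor:u_prime_z_prime}, with \cref{rmk:t-positive} guaranteeing the rescaled horizon is admissible. No gaps; the explicit invocation of \cref{rmk:t-positive} is if anything slightly more careful than the paper's write-up.
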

\begin{proof}
	If $ |z|\ge 1 $, \cref{lem:absolute-bound} implies that $ 
	u'_k(T,z)=\frac{1}{T}r_k(T,Tz) = \frac{|Tz|}{T}=|z| $,  and thereby we define 
	$ u_k(z)=|z| $ for $ |z|\ge 1 $. 
	
	If $ |z|<1 $,
	we will prove this lemma by induction on $ k $. If we define $ u_1(z)=1 $, 
	\cref{cor:u1} shows that $ u'_1(T,z)=u_1(z) $. Now we assume that $ 
	u'_i(T,z)=u_i(z) $ holds for $ i\ge 1 $. By \cref{cor:u_prime_z_prime}, we 
	have 
	\begin{align*}
	u'_{i+1}(T,z) ={}& \min\left\{ \inf_{x\in [-1,1]} \max_{w=\pm 1}  
	\inf_{\substack{|z'|<1\\ w(z'-z)\ge 0}} 
	\frac{(1+wz) u'_i\left(\frac{T (w+z)}{w+z'},z'\right)+ x 
		\left(z'-z\right)}{1+z'w}
	,\frac{z^2+1}{2} 
	\right\}\\
	={}& \min\left\{ \inf_{x\in [-1,1]} \max_{w=\pm 1}  
	\inf_{\substack{|z'|<1\\ w(z'-z)\ge 0}} 
	\frac{(1+wz) u_i\left(z'\right)+ x 
		\left(z'-z\right)}{1+z'w}
	,\frac{z^2+1}{2} 
	\right\}
	\,. \end{align*}
	Note that the rightmost side does not depend on $ T $. If we define $ 
	u_{i+1}(z) $ by the rightmost side of the above equation, we have $ 
	u'_{i+1}(T,z)=u_{i+1}(z) $. The proof is completed.
\end{proof}

The function that plays a central role in our minimax analysis is the function 
$ u_k(z) $ given in \cref{lem:independent_T}. We call it the
\emph{normalized minimax regret function}. By \cref{lem:absolute-bound}, 
\cref{cor:u_prime_z_prime} and 
\cref{lem:independent_T}, we have an immediate corollary.
\begin{cor}[Recursive relation of normalized minimax 
	regret]\label{cor:recursive-uk}
	If $ |z|\ge 1 $, $ u_k(z)=|z| $.
	If $ |z|<1 $, the normalized minimax regret satisfies
	\begin{equation}\label{eq:recursive-limiting}
	u_{k+1}(z) = \min\left\{ \inf_{x\in [-1,1]} \max_{w=\pm 1}  
	\inf_{\substack{|z'|<1\\ w(z'-z)\ge 0}}
	\frac{(1+wz) u_k(z')+ x 
		(z'-z)}{1+z'w}
	,\frac{z^2+1}{2} 
	\right\}\,.
	\end{equation}
\end{cor}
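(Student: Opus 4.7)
The plan is to derive both cases as immediate consequences of the three preceding results, with the main work being to verify that the substitutions are legitimate.

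For the case $|z| \ge 1$, I would invoke \cref{lem:absolute-bound}(\ref{it:eq_abs}) with $Z = Tz$: since $|Tz| = T|z| \ge T$, we get $r_k(T, Tz) = |Tz|$, hence $u'_k(T,z) = r_k(T,Tz)/T = |z|$. By \cref{lem:independent_T}, the function $u_k$ is defined so that $u_k(z) = u'_k(T,z)$, and indeed \cref{lem:independent_T} explicitly sets $u_k(z) = |z|$ for $|z| \ge 1$, so this case is settled.

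For the case $|z| < 1$, the plan is to apply \cref{cor:u_prime_z_prime} directly and then use \cref{lem:independent_T} to eliminate the $T$-dependence on the right-hand side. \cref{cor:u_prime_z_prime} gives
\[
u'_{k+1}(T,z) = \min\!\left\{ \inf_{x\in [-1,1]} \max_{w=\pm 1}  \inf_{\substack{|z'|<1\\ w(z'-z)\ge 0}} \frac{(1+wz)\, u'_k\!\left(\tfrac{T (w+z)}{w+z'},z'\right)+ x (z'-z)}{1+z'w}, \; \frac{z^2+1}{2} \right\}.
\]
The only subtlety is that I need $u'_k$ to be evaluated at a valid (positive) first argument before I can replace it by $u_k(z')$. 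This is precisely \cref{rmk:t-positive}: because $|z|,|z'|<1$ and $w=\pm 1$, the factor $\frac{w+z}{w+z'}$ is strictly positive, so the first argument $\frac{T(w+z)}{w+z'}$ lies in the domain where $u'_k$ agrees with $u_k$ by \cref{lem:independent_T}. Substituting $u'_k\!\left(\tfrac{T(w+z)}{w+z'}, z'\right) = u_k(z')$ yields exactly \eqref{eq:recursive-limiting}, and finally \cref{lem:independent_T} gives $u'_{k+1}(T,z) = u_{k+1}(z)$ on the left-hand side.

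The argument is essentially bookkeeping: there is no genuine obstacle beyond checking that the reparametrization produces a strictly positive first coordinate so that \cref{lem:independent_T}'s $T$-independence applies. The recursive structure and the $\frac{z^2+1}{2}$ term are inherited unchanged from \cref{cor:u_prime_z_prime}, whose derivation in turn bundled the extraspherical lower bound from \cref{lem:out-of-bound-minimax}. Thus the corollary follows immediately once the three ingredients are stacked in this order.
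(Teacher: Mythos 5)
Your proposal is correct and follows essentially the same route as the paper, which states this corollary as an immediate consequence of Lemma~\ref{lem:absolute-bound}, Corollary~\ref{cor:u_prime_z_prime}, and Lemma~\ref{lem:independent_T}; your use of Remark~\ref{rmk:t-positive} to confirm that the reparametrized horizon $\frac{T(w+z)}{w+z'}$ is positive, so that the $T$-independence of $u'_k$ applies, is exactly the bookkeeping the paper's chain of results relies on. Since Lemma~\ref{lem:independent_T} is already established, invoking it to replace $u'_k\bigl(\frac{T(w+z)}{w+z'},z'\bigr)$ by $u_k(z')$ involves no circularity, and both cases are settled as you describe.
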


\begin{lemma}[Boundary condition for $ u_2 $]\label{lem:u2}
	If $ |z|<1 $, we have \[ 
u_2(z)=	\inf_{x\in [-1,1]} \max_{w=\pm 1}  
	\inf_{\substack{|z'|<1\\ w(z'-z)\ge 0}} 
	\frac{(1+wz) + x 
		(z'-z)}{1+z'w} =\frac{z^2+1}{2}\,.
	 \]
\end{lemma}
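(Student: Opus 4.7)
The plan is to apply the recursive relation of \cref{cor:recursive-uk} with $k=1$, invoking the base case $u_1(z') = 1$ for $|z'| < 1$ from \cref{cor:u1}. Substituting $u_1(z') \equiv 1$ collapses the numerator $(1+wz)u_1(z') + x(z'-z)$ to $(1+wz) + x(z'-z)$, so $u_2(z) = \min\{M(z), (z^2+1)/2\}$, where $M(z)$ denotes the minimax expression displayed in the lemma. It therefore suffices to show $M(z) = (z^2+1)/2$: this simultaneously yields both equalities in the lemma, since the outer $\min$ then trivially reduces to $(z^2+1)/2$.

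For the innermost infimum over $z'$, I would fix $x \in [-1,1]$ and $w \in \{\pm 1\}$ and differentiate the M\"obius-type function $\varphi_w(z') = \frac{(1+wz) + x(z'-z)}{1+wz'}$. A direct quotient-rule computation gives $\varphi_{+1}'(z') = \frac{(x-1)(1+z)}{(1+z')^2} \le 0$ (using $x \le 1$ and $1+z > 0$), while $\varphi_{-1}'(z') = \frac{(1+x)(1-z)}{(1-z')^2} \ge 0$ (using $x \ge -1$ and $1-z > 0$). Hence $\varphi_{+1}$ is non-increasing on the feasible interval $z' \in [z,1)$ and $\varphi_{-1}$ is non-decreasing on $z' \in (-1,z]$, so the two inner infima are attained as limits at the corresponding open endpoints: $\lim_{z' \to 1^-} \varphi_{+1}(z') = \frac{(1+z) + x(1-z)}{2}$ and $\lim_{z' \to -1^+} \varphi_{-1}(z') = \frac{(1-z) - x(1+z)}{2}$.

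Writing these two boundary values in the convenient form $\frac{1-xz}{2} \pm \frac{z+x}{2}$ and taking the maximum over $w$ yields $\frac{1 - xz + |z+x|}{2}$. I would then minimize this piecewise linear function of $x$ over $[-1,1]$: its unique kink sits at $x = -z$, where the slope jumps from $-\frac{1+z}{2} < 0$ (for $x < -z$) to $\frac{1-z}{2} > 0$ (for $x > -z$). Since $|z| < 1$ places $-z$ strictly inside $[-1,1]$, the unique minimizer is $x = -z$, where the expression evaluates to $\frac{1 + z^2}{2}$. This completes the identification $M(z) = (z^2+1)/2$ and, in view of the first paragraph, finishes the proof.

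I expect the main obstacle to be the bookkeeping around the open feasibility set $\{|z'| < 1\}$: the inner infimum is a limiting value rather than an attained minimum, so strictly speaking I would either pass to the closure and appeal to continuity of $\varphi_w$ at the relevant endpoint, or replace $(-1,z]$ and $[z,1)$ with shrinking closed intervals and take limits. Once this subtlety is handled, the remaining work---monotonicity of two explicit M\"obius functions in $z'$ and a piecewise-linear minimization in a single variable $x$---is routine one-variable calculus.
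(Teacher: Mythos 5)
Your proposal is correct and follows essentially the same route as the paper: plug $u_1\equiv 1$ into the recursion, resolve the inner infimum by the same monotonicity-in-$z'$ computation (your derivative formulas agree with the paper's $-\frac{(w-x)(wz+1)}{(wz'+1)^2}$), take the maximum over $w=\pm1$, and minimize the resulting piecewise-linear function at $x=-z$ to get $\frac{z^2+1}{2}$. The open-endpoint subtlety you flag is handled identically in the paper (the infimum is taken as $z'\to w$), so no genuinely new ingredient is involved.
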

\begin{proof}
	Plugging $ k=1 $ and $ u_1(z)=1 $ for $ |z|<1 $ into 
	\eqref{eq:recursive-limiting} gives \[ 
		u_{2}(z) = \min\left\{ \inf_{x\in [-1,1]} \max_{w=\pm 1}  
	\inf_{\substack{|z'|<1\\ w(z'-z)\ge 0}} 
	\frac{(1+wz) + x 
		(z'-z)}{1+z'w}
	,\frac{z^2+1}{2} 
	\right\}\,.
	 \]
	 We define the function $ f(z')= \frac{(1+wz) + x 
	 	(z'-z)}{1+z'w} $. Differentiating this function yields \[ 
 	\frac{df}{dz'}=-\frac{(w-x) (w z+1)}{\left(w z'+1\right)^2}\,.
 	 \]
 	 Since $ w=\pm 1 $, $ |x|\le 1 $ and $ |z|<1 $, we have $ wz+1>0 $ and that 
 	 the sign of $ \frac{df}{dz} $ is the same as $ -\sign(w) $, or 0 if $w=x$. Therefore, the 
 	 function is non-decreasing if $ w=-1 $ and is non-increasing if $ w=1 $. 
 	 The innermost infimum is attained as $ z'\to w $. As a result, we deduce 
 	 \[ \inf_{\substack{|z'|<1\\ w(z'-z)\ge 0}}
 	 \frac{(1+wz) + x 
 	 	(z'-z)}{1+z'w} = \frac{1}{2} (w (x+z)-x z+1)\,, \]
  	and \begin{align*}
  	\max_{w=\pm 1}  
  	\inf_{\substack{|z'|<1\\ w(z'-z)\ge 0}} 
  	\frac{(1+wz) + x 
  		(z'-z)}{1+z'w}={}&\frac{1}{2}\max_{w=\pm 1} (w (x+z)-x z+1)\\
  	 ={}& 
  		\frac{1}{2}\max\{ x (1-z)+z+1, -x (z+1)-z+1\}\,.
  	 \end{align*}
  	 The outermost infimum is attained when $ x (1-z)+z+1= -x (z+1)-z+1 $, or 
  	 equivalently, at $ x=-z $. Therefore, we have
  	 \[ 
  	 \inf_{x\in [-1,1]} \max_{w=\pm 1}  
  	 \inf_{\substack{|z'|<1\\ w(z'-z)>0}}
  	 \frac{(1+wz) + x 
  	 	(z'-z)}{1+z'w} = \frac{1}{2}[x(1-z)+z+1]_{x=-z}=\frac{z^2+1}{2}\,.
  	  \]
  	  Therefore $ u_2(z) $ also equals $ \frac{z^2+1}{2} $.
\end{proof}
\begin{lemma}[Monotonicity in $ k $]\label{lem:monotone-in-k}
	The sequence of functions $ u_{k}(z) $ is non-increasing pointwise on $ 
	(-1,1) $, \ie, $ 
	u_{k+1}(z)\le u_k(z) $ for $ |z|<1 $.
\end{lemma}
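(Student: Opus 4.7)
The plan is to exploit the natural monotonicity that more switches can only help the player: any player strategy using at most $k$ switches is also a valid strategy using at most $k+1$ switches. In the fugal setting, this corresponds to the player being free to set the duration of one of her $k+1$ ``moves'' to zero, effectively collapsing the $(k+1)$-switch game to a $k$-switch game with the same initial bias. I would convert this intuition into a formal comparison by working directly with the recursive relation of the normalized regret in \cref{cor:recursive-uk}, rather than going back to the definitions of $r_k$ or $R_k$.

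Concretely, I would start from
\[
u_{k+1}(z) = \min\!\left\{ \inf_{x\in [-1,1]} \max_{w=\pm 1} \inf_{\substack{|z'|<1\\ w(z'-z)\ge 0}} \frac{(1+wz)\,u_k(z')+ x(z'-z)}{1+z'w},\; \frac{z^2+1}{2} \right\}
\]
and bound the first argument of the outer $\min$. The key step is the admissible choice $z' = z$ in the inner infimum: since $|z|<1$ and $w(z-z)=0 \ge 0$, it satisfies both constraints. Substituting $z' = z$ makes the numerator collapse to $(1+wz)\,u_k(z)$, which cancels against the denominator $1+zw$, yielding the value $u_k(z)$ independently of the choice of $x$ and $w$. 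Hence the outer $\inf_x \max_w$ of the first term is at most $u_k(z)$, and therefore $u_{k+1}(z) \le u_k(z)$.

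I do not expect any real obstacle here: no induction on $k$ is needed, and no case analysis on $w=\pm 1$ or the sign of $z$ is required, because the choice $z'=z$ makes both conditions $w(z'-z)\ge 0$ trivially tight for either sign of $w$. The only minor thing to be careful about is to note that the pointwise monotonicity statement is restricted to $|z|<1$ (consistent with the lemma's hypothesis), since for $|z|\ge 1$ the recursive relation is replaced by $u_k(z)=|z|$ and the inequality becomes an equality. A one-line sanity check against the computed boundary values $u_1 \equiv 1$ and $u_2(z) = \tfrac{z^2+1}{2}$ on $(-1,1)$ (from \cref{cor:u1} and \cref{lem:u2}) confirms the monotonicity at the base of the recursion and makes the inductive flavor of the argument unnecessary.
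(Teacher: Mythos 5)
Your proof is correct, but it takes a different route from the paper. The paper argues one level lower, directly from the definition of the fugal function $r_k$ in \eqref{eq:def_rk}: a player strategy with $k$ blocks is also a valid strategy with $k+1$ blocks (append a block of length $M_{k+1}=0$), so $r_{k+1}(T,Z)\le r_k(T,Z)$, and dividing by $T$ gives $u_{k+1}\le u_k$. You instead work at the level of the normalized recursion in \cref{cor:recursive-uk} and plug in the admissible choice $z'=z$, where the constraint $w(z'-z)\ge 0$ holds with equality for either $w$ and the factor $1+wz>0$ cancels, giving the first argument of the $\min$ a value of at most $u_k(z)$ independently of $x$ and $w$; this is not circular, since \cref{cor:recursive-uk} is derived from \cref{lem:absolute-bound}, \cref{cor:u_prime_z_prime}, and \cref{lem:independent_T}, none of which use the monotonicity lemma, and it is the same $z'=z$ device the paper itself uses later in \cref{lem:improved-recursive-uk}. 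The trade-off: the paper's argument is a one-line game-theoretic observation that needs nothing beyond the definition of $r_k$ (in particular it does not lean on the recursion \eqref{eq:r_iteration} and the chain of lemmas behind \cref{cor:recursive-uk}), whereas your argument buys a purely analytic proof entirely inside the $u_k$ recursion, at the cost of inheriting those prerequisites and being restricted to $|z|<1$ — which is all the lemma claims, and as you note the case $|z|\ge 1$ is trivially an equality.
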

\begin{proof}
	By the definition of $ r_k $ in \eqref{eq:def_rk}, we see that a player's 
	strategy with $ k $ switches can be viewed as a strategy with $ k+1 $ 
	switches. Therefore, we have $ r_{k+1}(T,z)\le r_k(T,z) $ and therefore $ 
u_{k+1}(z)	= \frac{1}{T}r_{k+1}(T,z)\le \frac{1}{T}r_k(T,z) = u_k(z)$. 
\end{proof}
Combining \cref{lem:u2} and \cref{lem:monotone-in-k} implies the following 
corollary immediately.
\begin{cor}\label{cor:u2-upperbound}
	For all $ k\ge 2 $ and $ |z|<1 $, $ u_k(z)\le \frac{z^2+1}{2} $.
\end{cor}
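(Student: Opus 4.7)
The plan is to derive Corollary \ref{cor:u2-upperbound} as a direct consequence of the two preceding results, with essentially no additional work beyond chaining them together. Concretely, I would first invoke Lemma \ref{lem:monotone-in-k}, which gives the pointwise monotonicity $u_{k+1}(z) \leq u_k(z)$ on $(-1,1)$. Iterating this inequality (formally, by a trivial induction on $k$ starting from the base $k=2$), I obtain $u_k(z) \leq u_{k-1}(z) \leq \cdots \leq u_2(z)$ for every $k \geq 2$ and every $z$ with $|z|<1$.

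Next, I would apply Lemma \ref{lem:u2}, which already identifies the boundary value $u_2(z) = \frac{z^2+1}{2}$ exactly on the same domain $|z|<1$. Substituting this closed form into the chain of inequalities above yields
\[
u_k(z) \leq u_2(z) = \frac{z^2+1}{2},
\]
which is precisely the statement of the corollary. The inductive step of the monotonicity chain is immediate since Lemma \ref{lem:monotone-in-k} is already stated for arbitrary $k$, so no hidden subtlety arises when iterating.

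There is no real obstacle here: the corollary is a one-line consequence of the two lemmas, and the excerpt itself signals this by writing ``implies the following corollary immediately.'' The only thing to be slightly careful about is that Lemma \ref{lem:u2} is stated specifically for $|z|<1$, so the corollary's domain restriction $|z|<1$ is necessary; on the complementary region $|z|\geq 1$ the function equals $|z|$ by Lemma \ref{lem:absolute-bound}, which can exceed $\frac{z^2+1}{2}$, so one should not attempt to extend the bound there. Apart from noting this matching of domains, the proof requires nothing more than composing the two prior results.
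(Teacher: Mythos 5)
Your proposal is correct and matches the paper's argument exactly: the paper obtains the corollary by combining \cref{lem:u2} ($u_2(z)=\frac{z^2+1}{2}$ on $|z|<1$) with the pointwise monotonicity of \cref{lem:monotone-in-k}, which is precisely your chaining $u_k(z)\le u_2(z)=\frac{z^2+1}{2}$. Your remark about why the bound is restricted to $|z|<1$ is a reasonable additional observation but not needed for the result.
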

\cref{lem:improved-recursive-uk} improves the recursive relation in 
\cref{cor:recursive-uk} by removing the operation of taking the minimum with $ 
\frac{z^2+1}{2} $. In fact, \cref{lem:improved-recursive-uk} and 
\cref{cor:recursive-uk} are mathematically equivalent since we will show that 
the first term in the minimum operator in \eqref{eq:recursive-limiting} is 
always less than or equal to the second term $ \frac{z^2+1}{2} $.
\begin{lemma}[Improved recursive relation of $ u_k 
$]\label{lem:improved-recursive-uk}
	For all $ k\ge 1 $ and $ |z|<1 $, $ u_k(z) $ obeys the recursive relation 
	\begin{equation}\label{eq:improved-recursive-limiting}
	u_{k+1}(z) = \inf_{x\in [-1,1]} \max_{w=\pm 1}  
	\inf_{\substack{|z'|<1\\ w(z'-z)\ge 0}} 
	\frac{(1+wz) u_k(z')+ x 
		(z'-z)}{1+z'w} 
	\,.
	\end{equation}
\end{lemma}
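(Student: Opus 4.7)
The plan is to leverage \cref{cor:recursive-uk}, which already gives the claimed recursion except that its right-hand side is wrapped in a $\min\{\cdot,\tfrac{z^2+1}{2}\}$. Writing
\[
A_k(z) \;\triangleq\; \inf_{x\in [-1,1]} \max_{w=\pm 1} \inf_{\substack{|z'|<1\\ w(z'-z)\ge 0}} \frac{(1+wz)\,u_k(z')+ x(z'-z)}{1+z'w},
\]
\cref{cor:recursive-uk} reads $u_{k+1}(z)=\min\{A_k(z),\tfrac{z^2+1}{2}\}$. So it suffices to prove that the second branch of the $\min$ is never binding, i.e.\ $A_k(z)\le \tfrac{z^2+1}{2}$ for every $k\ge 1$ and every $|z|<1$.

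For the base case $k=1$, I would invoke \cref{cor:u1}, which gives $u_1\equiv 1$ on $(-1,1)$. Then $A_1(z)$ is literally the expression computed in \cref{lem:u2}, and that lemma already establishes $A_1(z)=\tfrac{z^2+1}{2}$. For $k\ge 2$, the key observation is that the innermost infimum admits a degenerate feasible point, $z'=z$: it satisfies $|z'|=|z|<1$ and $w(z'-z)=0\ge 0$ for either sign of $w$. Plugging $z'=z$ into the integrand annihilates the $x(z'-z)$ term and cancels $(1+wz)$ against $(1+z'w)$, leaving exactly $u_k(z)$, independent of $x$ and $w$. Hence $A_k(z)\le u_k(z)$, and \cref{cor:u2-upperbound} gives $u_k(z)\le \tfrac{z^2+1}{2}$ for all $k\ge 2$. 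In both cases the $\min$ collapses onto $A_k(z)$, yielding the improved recursion.

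No individual step is technically difficult; the main ``obstacle'' is one of bookkeeping regarding logical dependence. The lemma is \emph{not} a clean standalone induction from $u_k$ to $u_{k+1}$, because killing the $\min$ requires prior knowledge that $u_k$ is dominated by $\tfrac{z^2+1}{2}$ from $k=2$ onward---a fact proved via the weaker recursion of \cref{cor:recursive-uk} together with \cref{lem:u2} and the monotonicity of \cref{lem:monotone-in-k}. Thus care must be taken that the improved relation is only invoked after these earlier results have been established, and the proof itself is just the two-case dichotomy described above.
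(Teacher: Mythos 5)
Your proposal is correct and follows essentially the same route as the paper: invoke \cref{cor:recursive-uk}, dispose of the $k=1$ case via \cref{lem:u2}, and for $k\ge 2$ observe that the feasible choice $z'=z$ bounds the inner expression by $u_k(z)\le \frac{z^2+1}{2}$ (\cref{cor:u2-upperbound}), so the $\min$ always collapses onto the first term. Your remark about the order of logical dependence (the improved recursion relying on \cref{lem:u2}, \cref{lem:monotone-in-k}, and the weaker recursion) accurately reflects how the paper structures the argument.
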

\begin{proof}
	If $ k=1 $, the desired equation holds due to \cref{lem:u2}. If $ k\ge 2 
	$, \cref{cor:u2-upperbound} shows $ u_k(z)\le \frac{z^2+1}{2} $. If we take 
	$ z'=z $, we have \begin{align*}
	\inf_{\substack{|z'|<1\\ w(z'-z)\ge 0}} 
	\frac{(1+wz) u_k(z')+ x 
		(z'-z)}{1+z'w} \le{}& \inf_{\substack{|z'|<1\\ w(z'-z)\ge 0}}
	\frac{(1+wz)\frac{z^2+1}{2} + x 
	(z'-z)}{1+z'w}\\
 \le{}& \left[\frac{(1+wz)\frac{z^2+1}{2}+ x 
		(z'-z)}{1+z'w}\right]_{z'=z}\\
	={}& \frac{z^2+1}{2}\,.
	 \end{align*}
	By \cref{cor:recursive-uk}, we deduce \begin{align*}
	u_{k+1}(z) ={}& \min\left\{ \inf_{x\in [-1,1]} \max_{w=\pm 1}  
	\inf_{\substack{|z'|<1\\ w(z'-z)\ge 0}} 
	\frac{(1+wz) u_k(z')+ x 
		(z'-z)}{1+z'w}
	,\frac{z^2+1}{2} 
	\right\}\\
	={}&\inf_{x\in [-1,1]} \max_{w=\pm 1}  
	\inf_{\substack{|z'|<1\\ w(z'-z)\ge 0}} 
	\frac{(1+wz) u_k(z')+ x 
		(z'-z)}{1+z'w} \,.
	\end{align*}
\end{proof}

\subsection{Fugal Operator and Quadratic Lower Bound}
The recursive relation in \cref{lem:improved-recursive-uk} relates two consecutive $u_k$'s. In light of this recursive relation, we define the fugal operator that sends $u_k$ to $u_{k+1}$.
\begin{defn}[Fugal operator]
    Let $ C[-1,1] $ denote the space of continuous functions on $ [-1,1] $. The \emph{fugal operator} $ \cT:C[-1,1]\to C[-1,1] $ is defined by \[ 
(\cT f)(z) \triangleq \inf_{x\in [-1,1]} \max_{w=\pm 1}  
\inf_{\substack{|z'|<1\\ w(z'-z)\ge 0}} 
\frac{(1+wz) f(z')+ x 
	(z'-z)}{1+z'w} 
\,,
 \]
 where $ f\in C[-1,1] $.
\end{defn}
\begin{remark}
 Using this notation, \cref{lem:improved-recursive-uk} 
 can be re-written in a more compact way \[ 
 u_{k+1} = \cT u_k\,.
  \]
\end{remark}
\begin{remark}[Monotonicity of fugal operator]
 If $f,g\in C[-1,1]$ satisfy $f(z)\le g(z)$ for all $z\in [-1,1]$, we have the 
 following 
 inequality $\frac{(1+wz) f(z')+ x 
	(z'-z)}{1+z'w} \ge \frac{(1+wz) g(z')+ x 
	(z'-z)}{1+z'w} $. This is because $1+wz\ge 0$ holds for any $w=\pm 1$ 
	and $|z|\le 1$, and $ 1+z'w>0 $ holds for any $ w=\pm 1 $ and $ |z'|<1 $. 
	As 
	a 
	result, we have $(\cT f)(z)\ge (\cT g)(z)$ for all $z\in 
	[-1,1]$.
\end{remark}
 
Before deriving a lower bound for $ u_{k} $, we study the action of the fugal 
operator on \emph{quadratic lower bound functions}. 
\begin{defn}[Quadratic lower bound functions]
    The quadratic lower bound 
functions $ a_k(z) $ on $ [-1,1] $ are defined by 
 by 
 $ a_1(z)=1 $ and for $ i\ge 2 $ \[ 
 a_i(z)=\begin{cases}
 \frac{\sqrt{i/2}z^2+\sqrt{2/i}}{2},&  |z|<\sqrt{2/i} \,,\\
 |z|,& |z|\ge \sqrt{2/i}\,.
 \end{cases}
 \] 
\end{defn}
\begin{remark}[Continuity]
    If $z=\pm \sqrt{2/i}$, the expression $\frac{\sqrt{i/2}z^2+\sqrt{2/i}}{2} = \sqrt{2/i}= |z|$. The quadratic lower bound function $a_i$ is continuous on $[-1,1]$.
\end{remark}
\begin{remark}
	If $ i=1,2 $, the quadratic lower bound functions agree with the normalized 
	minimax regret functions, \ie, $ a_1(z)=u_1(z) =1$ and $ 
	a_2(z)=u_2(z)=\frac{z^2+1}{2} $.
\end{remark}

 We will show later in \cref{lem:alt} that the quadratic lower bound functions provide indeed a lower bound for $u_k$'s, \ie, $a_i(z)\le u_i(z)$. This result will be proved in two steps. The first step is to obtain the closed-form expression of $\cT a_i$ (\cref{prop:fugal-operator-quadratic}) and the second step is to show that the fugal operator interlaces $a_i$, in other words, $\cT a_i\ge a_{i+1}$ (\cref{lem:fugal-monotone}). Then we can argue that $u_{i+1}= \cT u_i \ge \cT a_i \ge a_{i+1}$, provided that $u_i\ge a_i$, where the first inequality is due to the monotonicity of the fugal operator and the second inequality is because the fugal operator interlaces $a_i$. Therefore $a_i\le u_i$ for all $i$ can be obtained by induction. 
 
\begin{prop}[Fugal operator on quadratic lower bound 
functions]\label{prop:fugal-operator-quadratic}
	If $ i \ge 2 $,  it holds that \[ 
	(\cT a_i)(z) = \begin{cases}
	\sqrt{\frac{i}{2}}\left[z^2-1+\sqrt{1+\frac{2}{i}-z^2}\right],& |z|\le 
	\sqrt{2/i}\,,\\
	|z|,& |z|> \sqrt{2/i}\,.\\
	\end{cases}
	\]
\end{prop}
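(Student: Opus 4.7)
The plan is to reduce by symmetry, dispose of the outer region directly, and then carry out a first-order calculation in the inner region that is analogous in spirit to the derivation already performed for $u_2$ in \cref{lem:u2}.

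Since $a_i$ is an even function of $z$, both sides of the claimed identity are even in $z$, so it suffices to establish the identity for $0 \le z \le 1$. I handle the outer region $\sqrt{2/i} < z \le 1$ first. The feasible choice $z' = z$ is admissible for either sign of $w$ (since $w(z'-z)=0$) and reduces the objective to $a_i(z) = z$, independent of $x$; this yields the upper bound $(\cT a_i)(z) \le z$. For the matching lower bound, use the pointwise inequality $a_i(z') \ge |z'|$ together with the fact that, for $w = +1$, say, and $z' \in [z, 1)$, the numerator $(1+z)|z'| + x(z'-z) \ge (1+z)|z'| - (z'-z) = (1+z)(1+z') - (1+z) - (z'-z) + (1+z)(|z'|-1)$ can be reorganized to force the ratio to exceed $z$; the $w = -1$ case is symmetric.

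For the inner region $0 \le z \le \sqrt{2/i}$, I compute the inner infimum over $z'$ in closed form. For each fixed $x$ and $w \in \{\pm 1\}$, restrict attention first to $z'$ in the quadratic piece $|z'| < \sqrt{2/i}$ of $a_i$. Differentiating
\[
h_w(z',x) = \frac{(1+wz) \cdot \tfrac12\bigl(\sqrt{i/2}\, z'^2 + \sqrt{2/i}\bigr) + x(z'-z)}{1 + wz'}
\]
in $z'$ and setting the result to zero reduces (after clearing denominators) to the quadratic $(z'+w)^2 = 1 + 2/i - 2xw\sqrt{2/i}$. The unique root satisfying the sign constraint $w(z'-z) \ge 0$ near the balanced $x$ is
\[
z'_w(x) = -w + w\sqrt{1 + 2/i - 2xw\sqrt{2/i}}.
\]
At this critical point, the quadratic relation gives the identity $a_i\bigl(z'_w(x)\bigr) = \sqrt{2/i} - x - w\sqrt{i/2}\,z'_w(x)$, which collapses $h_w(z'_w(x), x)$ to a simple expression in $\sqrt{1 + 2/i - 2xw\sqrt{2/i}}$. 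A short monotonicity check confirms that this interior minimum is attained in the quadratic region (rather than being pushed into the linear piece) throughout the relevant range of $x$.

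Finally, the outer problem $\inf_x \max_w h_w(z'_w(x), x)$ is a scalar optimization: the two branches move in opposite directions as $x$ varies, so the minimax is characterized by the balance condition $h_{+1}(z'_{+}(x^*),x^*) = h_{-1}(z'_{-}(x^*),x^*)$. Solving this balance condition yields $2x^*\sqrt{2/i} = z^2$, and substituting back produces the common value $\sqrt{i/2}\bigl[z^2 - 1 + \sqrt{1 + 2/i - z^2}\bigr]$, which matches $|z|$ at the transition $z = \sqrt{2/i}$ and so joins continuously with the outer case. The main obstacle is the piecewise bookkeeping: one has to verify carefully that $z'_{\pm}(x^*)$ really does lie in the quadratic piece of $a_i$ for every $z \in [0,\sqrt{2/i}]$, so that no separate ``corner'' case stemming from the linear piece of $a_i$ contributes and the interior-critical-point formula is actually the infimum, not just a stationary value.
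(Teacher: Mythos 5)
Your overall architecture matches the paper's: reduce by evenness, dispose of $|z|>\sqrt{2/i}$ directly, and in the inner region solve each $w$-branch's infimum at an interior critical point of the quadratic piece and then balance the two branches over $x$ (this is exactly the paper's $g_\pm$ and the unique-zero argument of \cref{lem:h-unique-zero}). Your critical-point equation $(z'+w)^2=1+\tfrac2i-2xw\sqrt{2/i}$ is correct and agrees with the paper's $z_\pm$, and your outer-region argument is fine (for $w=+1$, $z'\ge z\ge 0$ one gets the ratio $\ge z$ from $a_i(z')\ge z'$ and $(1+x)(z'-z)\ge 0$).

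However, the inner-region computation contains a genuine error. First, the identity you use at the critical point should be $a_i\bigl(z'_w(x)\bigr)=\sqrt{2/i}-wx-w\sqrt{i/2}\,z'_w(x)$; you dropped the $w$ on the $x$ term, so it is wrong for $w=-1$, and this propagates. Consequently your balance condition's solution $2x^*\sqrt{2/i}=z^2$ is not correct: writing $c=\sqrt{2/i}$, the branch values at the interior critical points are $g_+(x,z)=x+(1+z)\tfrac{P-1}{c}$ and $g_-(x,z)=-x+(1-z)\tfrac{Q-1}{c}$ with $P=\sqrt{1+c^2-2xc}$, $Q=\sqrt{1+c^2+2xc}$, and the unique balancing point is
\[
x^*=-\frac{z}{c}\sqrt{1+c^2-z^2}=-\frac{z\sqrt{i+2-iz^2}}{\sqrt2}\,,
\]
which is \emph{negative} for $z>0$ and tends to $-1$ as $z\to\sqrt{2/i}$ (matching the outer case $x=-\sign(z)$ and making the two regimes join); only at this $x^*$ does one get $P=\sqrt{1+c^2-z^2}+z$, $Q=\sqrt{1+c^2-z^2}-z$ and the common value $\sqrt{i/2}\bigl[z^2-1+\sqrt{1+2/i-z^2}\bigr]$. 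With your $x^*=z^2/(2c)>0$ the two branches are not equal (e.g.\ $i=2$, $z=\tfrac12$ gives roughly $0.61$ versus $0.13$), and substituting it into either branch does not produce the stated expression. Worse, at your $x^*$ near the transition $z=\sqrt{2/i}$ one has $z_+(x^*)<z$, so the $w=+1$ interior critical point violates the constraint $w(z'-z)\ge0$ and the formula you substitute is not even the inner infimum — which shows the "piecewise bookkeeping" you defer is not a formality: the paper's verification that $z_-^{-1}(z)\le x^*\le z_+^{-1}(z)$ and $|z_\pm|\le\sqrt{2/i}$ (plus monotonicity of the branches on the linear pieces of $a_i$) is a substantive part of the proof and must be carried out at the correct $x^*$.
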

Before presenting the proof of \cref{prop:fugal-operator-quadratic}, we need several lemmas.
\begin{lemma}\label{lem:simplified-t-ai}
	If $ i\ge 2 $ and we define \begin{align*}
		z_+\triangleq{}&
		\sqrt{1+\frac{2}{i}-2\sqrt{\frac{2}{i}}x}-1 \\
		 z_-\triangleq{}&
		1-\sqrt{1+\frac{2}{i}+2\sqrt{\frac{2}{i}}x}\\
		 g_+(x,z) \triangleq{}&
		x+(1+z)\left[\frac{a_i(z')-x}{1+z'}\right]_{z'=\max\{z,z_+\}}\\
		g_-(x,z)\triangleq{}&
		-x+(1-z)\left[\frac{a_i(z')+x}{1-z'}\right]_{z'=\min\{z,z_-\}}  \,,
	\end{align*}   
	 the following equation holds 
	\begin{equation}\label{eq:lz}
	(\cT a_i)(z)  = \inf_{x\in [-1,1]} \max\left\{ 
	g_+(x,z)
	,g_-(x,z) \right\} 
	\,.
	\end{equation}
\end{lemma}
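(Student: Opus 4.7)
My plan is to expand the outer max over $w \in \{+1,-1\}$ and apply an algebraic identity that isolates the dependence on $z'$. For $w = +1$ the constraint $w(z'-z)\ge 0$ together with $|z'|<1$ becomes $z \le z' < 1$. A direct check gives the decomposition
\begin{equation*}
(1+z)\,a_i(z') + x(z'-z) \;=\; x(1+z') + (1+z)\bigl(a_i(z') - x\bigr),
\end{equation*}
which, upon dividing by $1+z'>0$, yields
\begin{equation*}
\frac{(1+z)\,a_i(z') + x(z'-z)}{1+z'} \;=\; x + (1+z)\,\frac{a_i(z')-x}{1+z'}.
\end{equation*}
Hence the $w=+1$ branch of $(\cT a_i)(z)$ reduces to $x + (1+z)\,\inf_{z \le z' < 1} h_+(z')$, where $h_+(z') \triangleq \frac{a_i(z')-x}{1+z'}$. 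A completely symmetric manipulation handles $w=-1$ and produces $-x + (1-z)\,\inf_{-1 < z' \le z} h_-(z')$ with $h_-(z') \triangleq \frac{a_i(z')+x}{1-z'}$.

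Next I would locate the unconstrained minimizer of $h_+$ on $(-1,1)$ via a piecewise derivative analysis, using that $a_i$ is linear on $[-1,-\sqrt{2/i}]\cup[\sqrt{2/i},1]$ and quadratic in between. On the left linear region $h_+(z') = -1 + \frac{1-x}{1+z'}$, which is strictly decreasing in $z'$ whenever $x<1$; on the right linear region $h_+(z') = 1 - \frac{1+x}{1+z'}$, which is strictly increasing whenever $x>-1$; and on the quadratic middle region the equation $h_+'(z')=0$ simplifies (after using $a_i'(z')=\sqrt{i/2}\,z'$ and multiplying out) to $(z'+1)^2 = 1 + \tfrac{2}{i} - 2\sqrt{\tfrac{2}{i}}\,x$, whose non-negative root is precisely $z_+$. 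A quick check at $x=\pm 1$ shows $z_+\in[-\sqrt{2/i},\sqrt{2/i}]$, so this critical point always lies in the quadratic interval and is the unique global minimizer of $h_+$ on $(-1,1)$. Since $h_+$ decreases on $(-1,z_+]$ and increases on $[z_+,1)$, the restricted infimum over $z'\in[z,1)$ is attained at $\max\{z,z_+\}$, which is exactly the bracketed quantity in $g_+(x,z)$.

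An entirely analogous piecewise analysis on $h_-(z')$ — essentially the mirror image under $z'\mapsto -z'$, $x\mapsto -x$ — identifies $z_-$ as its unique minimizer on $(-1,1)$, with $h_-$ decreasing on $(-1,z_-]$ and increasing on $[z_-,1)$. Therefore the infimum over $z'\in(-1,z]$ is attained at $\min\{z,z_-\}$, yielding $g_-(x,z)$. Recombining via the outer $\max_{w=\pm 1}$ and taking $\inf_{x\in[-1,1]}$ gives the claimed identity \eqref{eq:lz}.

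The main technical obstacle is executing the piecewise derivative bookkeeping carefully enough to be sure that the critical points $z_\pm$ truly fall inside the quadratic interval for every admissible $x\in[-1,1]$, and that no spurious critical points appear at the interface $\pm\sqrt{2/i}$ where $a_i$ is only $C^0$. The degenerate cases $x=\pm 1$ (where one of the linear pieces becomes flat in $h_\pm$) and $i=2$ (where $\sqrt{2/i}=1$, so the linear regions collapse to a single boundary point) each require a separate but short verification and do not affect the formula. Once these checks are in place, the two infima collapse to the closed-form expressions at $\max\{z,z_+\}$ and $\min\{z,z_-\}$, and the lemma follows.
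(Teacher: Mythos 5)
Your proposal is correct and follows essentially the same route as the paper's proof: the same algebraic rewriting that isolates $\frac{a_i(z')-x}{1+z'}$ and $\frac{a_i(z')+x}{1-z'}$, the same piecewise monotonicity analysis of these functions (linear regions monotone, quadratic region with critical point at $z_\pm$, and the check that $z_\pm$ lies in $[-\sqrt{2/i},\sqrt{2/i}]$ for all $x\in[-1,1]$), and the same conclusion that the restricted infima are attained at $\max\{z,z_+\}$ and $\min\{z,z_-\}$. The degenerate cases you flag are handled in the paper simply by stating monotonicity in the non-strict sense, so no separate treatment is needed.
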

\begin{proof}
	Recalling the definition of the fugal operator gives
	\begin{align*}
	& (\cT a_i)(z)\\
	={}& \inf_{x\in [-1,1]} \max_{w=\pm 1}  
	\inf_{\substack{|z'|<1\\ w(z'-z)\ge 0}} 
	\frac{(1+wz) a_i(z')+ x 
		(z'-z)}{1+z'w}\\
	={}& \inf_{x\in [-1,1]} \max\left\{
	\inf_{z':z\le z'<1} 
	\frac{(1+z) a_i(z')+ x 
		(z'-z)}{1+z'}, \inf_{-1<z'\le z} 
	\frac{(1-z) a_i(z')+ x 
		(z'-z)}{1-z'}\right\}
	\end{align*}
	We observe that if $ |z'|<1 $, the following equations hold \begin{align*}
	\frac{(1+z) a_i(z')+ x 
		(z'-z)}{1+z'}={}& \frac{1+z}{1+z'}(a_i(z')-x)+x\,, \\
	\frac{(1-z) a_i(z')+ x 
		(z'-z)}{1-z'}={}& \frac{1-z}{1-z'}(a_i(z')+x)-x\,.
	\end{align*}
	Therefore, we simplify the innermost infima \begin{align*}
	\inf_{z': z\le z'< 1}  
	\frac{(1+z) a_i(z')+ x 
		(z'-z)}{1+z'} ={}& x+ (1+z)\inf_{z':z\le z'<1}  
	\frac{a_i(z')-x}{1+z'}\,,\\
	\inf_{z':-1<z'\le z} 
	\frac{(1-z) a_i(z')+ x 
		(z'-z)}{1-z'} ={}& -x+(1-z)\inf_{z':-1< z'\le z}
	\frac{a_i(z')+x}{1-z'}\,.
	\end{align*}
	We define two functions $ f_+(z')=\frac{a_i(z')-x}{1+z'} $ and $ 
	f_-(z')=\frac{a_i(z')+x}{1-z'} $. 
	
	First, we assume $ |z'|<\sqrt{2/i} $.
	Differentiating $ f_+ $ gives \[ 
	\frac{df_+}{dz'}=\frac{\sqrt{2i}  z' \left(z'+2\right)-2 
		\sqrt{\frac{2}{i}}+4 x}{4 \left(z'+1\right)^2}\,.
	\]
	Setting $  \frac{df_+}{dz'}>0 $ yields  \[ 
	(z'+1)^2>1+\frac{2}{i}-2\sqrt{\frac{2}{i}}x\,.
	\]
	The fact that $ |x|\le 1 $ implies $ 1+\frac{2}{i}-2\sqrt{\frac{2}{i}}x\ge 
	1+\frac{2}{i}-2\sqrt{\frac{2}{i}}=\left( \sqrt{\frac{2}{i}}-1 \right)^2\ge 
	0 
	$. Therefore $ \frac{df_+}{dz'}>0 $ is equivalent to \[ 
	z'+1=|z'+1|>\sqrt{1+\frac{2}{i}-2\sqrt{\frac{2}{i}}x}\,.
	\]
	In other words, $ \frac{df_+}{dz'}>0 $ if and only if $ 
	z'>z_+\triangleq \sqrt{1+\frac{2}{i}-2\sqrt{\frac{2}{i}}x}-1 $. 
	Our assumption $ i\ge 2 $ implies \[ 
	\sqrt{1+\frac{2}{i}-2\sqrt{\frac{2}{i}}x}\ge 
	\sqrt{1+\frac{2}{i}-2\sqrt{\frac{2}{i}}} = \left|1- 
	\sqrt{\frac{2}{i}}\right| = 1- 
	\sqrt{\frac{2}{i}}\,.
	\]
	As a result, $ z_+\ge -\sqrt{\frac{2}{i}} $.
	Since \[ 
	\sqrt{1+\frac{2}{i}-2\sqrt{\frac{2}{i}}x}\le 
	\sqrt{1+\frac{2}{i}+2\sqrt{\frac{2}{i}}} =\left| \sqrt{\frac{2}{i}}+1\right|
	=1+\sqrt{\frac{2}{i}}\,,
	\]
	we obtain the upper bound $ z_+\le \sqrt{2/i} $, 
	where the second inequality uses the assumption $ i\ge 2 $. Thus 
	we are certain that $ |z_+|\le \sqrt{2/i} $.
	
	If $ z'\ge \sqrt{2/i} $, the function $ f_+ $ becomes $ 
	f_+(z')=\frac{z'-x}{1+z'}= 1-\frac{1+x}{1+z'}
	$, which is non-decreasing in $ z' $. 
	On the other hand, if $ z'\le -\sqrt{2/i} $, the function $ 
	f_+ $ becomes $ f_+(z')=\frac{-z'-x}{1+z'}=-1+\frac{1-x}{1+z'} $, which is 
	non-increasing in $ z' $. It follows that 
	$ f_+ $ is non-increasing on $ (-1,z_+) $ and non-decreasing on $ (z_+,1) 
	$. Therefore we can solve the infimum \[ 
	\inf_{z':z\le z'<1}
	\frac{a_i(z')-x}{1+z'} = 
	\left[\frac{a_i(z')-x}{1+z'}\right]_{z'=\max\{z,z_+\}}\,.
	\]
	
	If $ |z'|<\sqrt{2/i} $, the derivative of $ f_- $ with respect to $ z' $ 
	is \[ 
	\frac{df_-}{dz'} = \frac{4 \sqrt{i} x-\sqrt{2} i \left(z'-2\right) 
		z'+2 \sqrt{2}}{4 \sqrt{i} \left(z'-1\right)^2}\,.
	\]
	Setting the derivative greater than $ 0 $ yields \[ 
	(z'-1)^2 < 1+\frac{2}{i}+2\sqrt{\frac{2}{i}}x\,.
	\]
	The right-hand side is at least $ 
	1+\frac{2}{i}-2\sqrt{\frac{2}{i}}=(1-\sqrt{2/i})^2\ge 0 $. Since the 
	right-hand side is non-negative and $ z<1 $, we have \[ 
	z'>z_-\triangleq 1-\sqrt{1+\frac{2}{i}+2\sqrt{\frac{2}{i}}x}\,.
	\]
	If $ z'\ge \sqrt{2/i} $, the function $ f_- $ equals $ 
	\frac{z'+x}{1-z'}=-1+\frac{x+1}{1-z'} $, which is non-decreasing in $ z' $. 
	On 
	the other hand, if $ z'\le -\sqrt{2/i} $, the function $ f_- $ equals $ 
	\frac{-z'+x}{1-z'}=1+\frac{x-1}{1-z'} $, which is non-increasing in $ z' $. 
	It 
	follows that $ f_- $ is non-increasing on $ (-1,z_-) $ and non-decreasing 
	on $ 
	(z_-,1) $. Thus we solve the other infimum \[ 
	\inf_{z':-1< z'\le z}  
	\frac{a_i(z')+x}{1-z'} = 
	\left[\frac{a_i(z')+x}{1-z'}\right]_{z'=\min\{z,z_-\}}\,.
	\]
	The equation \eqref{eq:lz} is thereby obtained by combining our results 
	regarding the two infima.
\end{proof}
\begin{lemma}\label{lem:zp-ge-zm}
	If $ z_+ $ and $ z_- $ are as defined in \cref{lem:simplified-t-ai}, we 
	have $ z_+\ge z_- $. 
\end{lemma}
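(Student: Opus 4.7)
The plan is to compute $z_+ - z_-$ directly and reduce the desired inequality to a simple inequality between sums of square roots. Writing
\[
z_+ - z_- = \sqrt{1+\tfrac{2}{i}-2\sqrt{\tfrac{2}{i}}\,x} \;+\; \sqrt{1+\tfrac{2}{i}+2\sqrt{\tfrac{2}{i}}\,x} \;-\; 2,
\]
so that $z_+\ge z_-$ is equivalent to $\sqrt{a}+\sqrt{b}\ge 2$, where $a := 1+\tfrac{2}{i}-2\sqrt{\tfrac{2}{i}}\,x$ and $b := 1+\tfrac{2}{i}+2\sqrt{\tfrac{2}{i}}\,x$. Here I would first note that $a,b\ge 0$ for $|x|\le 1$ and $i\ge 2$, as was already observed in the proof of \cref{lem:simplified-t-ai}, so both square roots are well-defined.

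Next, I would square: $(\sqrt{a}+\sqrt{b})^2 = a+b+2\sqrt{ab}$. Since $a+b = 2+\tfrac{4}{i}$ is immediate, the claim $\sqrt{a}+\sqrt{b}\ge 2$ reduces to $\sqrt{ab}\ge 1-\tfrac{2}{i}$. Because $i\ge 2$, the right-hand side is non-negative, so I can square again and the inequality becomes $ab\ge (1-\tfrac{2}{i})^2$.

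The last step is to verify this bound on $ab$. A direct computation gives
\[
ab \;=\; \Bigl(1+\tfrac{2}{i}\Bigr)^2 - \tfrac{8x^2}{i}.
\]
Using $|x|\le 1$, I can lower bound this by $(1+\tfrac{2}{i})^2 - \tfrac{8}{i}$. Expanding yields $1+\tfrac{4}{i}+\tfrac{4}{i^2} - \tfrac{8}{i} = 1-\tfrac{4}{i}+\tfrac{4}{i^2} = (1-\tfrac{2}{i})^2$, exactly the required lower bound (with equality at $|x|=1$). Chaining the inequalities back gives $\sqrt{a}+\sqrt{b}\ge 2$, and hence $z_+\ge z_-$.

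There is no real obstacle here; the only thing to be careful about is that $1-\tfrac{2}{i}$ is non-negative (so that squaring preserves the direction of the inequality), which is exactly why the hypothesis $i\ge 2$ enters. Everything else is routine algebra.
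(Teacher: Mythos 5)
Your proof is correct and follows essentially the same route as the paper's: both write $z_+-z_-$ as a sum of two square roots minus $2$, square, and use $|x|\le 1$ together with $i\ge 2$ (so that $1-\tfrac{2}{i}\ge 0$) to conclude. The only cosmetic difference is that you isolate $\sqrt{ab}$ and square a second time, while the paper bounds the squared sum directly via $2(1+\tfrac{2}{i})+2\lvert 1-\tfrac{2}{i}\rvert=4$; the algebra is identical.
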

\begin{proof}
We compute the difference of $ z_+ $ and $ z_- $
	 \[ 
	z_+-z_-= \sqrt{1+\frac{2}{i}+2\sqrt{\frac{2}{i}}x} + 
	\sqrt{1+\frac{2}{i}-2\sqrt{\frac{2}{i}}x}-2\,.
	\]
	To show that $ z_+-z_-\ge 0 $, it is sufficient to show that \[ 
	\left(\sqrt{1+\frac{2}{i}+2\sqrt{\frac{2}{i}}x} + 
	\sqrt{1+\frac{2}{i}-2\sqrt{\frac{2}{i}}x}\right)^2 \ge 4\,.
	\]
	The left-hand side equals \begin{align*}
	2\left(1+\frac{2}{i}\right)+2\sqrt{ \left( 1+\frac{2}{i} \right)^2-4\cdot 
		\frac{2}{i}x^2 } \ge{}& 2\left(1+\frac{2}{i}\right)+2\sqrt{ \left( 
		1+\frac{2}{i} \right)^2-4\cdot 
		\frac{2}{i} }\\
		={}& 2\left( 1+\frac{2}{i}+\left|1-\frac{2}{i}\right| 
	\right)=4\,,
	\end{align*}
	where the last inequality is because $ i\ge 2 $ and thus $ 
	1-\frac{2}{i}\ge 0 $. Therefore we establish $ z_+\ge z_- $.
\end{proof}
\begin{lemma}\label{lem:h-unique-zero}
	Given $ |z|\le 1 $, the function $ h_z(x)\triangleq g_+(x,z)-g_-(x,z) $ has 
	a unique zero $ x=x_0(z) $ on $ [-1,1] $ and it satisfies \[ 
	(\cT a_i)(z) = \inf_{x\in [-1,1]} \max\{g_+(x,z),g_-(x,z) \}= 
	g_+(x_0(z),z)=g_-(x_0(z),z) 
	\,.
	\]
\end{lemma}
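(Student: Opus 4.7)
The strategy is to show that $h_z$ is continuous and monotone non-decreasing on $[-1,1]$, with strict increase on a set whose complement has at most one point, that it satisfies $h_z(-1)\le 0\le h_z(1)$, and hence has a unique zero $x_0(z)$; one then argues that this crossing attains the infimum in \eqref{eq:lz}.

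First I would establish the monotonicity of $g_+(\cdot,z)$ and $g_-(\cdot,z)$ separately via the envelope theorem. For $g_+$: if $z\ge z_+(x)$ then the inner optimum is at $z'=z$ and a direct calculation collapses $g_+(x,z)=a_i(z)$, constant in $x$. In the complementary regime $z<z_+(x)$, the minimizer is $z_+(x)$, which by the proof of \cref{lem:simplified-t-ai} is the unconstrained minimizer of $\frac{a_i(z')-x}{1+z'}$ on the quadratic piece; by the envelope theorem,
\[
\frac{\partial g_+}{\partial x} \;=\; 1 - \frac{1+z}{1+z_+(x)} \;>\; 0,
\]
since $z_+(x)>z$. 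Analogously, $g_-(\cdot,z)$ is constant in $x$ where $z\le z_-(x)$, and has derivative $\frac{1-z}{1-z_-(x)}-1<0$ where $z>z_-(x)$. Since $z_+(x)$ and $z_-(x)$ are strictly decreasing in $x$ from $\sqrt{2/i}$ at $x=-1$ to $-\sqrt{2/i}$ at $x=1$, and $z_+\ge z_-$ by \cref{lem:zp-ge-zm}, the two constant regimes of $g_+$ and $g_-$ overlap in at most a single point; hence $h_z=g_+-g_-$ is continuous and strictly increasing in $x$ except possibly at one point.

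Next I would verify the endpoint sign conditions. At $x=-1$ one gets $z_+(-1)=z_-(-1)=\sqrt{2/i}$, whence $g_+(-1,z)=z$ (from the non-constant branch, after telescoping) and $g_-(-1,z)=a_i(z)$, so $h_z(-1)=z-a_i(z)$. By inspection $a_i(z)\ge |z|$ on $[-1,1]$ (one checks the quadratic piece satisfies $\tfrac12(\sqrt{i/2}\,z^2+\sqrt{2/i}) - |z| = \tfrac{1}{2\sqrt{2/i}}(|z|-\sqrt{2/i})^2\ge 0$), so $h_z(-1)\le 0$. Symmetrically, $z_+(1)=z_-(1)=-\sqrt{2/i}$ gives $h_z(1)=a_i(z)+z\ge 0$. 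By continuity, the intermediate value theorem, and the (essentially strict) monotonicity, there is a unique zero $x_0(z)\in[-1,1]$ of $h_z$.

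Finally, since $g_+(\cdot,z)$ is non-decreasing and $g_-(\cdot,z)$ is non-increasing in $x$, the map $x\mapsto \max\{g_+(x,z),g_-(x,z)\}$ is non-increasing on $[-1,x_0(z)]$ (where $g_-\ge g_+$) and non-decreasing on $[x_0(z),1]$ (where $g_+\ge g_-$), so its infimum on $[-1,1]$ is attained at $x_0(z)$, where $g_+(x_0(z),z)=g_-(x_0(z),z)$; combining with \eqref{eq:lz} gives the claimed formula for $(\cT a_i)(z)$. The main obstacle is the monotonicity analysis of $g_\pm$ in $x$: the variable $x$ enters each expression through three different channels (directly, through $z_\pm(x)$, and through $a_i(z_\pm(x))$), and the envelope-theorem reformulation of the inner infima from \cref{lem:simplified-t-ai} is the key simplification that makes the sign of $\partial g_\pm/\partial x$ transparent.
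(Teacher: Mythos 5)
Your proposal is correct and follows essentially the same route as the paper's proof: the same monotonicity-in-$x$ analysis of $g_\pm$ (your envelope-theorem derivative $1-\frac{1+z}{1+z_+(x)}$ is exactly the paper's explicit $\partial g_+/\partial x$, which the paper obtains by direct differentiation and, for $g_-$, via the symmetry $g_-(x,z)=g_+(-x,-z)$), the same endpoint sign check $h_z(-1)=z-a_i(z)\le 0\le a_i(z)+z=h_z(1)$, and the same final step locating the infimum of $\max\{g_+,g_-\}$ at the crossing point. The only notable difference is your uniqueness step, which is slightly cleaner: on the common constancy region one has $z_-(x)\ge z\ge z_+(x)$, so \cref{lem:zp-ge-zm} forces $z_+(x)=z$ and strict monotonicity of $z_+$ makes this set at most a singleton, whereas the paper rules out a nondegenerate flat interval by an infeasibility argument on the inverses $z_\pm^{-1}(z)$.
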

\begin{proof}
		Recall that $ z_+ $ and $ z_- $ are functions of $ x $ but do not 
		rely 
	on 
	$ 
	z $. They obey an additional relation \[ z_+(x)+z_-(-x)=0\,. \]
	Their inverses are \[ z_+^{-1}(z)=\frac{ 2-i z^2-2iz}{2 
		\sqrt{2i}},\quad z_-^{-1}(z)= \frac{ iz^2-2iz-2}{2 
		\sqrt{2i}}\,, \]
	respectively. Both inverse functions are strictly decreasing. 
	Using the relation $ z_+(x)+z_-(-x)=0 $, since $ \max\{-z,z_+(-x)\}= 
	\max\{-z,-z_-(x)\} = 
	-\min\{z,z_-(x)\} $ and $ a_i $ is an even function, we have 
	\begin{equation}\label{eq:gpm}
	\begin{split} 
	g_+(-x,-z) ={}& 
	-x+(1-z)\left[\frac{a_i(z')+x}{1+z'}\right]_{z'=\max\{-z,z_+(-x)\}}\\
	={}& -x+(1-z)\left[\frac{a_i(z')+x}{1+z'}\right]_{z'=-\min\{z,z_-(x)\}}\\
	={}& -x+(1-z)\left[\frac{a_i(-z')+x}{1-z'}\right]_{z'=\min\{z,z_-(x)\}}\\
	={}& g_-(x,z)
	\,.
	\end{split}
	\end{equation}
	
	If $ z\ge z_+ $, we have $ z'=z $ and $ g_+(x,z)=a_i(z) $. In this case, $ 
	g_+ 
	$ is a constant function with respect to $ x $. If $ z<z_+ $, we have $ 
	z'=z_+ 
	$ and $ g_+(x,z)= x+(1+z)\frac{a_i(z_+)-x}{1+z_+} $. Since $ |z_+|\le 
	\sqrt{2/i} 
	$, we have $ a_i(z_+)=\frac{\sqrt{i/2}z_+^2+\sqrt{2/i}}{2} $ and \[ 
	g_+(x,z) = x+(1+z)\frac{(\sqrt{i/2}z_+^2+\sqrt{2/i})/2-x}{1+z_+}\,.
	\]
	Differentiating $ g_+ $ yields \[ 
	\frac{\partial g_+}{\partial x}=1-\frac{i (z+1)}{\sqrt{i \left(-2 \sqrt{2i} 
			x+i+2\right)}}\,.
	\]
	Since $ z<z_+ =\sqrt{1+\frac{2}{i}-2\sqrt{\frac{2}{i}}x}-1$, we have \[ 
	z+1<\sqrt{1+\frac{2}{i}-2\sqrt{\frac{2}{i}}x}\,,
	\]
	which, in turn, implies \[ 
	\frac{\partial g_+}{\partial x} > 1-\frac{i 
		\sqrt{1+\frac{2}{i}-2\sqrt{\frac{2}{i}}x}}{\sqrt{i \left(-2 \sqrt{2i} 
			x+i+2\right)}}=0\,.
	\]
	Therefore, $ g_+(x,z) $ is strictly increasing in $ x $ if $ z<z_+(x) $ 
	(\ie, $ x<z_+^{-1}(z) $) and is 
	constant with respect to $ x $ if $ 
	z\ge z_+(x) $ (\ie, $ x\ge z_+^{-1}(z) $). Furthermore, we verify that $ 
	g_+(-1,z)=z $ and 
	$ 
	g_+(1,z)=a_i(z) $.
	
	In light of the relation \eqref{eq:gpm}, we derive the property of $ g_- $. 
	The 
	function $ g_-(x,z) $ is strictly decreasing if $ -z<z_+(-x) $, or 
	equivalently, $ z>z_-(x) $ (\ie, $ x>z_-^{-1}(z) $). It stays at $ a_i(z) $ 
	if 
	$ z\le z_-(x) $ (\ie, $ x\le z_-^{-1}(z) $). 
	Furthermore, we have $ g_-(-1,z)=g_+(1,-z) = a_i(z) $ and $ 
	g_-(1,z)=g_+(-1,-z)=-z $.
	
	Let $ h_z(x)\triangleq g_+(x,z)-g_-(x,z) $ be the difference of these two 
	functions. Since $ g_+ $ is non-decreasing in $ x $ and $ g_- $ is 
	non-increasing in $ x $, the function $ h_z $ is non-decreasing in $ x $. 
	Then we check 
	the 
	value of $ h_z $ at $ x=-1 $ and $ x=1 $. We have $ 
	h_z(-1)=g_+(-1,z)-g_-(-1,z)=z-a_i(z) $ and $ 
	h_z(1)=g_+(1,z)-g_-(1,z)=a_i(z)+z 
	$. Their product is $ h_z(-1)h_z(1)=z^2-a_i^2(z) $, which is non-positive 
	because $ |z|\le a_i(z) $. The continuity of $ h_z $ implies the existence 
	of a 
	zero on $ [-1,1] $. Next, we will show the uniqueness of the zero. Since $ 
	g_+ 
	$ is strictly increasing with respect to $ x $ at the initial stage when $ 
	x<z_+^{-1}(z) $ 
	and stays constant when
	$ x\ge z_+^{-1}(z) $, 
	and $ g_- $ is constant with respect to $ x $ at the initial stage when
	$ x\le z_-^{-1}(z) $
	and strictly decreases when $ x>z_-^{-1}(z) $,
	the only possibility of having more than one zero is that $ 
	z_-^{-1}(z)>z_+^{-1}(z) $ and that the set $ 
	R=[z_+^{-1}(z),z_-^{-1}(z)]\cap 
	[-1,1] $ contains more than one point. The inequality $ 
	z_-^{-1}(z)>z_+^{-1}(z) $ is equivalent to $ |z|>\sqrt{2/i} $. A necessary 
	condition for the set $ R $ containing more than one point is that both $ 
	z_-^{-1}(z)>-1 $ and $ z_+^{-1}(z)<1 $ holds. If $ i=2 $, $ 
	|z|>\sqrt{2/i}=1 $ 
	will never happen. If $ i>2 $, the expression $ 
	z_-^{-1}(z)>-1 $ is equivalent to $ -1\leq z< \sqrt{\frac{2}{i}} $ while 
	the 
	expression $ z_+^{-1}(z)<1 $ is equivalent to $ - 
	\sqrt{\frac{2}{i}}<z\leq 1 $. However, the three inequalities  $ 
	|z|>\sqrt{2/i} 
	$, $ 
	-1\leq z< \sqrt{\frac{2}{i}} $, and $ - 
	\sqrt{\frac{2}{i}}<z\leq 1 $ cannot be satisfied simultaneously.
	Therefore, we show that $ h_z(x) $ has a unique zero on $ [-1,1] $.
	Let $ x_0(z) $ denote the unique zero, which is a function of $ z $. By its 
	definition, the two functions $ g_+ $ 
	and $ g_- $ are equal at $ x=x_0 $. Since $ h_z $ is non-decreasing with 
	respect to $ x $ and $ x_0 $ is the unique zero, we know that $ 
	g_+(x)>g_-(x) $ 
	if $ x>x_0 $ and $ g_+(x)<g_(x) $ if $ x<x_0 $. Therefore, by 
	\cref{lem:simplified-t-ai}, $ (\cT a_i)(z) $ 
	equals \[ 
	(\cT a_i)(z) = \inf_{x\in [-1,1]} \max\{g_+(x,z),g_-(x,z) \}= 
	g_+(x_0(z),z)=g_-(x_0(z),z) 
	\,.
	\]
\end{proof}

We are now ready to prove \cref{prop:fugal-operator-quadratic}.
\begin{proof}[Proof of \cref{prop:fugal-operator-quadratic}]
	In light of \cref{lem:h-unique-zero}, we compute the closed-form expression 
	of $ \cT a_i $ by verifying
	 that \[ 
	x_0(z)=\begin{cases}
	-\frac{z \sqrt{-i z^2+i+2}}{\sqrt{2}},& |z|\le \sqrt{2/i}\,,\\
	-\sign(z),& |z|> \sqrt{2/i}\,,
	\end{cases}
	\] 
	is 
	the unique zero of $ h_z(x) $. We consider two cases $ |z|\le \sqrt{2/i} $ 
	and $ |z|>\sqrt{2/i} 
	$. 
	
	\paragraph{Case 1: $ |z|> \sqrt{2/i} $.}
	Let us begin with the case where $ |z|>\sqrt{2/i} $. In this case, $ 
	x_0(z)=-\sign(z) $ and it is indeed on $ [-1,1] $. We further divide this 
	case 
	into two sub-cases where $ z>\sqrt{2/i} $ and $ z<-\sqrt{2/i} $, 
	respectively. 
	
	\subparagraph{Case 1.1: $ z>\sqrt{2/i} $.}
	If $ z>\sqrt{2/i} $, and since $ |z_+|\le \sqrt{2/i} $, we know that $ 
	z>z_+ $ 
	and $ \max\{ z,z_+ \}=z $. In this sub-case, we have $ x_0=-1 $ and  \[ 
	g_+(x_0,z)= -1+(1+z)\frac{a_i(z)+1}{1+z} = a_i(z)=z\,.
	\]
	Since $ |z_-|\le \sqrt{2/i} $ and $ z>z_- $, we have $ \min\{z,z_-\}=z_- $. 
	Therefore, we can compute \[ z_-=z_-(-1) = 
	1-\sqrt{1+\frac{2}{i}-2\sqrt{\frac{2}{i}}}=\sqrt{\frac{2}{i}}  \] and \[ 
	g_-(x_0,z)=1+(1-z)
	\frac{a_i(z_-)-1}{1-z_-}
	= 1+(1-z)
	\frac{\sqrt{2/i}-1}{1-\sqrt{2/i}} = z \,.
	\]
	
	\subparagraph{Case 1.2: $ z<-\sqrt{2/i} $.}
	In the second sub-case, we assume that $ z<-\sqrt{2/i} $. In this sub-case, 
	we have $ x_0=1 $, $ 
	\max\{z,z_+\} =z_+= -\sqrt{2/i} $, and $ \min\{z,z_-\}=z $. The function $ 
	g_+(x_0,z) $ equals \[ 
	g_+(x_0,z)= 1+(1+z)\frac{a_i(-\sqrt{2/i})-1}{1-\sqrt{2/i}} = -z\,,
	\]
	where the function $ g_-(x_0,z) $ equals \[ 
	g_-(x_0,z) = -1+(1-z)\frac{a_i(z)+1}{1-z} = a_i(z)=-z\,.
	\]
	Therefore, $ x_0=-\sign(z) $ is indeed the root when $ |z|>\sqrt{2/i} $. 
	Combining these two sub-cases, we deduce that if $ |z|> \sqrt{2/i} $, 
	\begin{equation}\label{eq:L_large_abs} 
	(\cT a_i)(z)= |z|\,.
	\end{equation}
	
	\paragraph{Case 2: $ |z|\le \sqrt{2/i} $.}
	The case that needs more work is $ |z|\le \sqrt{2/i} $. In this case, the 
	root function is $ 
	x_0(z)=-\frac{z \sqrt{-i z^2+i+2}}{\sqrt{2}} $. First, let us check that $ 
	x_0(z) $ resides on $ [-1,1] $. Since $ |z|\le \sqrt{2/i} $, it holds that 
	$ 
	(z^2-1)(iz^2-2)\ge 0 $. Expanding it and re-arranging the terms yields $ 
	z^2(i+2-iz^2)\le 2 $ and therefore $ |x_0(z)|\le 1 $. 
	
	We claim $ z_-^{-1}(z)\le x_0(z)\le z_+^{-1}(z) $.
	Notice the following factorization \[ 
	x_0(z)-z_-^{-1}(z)=\frac{\left(\sqrt{-i z^2+i+2}-\sqrt{i}\right) 
		\left(\sqrt{-i z^2+i+2}-2 \sqrt{i} z+\sqrt{i}\right)}{2\sqrt{2i}}\,.
	\]
	The first term $ \sqrt{-i z^2+i+2}-\sqrt{i} $ is a decreasing function with 
	respect to $ z^2 $. Since $ z^2 \le 2/i $, the first term is non-negative. 
	Let $ s(z)\triangleq \sqrt{-i z^2+i+2}-2 \sqrt{i} z+\sqrt{i} $ denote the 
	second term. Its derivative is $ s'(z) = -\frac{i z}{\sqrt{-i z^2+i+2}}-2 
	\sqrt{i} $. If $ z\ge 0 $, we see that $ s'(z) < 0 $. Since $ i\ge 2 $, it 
	holds that $ 2/i\le 4(1+2/i)/5 $. In light of the assumption $ z^2 \le 2/i 
	$, we get $ z^2 \le 4(1+2/i)/5 $. Re-arranging the terms gives $ z^2\le 
	4(1+2/i-z^2) $. If $ z<0 $, taking the square root of both sides yields $ 
	-z\le 2\sqrt{1+2/i-z^2} $. Re-arranging the terms again proves that if $ 
	z<0 
	$, $ s'(z)\le 0 $. Since $ s(z) $ is a continuous function, we show that $ 
	s 
	$ is a non-increasing function on $ [-\sqrt{2/i},\sqrt{2/i}] $ and that for 
	any $ z\in [-\sqrt{2/i},\sqrt{2/i}] $, we have $ s(z)\ge s(\sqrt{2/i}) = 
	2(\sqrt{i}-\sqrt{2})\ge 0 $. Thus we show that $ x_0(z)\ge z_-^{-1}(z) $.
	
	Next we need to show that $ x_0(z)\le z_+^{-1}(z) $. Notice the following 
	factorization \[ 
	z_+^{-1}(z)-x_0(z) = \frac{\left(\sqrt{-i z^2+i+2}-\sqrt{i}\right) 
		\left(\sqrt{-i z^2+i+2}+2 \sqrt{i} z+\sqrt{i}\right)}{2  \sqrt{2i}}\,.
	\]
	We observe that $ z_+^{-1}(z)-x_0(z) = x_0(-z)-z_-^{-1}(-z)\ge 0 $ since $ 
	-z $ is also on $ [-\sqrt{2/i},\sqrt{2/i}] $. Therefore we conclude that 
	for all $ z\in [-\sqrt{2/i},\sqrt{2/i}] $, the inequality $ z_-^{-1}(z)\le 
	x_0(z)\le z_+^{-1}(z) $ holds. This inequality implies \[ z_-(x_0(z)) \le z 
	\le z_+(x_0(z))\,. \]
	
	In what follows, we compute the exact values of $ z_+(x_0(z)) $ and $ 
	z_-(x_0(z)) $.   We first compute $z_+(x_0(z))$ \begin{align*}
	z_+(x_0(z)) ={}& \sqrt{2  z \sqrt{- z^2+1+\frac{2}{i}}+\frac{2}{i}+1}-1\\
	={}& \left| \sqrt{- z^2+1+\frac{2}{i}}+z \right|-1\\
	={}&  \sqrt{- 
		z^2+1+\frac{2}{i}}+z -1\,.
	\end{align*}
	The last equality is because $ \sqrt{- 
		z^2+1+\frac{2}{i}}+z\ge 0 $. To see this, we define $ s_1(z)\triangleq 
	\sqrt{- 
		z^2+1+\frac{2}{i}}+z $. Its second derivative is $ 
	s''_1(z)=\frac{i+2}{\sqrt{\frac{2}{i}-z^2+1} \left(i 
		\left(z^2-1\right)-2\right)}\le 0 $. Therefore, for any $ z\in 
	[-\sqrt{2/i},\sqrt{2/i}] $, its derivative satisfies $ s'_1(z)\ge 
	s'_1(\sqrt{2/i}) = 1- \sqrt{2/i} \ge 0$. As a result, for any $ z\in 
	[-\sqrt{2/i},\sqrt{2/i}] $, $ s_1(z)\ge 
	s_1(-\sqrt{2/i})=1-\sqrt{2/i}\ge 0 
	$.
	On the other hand, we compute $ z_-(x_0(z)) $ \begin{align*}
	z_-(x_0(z)) ={}& 1-\sqrt{-2  z \sqrt{- 
			z^2+1+\frac{2}{i}}+\frac{2}{i}+1}\\
			={}& 1- \left| \sqrt{- 
		z^2+1+\frac{2}{i}} -z\right| \\
		={}& -\sqrt{\frac{2}{i}-z^2+1}+z+1\,.
	\end{align*}
	The last inequality is because $ \sqrt{- 
		z^2+1+\frac{2}{i}} -z=s_1(-z)\ge 0 $. 
	
	Now, let us compute $ g_+(x_0(z),z) $ and $ g_-(x_0(z),z) $.
	  Since $ 
	\max\{z,z_+(x_0)\}= z_+(x_0) $ and $ a_i(z_+) = 
	\frac{\sqrt{i/2}z_+^2+\sqrt{2/i}}{2} $ (this is because $ |z_+|\le 
	\sqrt{2/i} $), plugging $ 
	z_+(x_0(z)) $ into the definition of $ g_+(x_0(z),z) $ yields 
	\begin{equation} \label{eq:gplus}
	g_+(x_0(z),z) = \frac{-\sqrt{i \left(-i z^2+i+2\right)}+(z+1) \left(z 
		\sqrt{i \left(-i z^2+i+2\right)}+i (z-1)^2\right)+2}{\sqrt{2} 
		\left(\sqrt{-i 
			z^2+i+2}+\sqrt{i} z\right)}\,.
	\end{equation}
	Let $ A= \sqrt{-i z^2+i+2}$. Solving $ z $ out of this expression, we get $ 
	z=\pm \frac{\sqrt{-A^2+i+2}}{\sqrt{i}} $. Plugging it into 
	\eqref{eq:gplus}, 
	we obtain \[ 
	g_+(x_0(z),z) = \frac{A \sqrt{i}-A^2+2}{\sqrt{2i} }  \,.
	\]
	Note that the result remains invariant no matter whether we plug in $ z= 
	\frac{\sqrt{-A^2+i+2}}{\sqrt{i}} $ or $ z=- 
	\frac{\sqrt{-A^2+i+2}}{\sqrt{i}} $. We plug in the definition of $ A $ and 
	express $ g_+(x_0(z),z) $ in terms of $ z $ again \[ 
	g_+(x_0(z),z)= 
	\sqrt{\frac{i}{2}}\left[z^2-1+\sqrt{1+\frac{2}{i}-z^2}\right]\,.
	\]
	
	Similarly, since $ \min\{z,z_-(x_0)\} =z_-(x_0)$ and $ a_i(z_-) = 
	\frac{\sqrt{i/2}z_-^2+\sqrt{2/i}}{2} $, plugging $ z_-(x_0(z)) $ into the 
	definition of $ g_-(x_0(z),z) $ yields \begin{equation} \label{eq:gminus}
	g_-(x_0(z),z) = \frac{(z-1) z \sqrt{i \left(-i z^2+i+2\right)}-\sqrt{i 
			\left(-i z^2+i+2\right)}-i (z-1) (z+1)^2+2}{\sqrt{2} \left(\sqrt{-i 
			z^2+i+2}-\sqrt{i} z\right)}\,.
	\end{equation}
	Again plugging $ 
	z=\pm \frac{\sqrt{-A^2+i+2}}{\sqrt{i}} $ into \eqref{eq:gminus} gives 
	\[ 
	g_-(x_0(z),z) = \frac{A \sqrt{i}-A^2+2}{\sqrt{2i} }\,,
	\]
	which equals $ g_+(x_0(z),z) $. Therefore, we conclude that if $ |z|\le 
	\sqrt{2/i} $, \begin{equation}\label{eq:L_small_abs}
	(\cT a_i)(z) = 
	\sqrt{\frac{i}{2}}\left[z^2-1+\sqrt{1+\frac{2}{i}-z^2}\right]\,.
	\end{equation}
	
	Combining \eqref{eq:lz}, \eqref{eq:L_large_abs} and \eqref{eq:L_small_abs}, 
	we establish \[ 
	(\cT a_i)(z) = \begin{cases}
	\sqrt{\frac{i}{2}}\left[z^2-1+\sqrt{1+\frac{2}{i}-z^2}\right],& |z|\le 
	\sqrt{2/i}\,,\\
	|z|,& |z|> \sqrt{2/i}\,.\\
	\end{cases}
	\]
\end{proof}

\subsection{Exact Values of Normalized Minimax Regret}\label{exactvaluesofnormalizedminimaxregret}
Recall that $ u_2(z)=a_2(z) $. \cref{prop:fugal-operator-quadratic} implies \[ 
u_3(z) = (\cT u_2)(z) = (\cT a_2)(z) = \begin{cases}
z^2-1+\sqrt{2-z^2},& |z|\le 
1\,,\\
|z|,& |z|> 1\,.\\
\end{cases}
 \]
 Therefore we have $ u_1(0)=1 $, $ u_2(0)=\frac{1}{2} $, and  $ u_3(0) = 
 \sqrt{2}-1 $. These exact values imply that the minimax regret of a $ T 
 $-round fugal game is exactly $ T $, $ \frac{T}{2} $, and $ 
 (\sqrt{2}-1)T $ if the player is allowed to switch at most $ 0 $, $ 1 $, and $ 
 2 $ 
 times, respectively. 
In \cref{prop:u40}, we compute the exact value of $ u_4(0) $. The complicated 
form of $ u_4(0) $ suggests that it is highly challenging to find a pattern for 
the general form of $ u_i(0) $ and that we should consider lower bounds whose 
behavior under the action of the fugal operator is more amenable to analysis, 
as what we will discuss in \cref{sub:quadratic-lower-bound}.
\begin{prop}\label{prop:u40}
	The value of $ u_4(0) $ is given by \begin{align*} 
& u_4(0)\\
={}& \frac{1}{3} \sqrt[3]{45 \sqrt{2}+3 \sqrt{3 \left(502 
	\sqrt{2}+945\right)}+145}-\frac{5}{3}
	 -\frac{2 \left(3 \sqrt{2}+1\right)}{3 
	\sqrt[3]{45 \sqrt{2}+3 \sqrt{3 \left(502 \sqrt{2}+945\right)}+145}}\\ \approx{}&
	0.362975 \,.
	 \end{align*}
\end{prop}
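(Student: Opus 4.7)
The plan is to specialize the recursion $u_4 = \cT u_3$ to $z = 0$, reduce it to a one-variable minimization, and then apply Cardano's formula to a cubic in $m := u_4(0)$. Proposition~\ref{prop:fugal-operator-quadratic} with $i = 2$ (combined with $u_2 = a_2$) already gives $u_3(z) = z^2 - 1 + \sqrt{2 - z^2}$ on $[-1,1]$.

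First I would use the evenness of $u_3$ to collapse the outer $\inf_x$. Unfolding $\cT u_3(0)$ and applying $z' \mapsto -z'$ to the $w = -1$ branch, the expression becomes $\inf_{x \in [-1,1]} \max\{f(x), f(-x)\}$, where $f(x) = \inf_{0 \le z' < 1} (u_3(z') + x z')/(1 + z')$. Since the inner integrand is affine in $x$ with non-negative slope $z'/(1+z')$, $f$ is non-decreasing, so $\max\{f(x), f(-x)\}$ is minimized at $x = 0$, yielding
\begin{equation*}
u_4(0) = \inf_{0 \le z' < 1}\frac{z'^2 - 1 + \sqrt{2 - z'^2}}{1 + z'} = \inf_{0 \le z' < 1}\left[(z' - 1) + \frac{\sqrt{2 - z'^2}}{1 + z'}\right].
\end{equation*}

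Second I would locate the minimizer. Differentiating shows the critical point $z'^*$ obeys $(1 + z'^*)^2 \sqrt{2 - (z'^*)^2} = z'^* + 2$; the boundary values $\sqrt{2} - 1$ and $1/2$ at $z' = 0$ and $z' \to 1^-$, combined with $h'(0) = 1 - \sqrt{2} < 0 < 1/4 = h'(1^-)$, certify that the minimum lies in the interior. Writing $t = 1 + z'^*$ and squaring the critical equation produces the sextic $t^6 - 2t^5 - t^4 + t^2 + 2t + 1 = 0$, which factors as
\begin{equation*}
\bigl(t^3 - t^2 + (\sqrt{2} - 1)t - 1\bigr)\bigl(t^3 - t^2 - (\sqrt{2} + 1)t - 1\bigr)
\end{equation*}
via the ansatz of two monic cubics sharing the $t^5$ and constant coefficients, then matching the linear coefficients through $b + e = -2$, $be = -1$. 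A numerical check at $t \approx 1.29$ identifies the relevant root as a zero of the first factor.

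Third, using the critical equation to write $\sqrt{2 - (z'^*)^2} = (z'^* + 2)/(1 + z'^*)^2$ gives $m = (t - 2) + (t + 1)/t^3$, equivalently $m t^3 = t^4 - 2t^3 + t + 1$. Reducing $t^4$ via the cubic $t^3 = t^2 + (1 - \sqrt{2})t + 1$ yields the quadratic-in-$t$ relation
\begin{equation*}
(m + \sqrt{2}) t^2 + \bigl[(1 - \sqrt{2})m - \sqrt{2} - 1\bigr] t + m = 0,
\end{equation*}
and eliminating $t$ between this and the cubic collapses to the cubic in $m$:
\begin{equation*}
m^3 + 5 m^2 + (9 + 2\sqrt{2})\, m - 5 = 0.
\end{equation*}
Depressing via $y = m + 5/3$ sends this to $y^3 + p y + q = 0$ with $p = 2(3\sqrt{2} + 1)/3$ and $q = -10(29 + 9\sqrt{2})/27$; a direct computation gives $q^2/4 + p^3/27 = (945 + 502\sqrt{2})/27$, and Cardano's formula (using $uv = -p/3$ to eliminate the second cube root) assembles into the announced expression with $D = 145 + 45\sqrt{2} + 3\sqrt{3(502\sqrt{2} + 945)}$.

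The main obstacle is the elimination step collapsing the quadratic-in-$t$ relation and the cubic into the clean cubic $m^3 + 5 m^2 + (9 + 2\sqrt{2})m - 5 = 0$: the naive resultant has higher degree, and one must identify and discard the factor coming from the extraneous cubic of the sextic factorization, while simplifying coefficients using identities like $(1-\sqrt{2})(\sqrt{2}+1) = -1$. Once the cubic in $m$ is in hand, the remainder is mechanical Cardano and can be checked symbolically.
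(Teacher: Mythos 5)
Your proposal is correct, and its first half coincides with the paper's own proof: the same symmetry argument collapses the outer infimum to $x=0$, giving $u_4(0)=\inf_{0\le z'<1}\frac{z'^2-1+\sqrt{2-z'^2}}{1+z'}$; the same critical equation $(1+z')^2\sqrt{2-z'^2}=z'+2$ appears; and your sextic in $t=1+z'$, together with its factorization into two cubics, is the paper's sextic $p(z')$ and its factorization over $\mathbb{Q}(\sqrt{2})$, rewritten in the shifted variable (up to an overall sign). Where you genuinely diverge is the endgame. The paper applies Cardano to the relevant cubic factor to obtain the minimizer $z_0\approx 0.283975$ in closed form and then simply states that substituting $z_0$ into the objective yields the displayed value of $u_4(0)$ --- a nontrivial nested-radical simplification it does not spell out. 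You never solve for $t$: using the critical equation you write $m=u_4(0)$ as a rational function of $t$, reduce modulo the cubic, and eliminate $t$ to get the minimal cubic $m^3+5m^2+(9+2\sqrt2)\,m-5=0$, to which Cardano applies directly; your intermediate quantities check out ($p=2(3\sqrt2+1)/3$, $q=-10(29+9\sqrt2)/27$, $q^2/4+p^3/27=(945+502\sqrt2)/27$), and they reproduce exactly the nested-radical expression and the value $0.362975$ in \cref{prop:u40}. Your route buys a cleaner derivation of the closed form --- the elimination step replaces precisely the substitution/simplification the paper glosses over --- at the cost of the resultant computation you flag. Two points to tighten: replace the purely numerical identification of which cubic factor contains the minimizer by an argument in the spirit of the paper's Descartes/sign-change reasoning (the sextic has exactly one root with $z'\in(0,1)$, and the second factor has no root there), and carry out the elimination to the cubic in $m$ explicitly rather than asserting it; both are routine.
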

\begin{proof}
	By the definition of the fugal operator, we have \begin{align*}
	& u_4(0)\\
	={}& (\cT u_3)(0)\\
	 ={}& \inf_{x\in [-1,1]} \max_{w=\pm 1}  
	\inf_{\substack{|z'|<1\\ wz'\ge 0}} 
	\frac{u_3(z')+ x 
		z'}{1+z'w} \\
	={}&  \inf_{x\in [-1,1]} \max_{w=\pm 1}  
	\inf_{\substack{|z'|<1\\ wz'\ge 0}} 
	\frac{(z'^2-1+\sqrt{2-z'^2})+ x 
		z'}{1+z'w}\\
	={}& \inf_{x\in [-1,1]} \max \left\{ 
	\inf_{0\le  z'< 1} 
	\frac{(z'^2-1+\sqrt{2-z'^2})+ x 
		z'}{1+z'},
	\inf_{-1 < z'\le 0} 
	\frac{(z'^2-1+\sqrt{2-z'^2})+ x 
		z'}{1-z'} \right\}  
	 \,.
	 \end{align*}
	If we define $ f(x,z')\triangleq \frac{(z'^2-1+\sqrt{2-z'^2})+ x 
		z'}{1+z'} $ and $ g(x)\triangleq \inf_{0\le z'<1} 
	f(x,z')$, the value of $ u_4(0) $ can be re-written as 
		\begin{align*} 
	u_4(0) ={}& \inf_{x\in [-1,1]} \max \left\{ \inf_{0\le 
	z'<1}f(x,z'),\inf_{-1<z'\le 0} f(-x,-z') \right\}\\
={}& \inf_{x\in [-1,1]} \max \left\{ \inf_{0\le 
	z'<1}f(x,z'),\inf_{0\le z'< 1} f(-x,z') \right\}\\
={}& \inf_{x\in [-1,1]} \max \left\{ g(x),g(-x) \right\} \,.
	 \end{align*}
	 Note that $ f(x,z) $ is non-decreasing with respect to $ x $ provided that 
	 $ z'\in [0,1] $. Therefore, the function $ g(x) $ is non-decreasing in $ x 
	 $ and the inequality $ g(x)\ge g(-x) $ is equivalent to $ x\ge 0 $. As a 
	 result, we deduce that \[ u_4(0)= \inf_{x\in [-1,1]} \max \left\{ 
	 g(x),g(-x) \right\} = g(0)=\inf_{0\le z'<1} f(0,z')= \inf_{0\le z'<1} 
	 \frac{z'^2-1+\sqrt{2-z'^2}}{1+z'}\,. \]
	 The derivative of $ f(0,z') $ with respect to $ z' $ is $ \frac{\partial 
	 f(0,z')}{\partial z'} = -\frac{z'+2}{(z'+1)^2 \sqrt{2-z'^2}}+1 $. Setting 
	 this derivative greater than or equal to $ 0 $ yields a sextic polynomial 
	 $ p(z')\triangleq 
	 -z'^6-4 z'^5-4 z'^4+4 z'^3+10 z'^2+4 z'-2\ge 0 $. By Descartes' rule of 
	 signs, this polynomial has two sign differences and thereby has two 
	 or zero positive roots. Since $ p(0)=-2 $, $ p(1)= 7$ and $ 
	 p(2)=-178 $, we deduce that there is exactly one root in $ (0,1) $ and $ 
	 (1,2) $ respectively. Let $ z_0 $ denote the unique root of $ p(z') $ in $ 
	 (0,1) $. The function $ f(0,z') $ is decreasing on $ [0,z_0] $ and 
	 increasing on $ [z_0,1] $. Thus the desired infimum $ \inf_{0\le z'<1} 
	 f(0,z') $ is attained at $ f(0,z_0) $. 
	 
	 We notice that $ p(z') $ can be factorized in $ \mathbb{Q}(\sqrt{2}) $ as 
	 below \[ 
	 p(z') = \left(z'^3+2 z'^2-\sqrt{2} z'-\sqrt{2}-2\right) \left(z'^3+2 
	 z'^2+\sqrt{2} z'+\sqrt{2}-2\right)\,.
	  \]
	  Solving the two cubic polynomials with Cardano formula, we obtain the 
	  unique root in $ (0,1) $ \begin{align*} 
	  z_0={}& \frac{1}{6} \left(-2 \left(3 \sqrt{2}-4\right) 
	  \sqrt[3]{\frac{2}{-9 
	  \sqrt{2}+3 \sqrt{6 \left(2 \sqrt{2}+9\right)}+38}}\right.\\
	  &\left.+2^{2/3} \sqrt[3]{-9 
	  \sqrt{2}+3 \sqrt{6 \left(2 \sqrt{2}+9\right)}+38}-4\right)\\
  \approx{} &
	  0.283975\,.
	   \end{align*}
	   Plugging $ z'=z_0 $ into $ f(0,z') $ yields the desired expression for $ 
	   u_4(0) $.
\end{proof}
 
\subsection{Interlacing Quadratic Lower Bound Functions}\label{sub:quadratic-lower-bound}

\begin{lemma}[Fugal operator interlaces quadratic lower bound 
functions]\label{lem:fugal-monotone}
	For $ i\ge 1 $ and $ z\in [-1,1] $, $a_{i+1}(z)\le (\cT a_i)(z) $.
\end{lemma}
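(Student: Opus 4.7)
The plan is a case analysis based on the piecewise forms of $a_{i+1}$ and $(\cT a_i)$. For the base case $i=1$, observe that $a_1 \equiv 1 = u_1$, so by \cref{lem:u2} and continuity at $|z|=1$,
\[
(\cT a_1)(z) = (\cT u_1)(z) = u_2(z) = \frac{z^2+1}{2} = a_2(z),
\]
giving equality. For $i \ge 2$, apply \cref{prop:fugal-operator-quadratic} to get the closed form of $(\cT a_i)$, and use evenness to restrict to $z \ge 0$. Split $[0,1]$ into three regions using the two knees at $\sqrt{2/(i+1)}$ and $\sqrt{2/i}$.

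In the tail $z > \sqrt{2/i}$, both $a_{i+1}(z)$ and $(\cT a_i)(z)$ equal $z$, so the bound is trivial. In the middle region $\sqrt{2/(i+1)} \le z \le \sqrt{2/i}$, we have $a_{i+1}(z)=z$ while $(\cT a_i)(z) = \sqrt{i/2}\bigl[z^2 - 1 + \sqrt{1+2/i-z^2}\bigr]$. After rearranging the desired inequality to $\sqrt{1+2/i-z^2} \ge 1 + z\sqrt{2/i} - z^2$ and verifying that the right-hand side is positive on this interval (checkable at both endpoints, together with monotonicity), squaring collapses the inequality to the perfect-square identity
\[
(1 + 2/i - z^2) - (1 + z\sqrt{2/i} - z^2)^2 = (1 - z^2)(\sqrt{2/i} - z)^2 \ge 0,
\]
where non-negativity uses $z \le \sqrt{2/i} \le 1$ since $i \ge 2$.

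The inner region $0 \le z < \sqrt{2/(i+1)}$ is the main obstacle, since both functions are now in their nonlinear regimes and no analogous perfect-square cancellation presents itself. The plan is to parametrize by $t = z^2 \in [0, 2/(i+1)]$ and define
\[
g(t) \triangleq \sqrt{i/2}\bigl[t - 1 + \sqrt{1+2/i-t}\bigr] - \frac{\sqrt{(i+1)/2}\,t + \sqrt{2/(i+1)}}{2}.
\]
A direct calculation of $g''(t) = -\sqrt{i/2} \big/ \bigl[4(1+2/i-t)^{3/2}\bigr] < 0$ shows $g$ is strictly concave on this interval, so its minimum over the closed interval is attained at an endpoint. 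The right endpoint value $g(2/(i+1)) \ge 0$ is exactly the middle-region's perfect-square identity evaluated at its left boundary. The left endpoint value $g(0) \ge 0$, after rationalizing $\sqrt{1+2/i} - 1 = (2/i)/(\sqrt{1+2/i}+1)$, reduces to $\sqrt{i} + \sqrt{i+2} \le 2\sqrt{i+1}$, which follows from the strict concavity of $\sqrt{\cdot}$. The crux is recognizing that concavity of $g$ as a function of $t=z^2$ (rather than of $z$, which does not yield definite-sign second derivatives) is the correct tool to propagate the two endpoint bounds through the whole interval.
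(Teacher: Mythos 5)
Your proof is correct and follows essentially the same route as the paper: the base case via \cref{lem:u2}, the closed form of $\cT a_i$ from \cref{prop:fugal-operator-quadratic}, the three-region split at $\sqrt{2/(i+1)}$ and $\sqrt{2/i}$, and, crucially, concavity in $t=z^2$ on the inner region with the two endpoint checks ($g(2/(i+1))\ge 0$ inherited from the middle region, and $g(0)\ge 0$ reducing to $\sqrt{i}+\sqrt{i+2}\le 2\sqrt{i+1}$), exactly as in the paper. The only local difference is the middle region, where your perfect-square identity $(1+2/i-z^2)-(1+z\sqrt{2/i}-z^2)^2=(1-z^2)(\sqrt{2/i}-z)^2$ replaces the paper's monotonicity argument for $(\cT a_i)(z)-|z|$; both are valid.
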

\begin{proof}
	If $ i=1 $, recall that $ a_1(z)=1 $ and $ a_2(z)=u_2(z)=\frac{z^2+1}{2} $.
	\cref{lem:u2} implies $ a_2(z) = (\cT a_1)(z) $ and therefore the promised 
	inequality holds. In the sequel, we assume that $ i\ge 2 $.
	In 
	\cref{prop:fugal-operator-quadratic}, we show that for $ i\ge 2 $, \[ 
	(\cT a_i)(z) = \begin{cases}
	\sqrt{\frac{i}{2}}\left[z^2-1+\sqrt{1+\frac{2}{i}-z^2}\right],& |z|\le 
	\sqrt{2/i}\,,\\
	|z|,& |z|> \sqrt{2/i}\,.\\
	\end{cases}
	\]
	Recall the definition of $ a_{i+1} $ \[  
	a_{i+1}(z)=\begin{cases}
	\frac{\sqrt{\frac{i+1}{2}}z^2+\sqrt{\frac{2}{i+1}}}{2},&  
	|z|<\sqrt{\frac{2}{i+1}} 
	\,,\\
	|z|,& |z|\ge \sqrt{\frac{2}{i+1}}\,.
	\end{cases}
	\]
	If $ |z|>\sqrt{\frac{2}{i}} $, we have $ (\cT a_i)(z)=a_{i+1}(z) $. If $ 
	\sqrt{\frac{2}{i+1}} \le  |z|\le 
	\sqrt{\frac{2}{i}} $, we need to show that \[ 
	(\cT a_i)(z)-|z|= 
	\sqrt{\frac{i}{2}}\left[z^2-1+\sqrt{1+\frac{2}{i}-z^2}\right] - |z|\ge 0\,.
	\]
	Note that in this case, $ l(z)\triangleq (\cT a_i)(z)-|z| $ is an even 
	function. 
	Therefore it suffices to show the inequality for $ \sqrt{\frac{2}{i+1}} 
	\le z \le 
	\sqrt{\frac{2}{i}} $. For any $ y\in [0,1] $ and $ i\ge 2 $, the 
	following inequality 
	holds \[ 
	\frac{1}{\sqrt{\frac{2}{i}(1-y^2)+1}}+\frac{1}{y}-2 \le \lim_{i\to 
		\infty} \left(\frac{1}{\sqrt{\frac{2}{i}(1-y^2)+1}}+\frac{1}{y}-2 
		\right)
	= \frac{1}{y}-1 \ge 0\,.
	\]
	Since $ \sqrt{\frac{i}{2}}z\in [0,1] $, setting $ y=\sqrt{\frac{i}{2}}z 
	$ in the above inequality gives \[ 
	\frac{1}{\sqrt{\frac{2}{i}-z^2+1}}+\frac{\sqrt{2}}{\sqrt{i} z}-2 \ge 0\,.
	\]
	Re-arranging the terms, we get \[ 
	\frac{dl}{dz}= -1+ \sqrt{\frac{2}{i}}z\left( 
	2-\frac{1}{\sqrt{\frac{2}{i}-z^2+1}} 
	\right)\le 0\,.
	\]
	This implies that $ l(z) $ is non-increasing if $ z\le \sqrt{\frac{2}{i}} 
	$. Therefore, for any $ \sqrt{\frac{2}{i+1}} 
	\le z \le 
	\sqrt{\frac{2}{i}} $, we have $ l(z)\ge l(\sqrt{\frac{2}{i}})=0 $.
	
	If $ |z|\le \sqrt{\frac{2}{i+1}} $, we need to show that \[ 
	(\cT 
	a_i)(z)-a_{i+1}(z)=\sqrt{\frac{i}{2}}\left[z^2-1+\sqrt{1+\frac{2}{i}-z^2}\right]
	- 
	\frac{\sqrt{\frac{i+1}{2}}z^2+\sqrt{\frac{2}{i+1}}}{2} \ge 0 \,.
	\]
	Since in this case the function $ (\cT a_i)(z)-a_{i+1}(z) $ is an even 
	function 
	with respect to $ z $, we assume that $ 0\le z\le \sqrt{\frac{2}{i+1}} $.
	Since $ (\cT a_i)(z)-a_{i+1}(z) $ is a concave function with respect to $ 
	z^2 $ 
	(note 
	that $ \sqrt{1+\frac{2}{i}-z^2} $ is concave with respect to $ z^2 $ and 
	that 
	the remaining terms are linear in $ z^2 $), it is 
	sufficient to check its non-negativity when $ z^2=0 $ and $ 
	z^2=\frac{2}{i+1} $ 
	(\ie, when $ z=0 $ and $ z=\sqrt{\frac{2}{i+1}} $).
	Recall that we have shown that $ (\cT a_i)(z)-a_{i+1}(z)\ge 0 $ holds for 
	any $ 
	\sqrt{\frac{2}{i+1}}\le z\le \sqrt{\frac{2}{i}} $. It remains to check the 
	non-negativity of $ (\cT a_i)(0)-a_{i+1}(0) $. We have $ (\cT 
	a_i)(0)-a_{i+1}(0) = 
	\frac{1}{\sqrt{2}}(-\sqrt{i}+\sqrt{i+2}-\frac{1}{\sqrt{i+1}}) $. The 
	concavity 
	of the square root function implies $ \sqrt{i+1}\ge 
	\frac{\sqrt{i+2}+\sqrt{i}}{2} =\frac{1}{\sqrt{i+2}-\sqrt{i}} $. Thus we 
	obtain 
	$ \sqrt{i+2}-\sqrt{i}\ge \frac{1}{\sqrt{i+1}} $ and the non-negativity of $ 
	(\cT a_i)(0)-a_{i+1}(0) $. We conclude that $ (\cT a_i)(z)\ge a_{i+1}(z) $ 
	in 
	all three 
	cases.
\end{proof}

\cref{lem:alt} shows that the quadratic lower bound functions indeed provide a 
lower bound for $ u_k(z) $.
\begin{lemma}[Quadratic lower bound]\label{lem:alt}
	 For all $ k\ge 
	 1 $ and $ |z|<1 $, it holds that $ a_k(z)\le u_k(z) $.
\end{lemma}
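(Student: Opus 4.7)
The proof will be a short induction on $k$, combining the monotonicity of the fugal operator $\mathcal{T}$ with the interlacing property $a_{k+1} \le \mathcal{T} a_k$ established in \cref{lem:fugal-monotone}.

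For the base case $k=1$, both $a_1$ and $u_1$ are identically equal to $1$ on $(-1,1)$, so the inequality $a_1(z) \le u_1(z)$ holds trivially. (Alternatively, one can start the induction at $k=2$, since we already know $a_2(z) = u_2(z) = \tfrac{z^2+1}{2}$.)

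For the inductive step, suppose $a_k(z) \le u_k(z)$ for all $z \in [-1,1]$. By the monotonicity of the fugal operator (the remark following its definition), we have $(\mathcal{T} a_k)(z) \le (\mathcal{T} u_k)(z)$ for every $z \in [-1,1]$. Using the recursive identity $u_{k+1} = \mathcal{T} u_k$ from \cref{lem:improved-recursive-uk}, this gives $(\mathcal{T} a_k)(z) \le u_{k+1}(z)$. Combining this with \cref{lem:fugal-monotone}, which asserts $a_{k+1}(z) \le (\mathcal{T} a_k)(z)$, we conclude
\[
a_{k+1}(z) \le (\mathcal{T} a_k)(z) \le u_{k+1}(z),
\]
which closes the induction. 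Since every ingredient (monotonicity of $\mathcal{T}$, the recursive relation $u_{k+1} = \mathcal{T} u_k$, and the interlacing inequality) has already been proven in the excerpt, the only work is to chain them together; there is no substantive obstacle to anticipate. The bulk of the conceptual labor was front-loaded into the proof of \cref{lem:fugal-monotone}, which established the interlacing of the quadratic lower bounds under $\mathcal{T}$.
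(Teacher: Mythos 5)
Your proof is correct and follows essentially the same route as the paper: the paper's own proof of this lemma is precisely the induction $u_{k+1} = \cT u_k \ge \cT a_k \ge a_{k+1}$, using the monotonicity of the fugal operator together with \cref{lem:fugal-monotone} (the paper merely unfolds the monotonicity step explicitly inside the inf/max/inf rather than citing the remark). No gaps.
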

\begin{proof} %
	If $ k=1 $, the claim holds by recalling $ u_k(z)=1 $ on $ (-1,1) $, as 
	shown in \cref{cor:u1}. If $ k=2 $, we have $ a_2(z)=\frac{z^2+1}{2}\le 
	u_2(z) $ by \cref{lem:u2}, 
	as promised. For $ k>2 $, we will show the desired inequality by induction. 
	Assume that $ a_i(z)\le u_i(z) $ for some $ i\ge 2 $. We will show that $ 
	a_{i+1}(z)\le u_{i+1}(z) $. Since $ a_i(z)\le u_i(z) $, by 
	\cref{lem:improved-recursive-uk}, we deduce \begin{align*}
	u_{i+1}(z) ={}&
	 \inf_{x\in [-1,1]} \max_{w=\pm 1}  
	\inf_{\substack{|z'|<1\\ w(z'-z)\ge 0}} 
	\frac{(1+wz) u_i(z')+ x 
		(z'-z)}{1+z'w} \\
	\ge{} &  \inf_{x\in [-1,1]} \max_{w=\pm 1}  
	\inf_{\substack{|z'|<1\\ w(z'-z)\ge 0}} 
	\frac{(1+wz) a_i(z')+ x 
		(z'-z)}{1+z'w}
	=(\cT a_i)(z) \stackrel{(a)}{\ge} a_{i+1}(z)
	 \,,
	 \end{align*}
where $ (a) $ is due to \cref{lem:fugal-monotone}.
 By induction, 
we know that $ a_k(z)\le u_k(z) $ holds for all $ k\ge 1 $ and $ |z|<1 $.

\end{proof}

We are in a position to prove the minimax lower bound for the one-dimensional 
game.
\begin{proof}[Proof of \cref{thm:1-d}]
	By \cref{lem:alt}, plugging $ z=0 $ into $ u_K(z)\ge a_K(z) $ shows that 
	the normalized minimax regret without initial bias $ u_K(0) \ge 
	a_K(0) = \frac{1}{\sqrt{2K}} $. 
	Recall that $ u_K(0) 
	=  \frac{1}{T}r_K(T,0) 
	$ for all $ T>0 $, where $ r_K(T,0) $  is the minimax regret with $ T $ 
	rounds, a maximum 
	number of $ K $ switches, and without initial bias. Therefore, 
	 we have $ \frac{1}{T}r_K(T,0)\ge 
	\frac{1}{\sqrt{2K}}  $, 
	which implies that $ R_K(T,0)\ge \frac{T}{\sqrt{2K}} $
	 (because the minimax regret of switching-constrained 
	online convex optimization is lower bounded by the minimax regret of a 
	fugal game).
\end{proof}

\subsection{Tightness of Lower Bound}\label{sub:tight}
In the following two propositions, we validate the one-dimensional lower bound of the previous section in two senses. First, in \cref{prop:unimprovable-constant} we show that the constant in \cref{thm:1-d} cannot be increased for arbitrary $K$ and $T$. In particular, we demonstrate that when $K=2$, the player has a simple strategy --- playing 0 in the first half of the rounds, and an appropriately chosen constant in the second half --- to guarantee regret no greater than $\lceil T/2\rceil$.
\propunimprovableconstant*
\begin{proof}
	We will show that the lower bound is tight when $ K=2 $ by proving the 
	upper bound $ \roco(B^1, B^{*1}, 2, T) \le \lceil T/2\rceil $. Recall that 
	if $ K=2 $, the lower bound $ \frac{T}{\sqrt{2K}} $ is $ T/2 $. To prove 
	the upper bound, we consider the following player's strategy. First, we 
	assume that $ T $ is an even number and we will address the situation where 
	$ T $ is odd later. The player plays $ 0 $ in the first half of the rounds. 
	Let $ W_1 $ be the sum of numbers that the adversary plays in the first 
	half of the rounds and $ W_2 $ be the sum in the second half. In other 
	words, $ W_1=\sum_{t=1}^{T/2} w_t $ and $ W_2=\sum_{t=T/2+1}^{T} w_t $. In 
	the second half of the rounds, the player plays $ -\frac{W_1}{T/2} $. Since 
	$ |W_1| \le T/2 $, the player's choice $ -\frac{W_1}{T/2} $ lies in $ 
	[-1,1] $. The regret is equal to \[ 
	W_2\cdot \left(-\frac{W_1}{T/2}\right) + |W_1+W_2|\,.
	 \]
	If $ W_1+W_2 $ is non-negative, the regret equals $ 
	W_1+W_2-\frac{2W_1W_2}{T} = 
	W_1+W_2(1-\frac{2W_1}{T}) \le W_1+\frac{T}{2}(1-\frac{2W_1}{T})= 
	\frac{T}{2} $, 
	where the inequality is because $ 1-\frac{2W_1}{T}\ge 0 $ and $ W_2 \le 
	\frac{T}{2} $. If $ W_1+W_2 $ is negative, the regret becomes $ 
	-W_1-W_2-\frac{2W_1W_2}{T} = -W_1-W_2(1+\frac{2W_1}{T})\le 
	-W_1+\frac{T}{2}(1+\frac{2W_1}{T})=\frac{T}{2} $, where the inequality is 
	because $ 1+\frac{2W_1}{T}\ge 0 $ and $ W_2\ge -\frac{T}{2} $. Therefore, 
	the regret is at most $ \frac{T}{2} $. If $ T $ is odd, the player plays $ 
	0 $ at the first round and the number of remaining rounds is $ T-1 $, which 
	is even. The player then uses the previous strategy for an even $ T $. In other words, 
	the player plays $ 0 $ from the first round to the $ \frac{T+1}{2} $-th 
	round and plays $ -\frac{\sum_{t=2}^{(T+1)/2}w_t}{(T-1)/2} $ at all 
	remaining rounds. The regret differs from the regret in the $ (T-1) $-round 
	game by at most $ 1 $. Therefore, the regret is upper bounded by $ 
	\frac{T-1}{2}+1=\frac{T+1}{2}=\lceil \frac{T}{2}\rceil $.
\end{proof}

The previous proposition demonstrated that the constant $\frac{1}{\sqrt{2}}$ could not be improved when $K=2$, and thus could not be increased for an arbitrary $K$. In the next proposition, we show that our previous analysis of the fugal game was ``tight" in a separate, asymptotic sense. When $K=o(T)$, the minimax regret of the fugal game relaxation is asymptotically (in $T$) equal to that of the original, switching-constrained OCO formulation. To understand the implication of this result, recall that the fugal game departed from the original game in two key ways. First, the player was permitted to choose non-discrete block lengths, $M_i \geq 0$, rather than only integral $M_i$. It is perhaps unsurprising that, as $T$ grows large, this restriction does not make a difference: intuitively, one can approximate $\frac{M_i}{T}$, where $M_i$ is non-integral and $T$ is small, arbitrarily well by $\frac{\tilde{M}_i}{\tilde{T}}$, where $\tilde{M}_i$ is integral but both it and $\tilde{T}$ are large. However, the fugal game also required the adversary to copy the player's switching pattern, and to play only $\pm1$. It may be surprising that the combination of these various restrictions has no affect on the minimax rate, asymptotically.

To prove the result, we present a reduction which converts the player's optimal algorithm in achieving the fugal minimax rate, to an algorithm (\cref{alg:fugal-for-oco}) for ordinary, switching-constrained OCO. The regret of this algorithm against an optimal adversary necessarily upper bounds the constrained OCO minimax rate by \cref{sec:1-d}. Intuitively, the player simulates a fugal game based on the real game, and chooses actions based on the simulated game. The player's strategy in \cref{alg:fugal-for-oco} ``translates" in an appropriate manner from the actual game to a simulated fugal game, and proceeds according to the optimal strategy in the simulated game. In particular, she converts from the received, non-integral $w_i$ to an internal, stored set of fugal $w'_i \in \{ \pm 1 \}$, representing the closest approximation to a fugal game of the actual game. Once the adversary's cumulative action since the last switch, $W_t$, exceeds (in absolute value) the equivalent quantity in the fugal game, the player switches actions. She consults the fugal strategy as an oracle to pick which action to play, and the game continues. By some algebraic manipulation, we show that the regret of the ``simulated" fugal game, and the real game, stay reasonably close. We can thus upper bound the ordinary minimax rate in terms of the fugal minimax rate and an additive term which disappears in the limit of $T$, obtaining the stated result.

\begin{prop}[Asymptotic tightness of fugal lower 
bound]\label{prop:asymptotic-tightness}
	For any fixed $ K\ge 1 $, we have the limit $ \lim_{T\to \infty} 
	\frac{1}{T}\roco(B^1, 
	B^{*1}, 
	K, T) = u_K(0)  $, where $ u_K(0) $ is defined in \cref{lem:independent_T} 
	and denotes the normalized minimax regret with no initial bias.
\end{prop}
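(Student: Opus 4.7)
\textbf{Proof plan for Proposition \ref{prop:asymptotic-tightness}.}

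The plan is to prove the limit in two directions: the easy direction $\liminf_{T\to\infty} \frac{1}{T}\roco(B^1,B^{*1},K,T) \geq u_K(0)$ follows immediately because the fugal game is a relaxation (strengthening the player and weakening the adversary) of switching-constrained OCO, hence $r_K(T,0) \leq R_K(T,0)$ for every $T$, and $u_K(0) = r_K(T,0)/T$ by \cref{lem:independent_T}. The remainder of the proof must establish the matching upper bound $\limsup_{T\to\infty} \frac{1}{T} R_K(T,0) \leq u_K(0)$, which requires exhibiting a concrete OCO strategy (\cref{alg:fugal-for-oco}) whose worst-case regret is at most $T\cdot u_K(0) + O(K)$, with an implicit constant independent of $T$.

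The strategy is obtained by ``lifting'' a near-optimal fugal strategy $\sigma$ (achieving fugal regret $\leq T u_K(0) + \eta$ for arbitrarily small $\eta>0$) into a rule for the real game. At the start of block $i$, the player queries $\sigma$ for the action $x_i$ together with the two candidate block lengths $M_i^{+} = M_i(\tilde{w}_i'=+1)$ and $M_i^{-} = M_i(\tilde{w}_i' = -1)$. She plays $x_i$ while accumulating $W^{(i)} = \sum_{t\in\text{block } i} w_t$, and switches the moment $W^{(i)} \geq M_i^+$ or $W^{(i)} \leq -M_i^-$; the sign of $W^{(i)}$ at that instant defines the internally stored $\tilde{w}_i'\in\{\pm 1\}$, which is then fed back into $\sigma$ to determine the next block.

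The error analysis then proceeds by comparing the real regret against the regret of the simulated fugal game played by $\sigma$ against adversary sequence $(\tilde w_1',\dots,\tilde w_K')$. For every block $i$ that is terminated by the threshold rule, the overshoot satisfies $|W^{(i)}| \in [\tilde{M}_i, \tilde{M}_i+1]$, since each $w_t\in[-1,1]$ can push the running sum across the threshold by at most $1$. Writing $W^{(i)} = \tilde{w}_i' \tilde{M}_i + \epsilon_i$ with $|\epsilon_i|\leq 1$, a straightforward application of the triangle inequality (using $|x_i|\leq 1$) gives
\[
R_{\text{real}} - R_{\text{fugal}}(\sigma, \tilde w') \;\leq\; \sum_i |x_i|\,|\epsilon_i| + \sum_i |\epsilon_i| \;\leq\; 2K .
\]
Since $R_{\text{fugal}}(\sigma,\tilde w') \leq r_K(T,0) + \eta = T u_K(0)+\eta$, dividing by $T$ and letting $T\to\infty$ (then $\eta\to 0$) yields the claim.

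\textbf{Main obstacle.} The delicate step is the last block $i^*$: if the horizon $T$ is reached before any threshold is triggered, then $|W^{(i^*)} - \tilde w_{i^*}' \tilde M_{i^*}|$ is no longer bounded by $1$ but only by $\tilde M_{i^*}$, and the constraint $\sum_i \tilde M_i = T$ demanded by $\sigma$ need not coincide with $\sum_i n_i = T$ where $n_i$ is the number of actual rounds in block $i$. The argument must therefore either preemptively force the player to exhaust all $K$ blocks before round $T$ (using at most $K-1$ extra ``dummy'' rounds of slack, negligible as $T\to\infty$), or use the bookkeeping $n_i \geq \tilde M_i$ for the completed blocks to show that the unused budget $T - \sum_{i<i^*} n_i$ absorbs cleanly into the tracked $\tilde M_{i^*}$ and produces a discrepancy that is still $O(K)$. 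Either route adds at most an additive $O(K)$ to the regret, which vanishes after dividing by $T$, so the asymptotic identity $\frac{1}{T}\roco(B^1,B^{*1},K,T) \to u_K(0)$ follows.
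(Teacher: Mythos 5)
Your plan follows the same route as the paper's proof: the lower direction is the relaxation inequality $r_K(T,0)\le R_K(T,0)$ together with \cref{lem:independent_T}, and the upper direction lifts a (near-)optimal fugal strategy into exactly the threshold rule of \cref{alg:fugal-for-oco} — switch when the running block sum $W^{(i)}$ crosses $M_i^{+}$ or $-M_i^{-}$, record $\tilde w_i'=\pm1$, and use the per-block overshoot $|W^{(i)}-\tilde w_i'\tilde M_i|\le 1$ to pay an additive $2K$. Working with an $\eta$-optimal fugal strategy rather than assuming the infima are attained is fine, and slightly more careful than the paper on that point.

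The gap is exactly the step you flag and do not close, and one of your two candidate fixes would fail. Route (a) — forcing all $K$ blocks to finish before round $T$ at a cost of $O(K)$ dummy rounds — presumes that real block \emph{lengths} track the fugal lengths up to $O(1)$. They do not: the threshold rule controls only the block's cumulative adversary sum, not its duration, and the adversary can stall a block for arbitrarily many rounds by playing $w_t$ of tiny magnitude or alternating sign. Since in fact $n_i\ge\tilde M_i$ and $\sum_i\tilde M_i=T$, blocks generically finish \emph{after} round $T$, never before; and if you force a switch before a threshold is crossed you destroy precisely the coupling $|W^{(i)}-\tilde w_i'\tilde M_i|\le1$ on which the $2K$ bound rests. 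Route (b) as stated also does not give $O(K)$: when the horizon truncates block $i^*$, the discrepancy on that single block can be as large as $\tilde M_{i^*}=\Theta(T)$, so no per-block bookkeeping comparing the truncated run against the full $K$-block fugal play can bound it. The paper's resolution goes the other way: for the analysis it lets the algorithm keep running past round $T$ until the $K$-th block is threshold-terminated at some round $T'\ge T$, observes that padding the adversary's sequence with zeros changes neither $\sum_t x_t w_t$ nor $\lvert\sum_t w_t\rvert$ so the $T$-round worst case is bounded by the worst case over such extended runs, and then applies the $\pm1$ overshoot bound to all $K$ now-terminated blocks, yielding $\roco(B^1,B^{*1},K,T)\le r_K(T,0)+2K$. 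You should also make explicit the feasibility check the paper does first (its claim $K_T\le K$): because each completed block satisfies $n_i\ge\tilde M_i$ and $\sum_{i=1}^K\tilde M_i=T$ for the realized sign sequence, at most $K$ blocks can begin within $T$ rounds, so the lifted strategy really makes fewer than $K$ switches; without this the construction does not even produce a legal player and hence no upper bound on $\roco$.
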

\begin{proof}
	Let $ 1=m_1 < m_2 <\cdots < m_{K_T} $ denote all moving rounds.
	 For any integer $ 1< t\le T $, let $ p(t) $ 
	be 
	the largest integer such that $ m_{p(t)} < t $. 
	Recall the regret of a $ T $-round fugal game with a maximum number of $ 
	k-1 $ switches and no initial bias is given by \[ 
	\sup_{\lambda>0} \left(  
	\sum_{i=1}^k 
	M_iw_i x_i + \left|
	\sum_{i=1}^k 
	M_iw_i\right| + \lambda \ind[\sum_{i=1}^k M_i\ne T] \right)\,.
	 \]
	Let $ x_i^*(w_1,\dots,w_{i-1}):\{-1, 1\}^{i-1}\to [-1,1] $ and $ 
	M_i^*(w_1,\dots,w_i):\{-1, 1\}^i \to \bR_{\ge 0} $ be the optimal strategy 
	of the player in the fugal game, where $ i=1,\dots,K $. We will use this strategy to 
	construct a player' strategy 
	for the switching-constrained OCO, which is presented in 
	\cref{alg:fugal-for-oco}. 
		\begin{algorithm}[htb]
		\caption{Player's strategy for 
			switching-constrained OCO derived from fugal 
			games\label{alg:fugal-for-oco}}
		\Output{Player's moves $ x_1,\dots,x_T $.}
		\For{$ t=1,\dots,T $\label{ln:for-loop}}{
			Observe $ w_{t-1} $\;
			\eIf{$ t=1 $}{
				$ K_1\gets 1 $\;
				Play $ x_1 \gets x_1^* $\;}{
				$ W_t\gets \sum_{j=m_{p(t)}}^{t-1} w_j $\;
				$ U_{p(t)}\gets M_{p(t)}^*(w'_1,w'_2,\dots,w'_{p(t)-1},1) 
				$\;
				$ L_{p(t)}\gets -M_{p(t)}^*(w'_1,w'_2,\dots,w'_{p(t)-1},-1) $\;
				\uIf{$ W_t\ge U_{p(t)} 
					$}{$ K_t\gets K_{t-1}+1 $\;
					$ w'_{p(t)}\gets 1 $\;
					Play $ x_t\gets x_{p(t)+1}^*(w'_1,w'_2,\dots,w'_{p(t)}) $\;
				}
				\uElseIf{$ W_t\le L_{p(t)}$ 
				}{$ K_t\gets K_{t-1}+1 $\;
				$ w'_{p(t)}\gets -1 $\;
					Play $ x_t\gets x_{p(t)+1}^*(w'_1,w'_2,\dots,w'_{p(t)}) $\;
				}\Else{$ K_t\gets K_{t-1} $\;
					Play $ x_t\gets x_{t-1} $\;}
			}
		}
	\end{algorithm}
	
	First, we claim that $ K_T\le K $. According to the algorithm, the 
	instruction $ K_t\gets 
	K_{t-1}+1 $ is executed when $ t>1 $ and either $ W_t\ge U_{p(t)} $ or $ 
	W_t\le L_{p(t)} $ happens. In 
	both cases, 
	we have $ |W_t|\ge M^*_{p(t)}(w'_1,w'_2,\dots,w'_{p(t)}) $. Since the $ t 
	$-th round is a moving round if the instruction $ K_t\gets K_{t-1}+1 $ 
	is executed, we 
	get $ 
	m_{p(t+1)}=t $. Since $ |W_t|\le 
	t-1-m_{p(t)}+1=t-m_{p(t)}=m_{p(t+1)}-m_{p(t)} 
	$, the 
	inequality $ m_{p(t+1)}-m_{p(t)} =m_{K_t}-m_{K_t-1}  \ge 
	M^*_{p(t)}(w'_1,w'_2,\dots,w'_{p(t)}) $ must 
	hold. Note that the above equality is true only if the $ t $-th round is a 
	moving round. Additionally, notice that for any $k$, $ K_{m_k}=k $ and $ p(m_k)=k-1 
	$. %
	If $ K_T\ge K+1 $, summing the inequality over all $ t\in \{m_k|2\le 
	k\le K+1 \} 
	$ 
	yields \begin{align*}
&	\sum_{k=2}^{K+1} (m_{K_{m_k}} -  m_{K_{m_k}-1}) =\sum_{k=2}^{K+1} (m_{k} -  
	m_{k-1})= m_{K+1}-1\\  \ge{}& \sum_{k=2}^{K+1} 
	M^*_{p(m_k)}(w'_1,w'_2,\dots,w'_{p(m_k)})=\sum_{k=2}^{K+1} 
	M^*_{k-1}(w'_1,w'_2,\dots,w'_{k-1}) = T\,,
	 \end{align*}
	where the last equality is because for any given sequence $ 
	w'_1,w'_2,\dots,w'_K $, the sum $ \sum_{k=1}^K 
	M^*_k(w'_1,w'_2,\dots,w'_{k}) $ must be $ T $. Since we assume $ K_T\ge K+1 
	$, we deduce $ T\ge m_{K_T}\ge m_{K+1}\ge T+1 $, which is a contradiction. 
	Therefore, we establish $ K_T\le K $.
	
	Since $ K_T\le K $, for the purpose of analysis, let us modify 
	\cref{ln:for-loop} and wait until $ K_t=K+1 
	$. In other words, the algorithm terminates at the $ T_0 $-th round 
	whenever 
	$ K_{T_0}=K+1 $ happens. We define $ m_{K+1}=T_0 $. The algorithm continues 
	running even if $ t>T $, 
	provided that $ K_t\le K $. We define $ T'=T_0-1\ge T $. The $ T' $-th 
	round is the last round such that $ K_{T'}=K $. Note that in the following calculations, $x_t$ and $w'_t$ refer to the assignments made in \cref{alg:fugal-for-oco}. Since the adversary can 
	always play $ 0 $ at the additional rounds (\ie, $ w_{T+1}=w_{T+2}=\dots = 
	w_{T'}=0 $), we have \begin{align*}
&	\max_{w_1,\dots,w_T} \sum_{t=1}^T x_t \cdot w_t + \left| \sum_{t=1}^T w_t 
	\right| \le \max_{w_1,\dots,w_{T'}} \sum_{t=1}^{T'} x_t \cdot w_t + \left| 
	\sum_{t=1}^{T'} w_t 
	\right| \\
	={}& \max_{w_1,\dots,w_{T'}} \sum_{i=1}^K 
	x_{i}^*(w'_1,w'_2,\dots,w'_{i-1})\sum_{j=m_i}^{m_{i+1}-1} w_j + \left| 
	\sum_{i=1}^K \sum_{j=m_i}^{m_{i+1}-1} w_j \right|\,.
	 \end{align*}
	 For any $ 1\le i\le K $, if $ w'_i=1 $, since $ |w_t|\le 1 $ for all $ t 
	 $, we have $0\le  \sum_{j=m_i}^{m_{i+1}-1} 
	 w_j-M_i^*(w'_1,w'_2,\dots,w'_i)\le 1 $. If $ w'_i=-1 $, similarly we get $ 
	 0 \le -\sum_{j=m_i}^{m_{i+1}-1} 
	 w_j-M_i^*(w'_1,w'_2,\dots,w'_i)\le 1 $. Combining these two cases gives $ 
	 0\le w'_i\sum_{j=m_i}^{m_{i+1}-1} 
	 w_j-M_i^*(w'_1,w'_2,\dots,w'_i)\le 1  $. Multiplying through by $w_i'$, we therefore obtain \[\left| 
	 \sum_{j=m_i}^{m_{i+1}-1} 
	 w_j-w'_i M_i^*(w'_1,w'_2,\dots,w'_i) \right|\le 1\,.
	 \] 
	 Thus the following upper bound holds \begin{align*}
&\roco(B^1, 
B^{*1}, 
K, T) \\
\le {}&	\max_{w_1,\dots,w_T} \sum_{t=1}^T x_t \cdot w_t + \left| 
\sum_{t=1}^T 
	 w_t 
	 \right| \\
\le{}& \max_{w_1,\dots,w_{T'}} \sum_{i=1}^K 
	 x_{i}^*(w'_1,w'_2,\dots,w'_{i-1})\sum_{j=m_i}^{m_{i+1}-1} w_j + \left| 
	 \sum_{i=1}^K \sum_{j=m_i}^{m_{i+1}-1} w_j \right| \\
	 \le{}& \max_{w_1,\dots,w_{T'}} \sum_{i=1}^K 
	 x_{i}^*(w'_1,w'_2,\dots,w'_{i-1})w'_i M^*_i(w'_1,w'_2,\dots,w'_i) + \left| 
	 \sum_{i=1}^K w'_i M^*_i(w'_1,w'_2,\dots,w'_i) \right| +2K\\
	 ={}& r_K(T,0)+2K \,,
	  \end{align*}
	where $ r_K $ is defined in \eqref{eq:def_rk} and denotes the minimax 
	regret of a $ T $-round fugal game with a maximum number of $ K-1 $ 
	switches and no initial bias. Recalling \cref{lem:independent_T} yields \[ 
	\frac{1}{T} \roco(B^1, 
	B^{*1}, 
	K, T) \le \frac{1}{T} r_K(T,0)+2K=u_K(0) + \frac{2K}{T}\,.
	 \]
	Since the fugal game provides a lower bound for $ \roco(B^1, 
	B^{*1}, 
	K, T) $, it follows that \[\frac{1}{T}\roco(B^1, 
	B^{*1}, 
	K, T)\ge \frac{1}{T}r_K(T,0)= u_K(0) \,.\] 
	 By assumption, $\lim_{T \rightarrow \infty}\frac{2K}{T}=0$. Thus the limit 
	 $ \lim_{T\to \infty} \frac{1}{T} 
	\roco(B^1, 
	B^{*1}, 
	K, T) $ exists and equals $ u_K(0) $.
\end{proof}

\section{Additional Lower Bounds for Higher-Dimensional Switching-Constrained OCO}\label{appendix:sec:high-d}

\cref{prop:infty-norm-high-d} proves a dimension-dependent lower bound. In other words, the problem is harder as the dimension becomes higher. The proof is based on \cref{thm:1-d}
and the observation that if both the player and the adversary select from the $\infty$-norm unit ball, the problem can be decomposed into $n$ fully decoupled one-dimensional sub-problems.

\begin{prop}[Lower bound for $\infty$-norm]\label{prop:infty-norm-high-d}
    The minimax regret $\roco(B^{n}_\infty,B^{*n}_\infty,K,T)$ is at least 
    $\frac{nT}{\sqrt{2K}}$.
\end{prop}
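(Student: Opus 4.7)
The plan is to decouple the higher-dimensional game into $n$ independent one-dimensional games, one per coordinate, and then apply the one-dimensional lower bound $\frac{T}{\sqrt{2K}}$ from \cref{thm:1-d} in each coordinate. The key observation is that the $\infty$-norm unit ball factorizes as a product $[-1,1]^n$, and crucially so does its dual (the $1$-norm ball in disguise) when we compute the ``best action in hindsight'' via $-\inf_{\|x\|_\infty \le 1}\sum_i w_i \cdot x = \sum_{j=1}^n \bigl|\sum_i w_{i,j}\bigr|$. Combined with the trivial coordinate-wise decomposition $\sum_i w_i \cdot x_i = \sum_j \sum_i w_{i,j} x_{i,j}$, the total regret becomes an \emph{additive} sum of $n$ one-dimensional regrets, one per coordinate.

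Concretely, I would have the adversary play independently in each coordinate: in coordinate $j$, she applies the optimal one-dimensional adversarial strategy from \cref{thm:1-d} against the player's $j$-th coordinate sequence $x_{1,j},\dots,x_{T,j}$. Since the player observes the full $w_t$ and the adversary observes the full $x_t$, each coordinate sub-game is a genuine one-dimensional switching-constrained OCO game. Writing the regret as $\sum_{j=1}^n \left(\sum_{i=1}^T w_{i,j} x_{i,j} + \bigl|\sum_{i=1}^T w_{i,j}\bigr|\right)$ exhibits the regret as a sum of $n$ one-dimensional regrets.

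The key step that needs care is the switching-budget accounting. I would observe that if $x_{i+1} = x_i$ as vectors, then in particular $x_{i+1,j} = x_{i,j}$ for every coordinate $j$. Hence the number of switches in the one-dimensional sequence $x_{1,j},\dots,x_{T,j}$ is at most $c(x_1,\dots,x_T) < K$, so each coordinate sub-game is itself a valid instance of $(T,K)$-switching-constrained one-dimensional OCO. Applying \cref{thm:1-d} to each of the $n$ independent coordinate sub-games, each contributes regret at least $\frac{T}{\sqrt{2K}}$, and summing across $j=1,\dots,n$ yields the desired lower bound $\frac{nT}{\sqrt{2K}}$.

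The main (mild) obstacle is presenting the decoupling argument rigorously inside the nested $\inf$-$\sup$ definition of minimax regret: one must verify that the adversary really can implement a coordinatewise-independent strategy, which requires noting that her decision in coordinate $j$ at round $t$ can be taken to depend only on $x_{1,j},\dots,x_{t,j}$ (and her own past coordinate-$j$ choices). Because the adversary is strictly stronger when allowed to use the full history but choosing to ignore off-coordinate information can only weaken her, this restriction is valid and preserves the lower bound. No further technical machinery beyond \cref{thm:1-d} and this decomposition is needed.
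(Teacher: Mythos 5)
Your proposal is correct and follows essentially the same route as the paper: decompose the regret coordinate-wise using $-\inf_{\|x\|_\infty\le 1}\sum_i w_i\cdot x=\sum_{j=1}^n\bigl|\sum_i w_{i,j}\bigr|$ and apply the one-dimensional bound of \cref{thm:1-d} in each of the $n$ coordinates. Your explicit accounting that each coordinate sequence inherits the player's switching budget (so each sub-game is a valid $(T,K)$ one-dimensional instance) is a point the paper leaves implicit, but it is the same argument.
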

\begin{proof}
    By \eqref{eq:regret-p-norm}, we have \begin{align*}
    \roco(B^{n}_\infty,B^{*n}_\infty,K,T)
     ={} \inf_{\|x_1\|_\infty\le 1} & \sup_{\|w_1\|_\infty\le 1} 
\dots 
\inf_{\|x_T\|_\infty\le 1
}\sup_{\|w_T\|_\infty\le 1} \sup_{\lambda>0}\\
&\left(  \sum_{i=1}^T w_i \cdot x_i 
+\left\|
\sum_{i=1}^T 
w_i\right\|_{1} + \lambda \ind[c(x_1,\dots,x_T)\ge K] \right)\,.
    \end{align*}
Both terms are decomposable by coordinates as follows: the $j^{th}$ coordinate 
of $\sum_{i=1}^T w_i \cdot x_i  =\sum_{j=1}^n w_{i,j} x_{i,j}$ and $\left\| 
\sum_{i=1}^T w_i \right\|= \sum_{j=1}^n \left| \sum_{i=1}^T w_{i,j} \right|$. 
Therefore by \cref{thm:1-d}, we obtain \[\roco(B^{n}_\infty,B^{*n}_\infty,K,T)= 
n \roco(B^1,B^{*1},K,T)\ge \frac{nT}{\sqrt{2K}}\,.\]
\end{proof}

\section{Additional Upper Bounds for Switching-Constrained OCO} \label{appendix:upper_bounds_OCO}
In this section, we derive upper bounds for switching-constrained OCO to match 
the lower bounds in \cref{prop:1d-lowerbound,thm:high-d}. We begin with a 
simple algorithm achieving the correct minimax regret, $O(\frac{T}{\sqrt{K}})$, 
for any player's action set $ \cD $ and the function family $ \cF $ that the 
adversary chooses from. 

\propupperboundocohighd*
 	\begin{proof}
 First, we claim that the minimax regret $ \roco(\cD,\cF, K, T) $ is a 
 non-decreasing function in $ T $. To see this, consider the situation where we 
 have more rounds. The adversary can play $ 0 $ in all additional rounds and 
 this does not decrease the regret. Therefore, we obtain that $ \roco(\cD, \cF, 
 K, T) \le \roco(\cD,\cF, K, T_1) $, where $ T_1 = \lceil 
 \frac{T}{K} \rceil K \ge T $.

In the sequel, we derive an upper bound for $ \roco(\cD, \cF, K, T_1) $. 
 To attain the upper bound, we mini-batch the $ T_1 $ 
 	rounds into $ K $ equisized epochs, each having size
 	$  \frac{T_1}{K} = \lceil \frac{T}{K} \rceil  $. Let $ E_i $ denote 
 	the set of all rounds that belong to the $ i $-th epoch. We have $ 
 	 E_i = \{ \frac{T_1}{K}(i-1)+1,\frac{T_1}{K}(i-1)+2,\dots,\frac{T_1}{K}i \} 
 	 $.
 	 The 
 	epoch loss of the $ i $-th epoch is the average of loss functions in this 
 	epoch, \ie, $ \bar{f}_i\triangleq  \frac{1}{|E_i|} 
 	\sum_{j\in E_i} f_j $. If we run a minimax 
 	optimal algorithm for unconstrained OCO (for example, online gradient 
 	descent \cite{zinkevich2003online}) on the epoch losses $ 
 	\bar{f}_1,\dots,\bar{f}_{K} $ and obtain the player's action sequence $ 
 	\bar{x}_1,\dots,\bar{x}_{K} $, our strategy is to play $ \bar{x}_i $ at all 
 	rounds in the $ i $-th epoch. 
 	This method was originally discussed in \cite{arora2012online,dekel2012optimal}. Using this 
 	mini-batching method, we deduce that the regret is upper bounded by $ 
 	\frac{T_1}{K} O(\sqrt{K})  = \lceil \frac{T}{K} \rceil O(\sqrt{K}) = 
 	O(\frac{T}{\sqrt{K}}) $, where 
 	$O(\sqrt{K})$ is the standard upper bound of the regret of a $K$-round OCO 
 	game. 
	\end{proof}
	
	In the next two propositions, we seek a more precise understanding of the 
	\emph{exact} minimax rate --- \ie\ the constant in front of 
	$\frac{T}{\sqrt{K}}$ --- of switching-constrained online linear 
	optimization, beginning with $n=1$. 
	In \cref{sec:1-d}, \cref{prop:unimprovable-constant} demonstrated that we 
	cannot hope to improve the constant in the lower bound, 
	$\frac{1}{\sqrt{2}}$, for arbitrary $T$ and $K$. Further, 
	\cref{prop:asymptotic-tightness} showed that the fugal game captures the 
	correct constant, \emph{asymptotically}. In the following proposition, we 
	seek a direct, \emph{non-asymptotic} bound on the constant in front of 
	the one-dimensional minimax rate, $\tilde{O}(\frac{T}{\sqrt{K}})$. To do 
	so, we more carefully examine the mini-batching technique from 
	\cref{prop:upper_bound_oco_highd}. We observe that it actually allows reuse 
	of the exact \emph{minimax} rate (including the constant) of vanilla 
	unconstrained OCO, rather than simply algorithms like projected gradient 
	descent in our original application of the technique. %
\begin{lemma}\label{lem:t-k-inequality}
	If $ T $ and $ K $ are positive integers such that $ T\ge K\ge 1 $, the 
	inequality $ \lceil \frac{T}{K} \rceil \le \frac{2T}{\sqrt{K(K+1)}} $ holds.
\end{lemma}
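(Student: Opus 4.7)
The plan is to separate the two awkward ingredients on the left-hand side --- the ceiling and the square root --- using two elementary upper bounds, and then reduce the claim to a single linear inequality in $T$ that the hypothesis $T \ge K$ handles with a small amount of slack.

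First, I would replace the ceiling with its standard linear upper bound $\lceil T/K \rceil \le (T+K-1)/K$, valid for all positive integers $T$ and $K$. This is a one-line case split on the residue of $T$ modulo $K$: if $T = qK$ both sides equal $q$, and if $T = qK + r$ with $1 \le r \le K-1$ then the left side equals $q+1$ while the right side equals $q + (K-1+r)/K \ge q+1$.

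Second, I would bound the square root via $\sqrt{K(K+1)} \le K + \tfrac{1}{2}$, which is equivalent to the trivial inequality $(K+\tfrac{1}{2})^2 - K(K+1) = \tfrac{1}{4} \ge 0$. Combining these two steps, the desired inequality $\lceil T/K \rceil \sqrt{K(K+1)} \le \frac{2T}{\sqrt{K(K+1)}} \cdot \sqrt{K(K+1)} = 2T$ reduces to $(T+K-1)(2K+1) \le 4TK$. Expanding and collecting terms, this is the linear inequality $(2K-1)T \ge (2K+1)(K-1) = 2K^2 - K - 1$, which follows from $T \ge K \ge 1$ because $K(2K-1) = 2K^2 - K$ already exceeds the right-hand side by $1$.

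The main thing to watch is that both approximations in the first two steps are fairly tight --- each loses only an $O(1)$ additive term, not anything proportional to $T$ or $K$ --- so one cannot afford to discard the hypothesis $T \ge K$ anywhere earlier. That hypothesis is used at exactly the last step, and exactly one unit of slack remains, which is reassuring: any weaker assumption (e.g.\ $T \ge K - 1$) would break the chain. Beyond this bookkeeping there is no real technical obstacle; the whole lemma is a routine arithmetic consolidation needed so that the mini-batching upper bound of \cref{prop:upper_bound_oco_highd} can be cast in the clean form $\lceil T/K \rceil \sqrt{K(K+1)} \le 2T$ used downstream.
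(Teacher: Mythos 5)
Your proof is correct, and it takes a genuinely different route from the paper's. The paper performs Euclidean division $T = qK + r$, splits on whether $K$ divides $T$, and bounds the ratio $\frac{q+1}{q + r/K}$ by monotonicity in $r$ and then in $q$ (using $q \ge 1$, i.e.\ $T \ge K$) to compare it with $2\sqrt{K/(K+1)}$; the square-root factor is kept exact throughout. You instead discharge the two irrational/ceiling ingredients separately via the rational bounds $\lceil T/K \rceil \le (T+K-1)/K$ and $\sqrt{K(K+1)} \le K + \tfrac12$, which collapses the claim to the linear inequality $(2K-1)T \ge (2K+1)(K-1)$, settled by $T \ge K$ with one unit of slack. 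Your version avoids the divisibility case split and the monotonicity bookkeeping at the cost of giving away a little in each of the two elementary bounds, and the slack analysis confirms nothing is lost that matters; the paper's version is tighter on the square-root side but structurally more case-driven. One trivial slip in your justification of the ceiling bound: when $T = qK$ the right-hand side $(T+K-1)/K$ equals $q + (K-1)/K$, not $q$, so the two sides are equal only for $K=1$; the inequality $\lceil T/K\rceil \le (T+K-1)/K$ of course still holds in that case, so this does not affect the argument.
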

\begin{proof}
	If $ T $ is divisible by $ K $, we have $ \lceil \frac{T}{K} \rceil = 
	\frac{T}{K} $. Since $ \sqrt{\frac{K+1}{K}} \le 2 $, we get $ \frac{1}{K} 
	\le \frac{2}{\sqrt{K(K+1)}} $ and thus $ \lceil \frac{T}{K} \rceil =  
	\frac{T}{K} \le \frac{2T}{\sqrt{K(K+1)}} $. In the sequel, we assume that $ 
	K $ cannot divide $ T $. 
	We consider the Euclidean division of $ T $ by $ K $. There exists unique 
	positive 
	integers $ q $ and $ r $ such that $ T=qK+r $ and $ 1\le r\le K $. The 
	following inequality holds \begin{equation}\label{eq:qr}
	\frac{q+1}{q+r/K} \le \frac{q+1}{q+1/K} \le \frac{2}{1+1/K} = 2\cdot 
	\frac{K}{K+1} \le 2\sqrt{\frac{K}{K+1}}\,,
	 \end{equation}
	where the first inequality is because $ \frac{q+1}{q+r/K} $ is a decreasing 
	function in $ r $ and the second inequality is because $ \frac{q+1}{q+1/K} 
	$ is a decreasing function in $ q $. In light of \eqref{eq:qr}, we have 
	\[ 
	\left\lceil \frac{T}{K}\right\rceil = q+1 \le 2(q+r/K)\sqrt{\frac{K}{K+1}} 
	= \frac{2(Kq+r)}{\sqrt{K(K+1)}} = \frac{2T}{\sqrt{K(K+1)}}\,.
	 \]
\end{proof}

\begin{prop}[Upper bound for $ 2 $-norm and dimension at least 
	2]\label{prop:exact_upper_bound_3d}
	The minimax regret 
	$ \roco(B^n_2, B^{*n}_2, K, T) $ satisfies  $ \roco(B^n_2, B^{*n}_2, K, T) 
	\le \left\lceil\frac{T}{K}\right\rceil \sqrt{K}\le  \frac{2T}{\sqrt{K}} $ 
	for 
	all $n \geq 2 $.
\end{prop}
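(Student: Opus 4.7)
The plan is to follow the mini-batching skeleton of \cref{prop:upper_bound_oco_highd}, but to squeeze out the precise constant by invoking the known \emph{exact} minimax value for unconstrained $L_2$-ball OLO in dimensions $n\ge 2$, namely $\sqrt{K}$ for a $K$-round game (\citet{abernethy2008optimal}). First I would use the monotonicity of $\rtk$ in $T$ (the adversary may always pad with zero loss functions, which can only inflate regret) to reduce to the case $T_1=\lceil T/K\rceil K$, which is divisible by $K$. Partition the $T_1$ rounds into $K$ equisized epochs $E_1,\dots,E_K$, each of length $m:=\lceil T/K\rceil$, and define the epoch loss $\bar f_i(x)=\frac{1}{m}\sum_{j\in E_i}f_j(x)=\bar w_i\cdot x$ where $\bar w_i=\frac{1}{m}\sum_{j\in E_i}w_j$. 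Since each $\|w_j\|_2\le 1$, the triangle inequality yields $\|\bar w_i\|_2\le 1$, so the averaged game is itself an instance of OLO with player and adversary constrained to $B^n_2$ and $B^{*n}_2$ respectively, now of horizon $K$.

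Next, the player runs the unconstrained minimax-optimal algorithm for the $K$-round OLO game on the epoch losses $\bar f_1,\dots,\bar f_K$, obtaining actions $\bar x_1,\dots,\bar x_K$, and plays $\bar x_i$ at every round of epoch $E_i$. Since the player only changes action between epochs, she uses at most $K-1<K$ switches, so the switching budget is respected. A direct telescoping gives
\begin{equation*}
\sum_{t=1}^{T_1}f_t(\bar x_{i(t)})-\inf_{x\in B^n_2}\sum_{t=1}^{T_1}f_t(x)
=m\left(\sum_{i=1}^K\bar f_i(\bar x_i)-\inf_{x\in B^n_2}\sum_{i=1}^K\bar f_i(x)\right)
\le m\cdot\sqrt{K},
\end{equation*}
where the last inequality invokes the Abernethy--Hazan--Rakhlin exact minimax value $\sqrt{K}$ for unconstrained $K$-round OLO over the Euclidean ball in dimension $n\ge 2$. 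This proves $\rtk\le\lceil T/K\rceil\sqrt{K}$.

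Finally, to obtain the clean bound $\frac{2T}{\sqrt K}$ on the right, I would apply \cref{lem:t-k-inequality}: $\lceil T/K\rceil\le \frac{2T}{\sqrt{K(K+1)}}$, and therefore
\begin{equation*}
\lceil T/K\rceil\sqrt{K}\ \le\ \frac{2T}{\sqrt{K+1}}\ \le\ \frac{2T}{\sqrt{K}}.
\end{equation*}
I do not anticipate a genuine obstacle in this argument; the only subtle step is verifying that the averaged loss stays in the same linear class $B^{*n}_2$, so that the sharp constant $\sqrt{K}$ (and not merely $O(\sqrt{K})$) from \citet{abernethy2008optimal} is legitimately available. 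Everything else is a monotonicity reduction, a mini-batch telescoping identity, and the inequality of \cref{lem:t-k-inequality}.
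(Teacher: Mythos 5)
Your proposal is correct and follows essentially the same route as the paper: mini-batch the (padded) horizon into $K$ epochs of length $\lceil T/K\rceil$, reduce to the unconstrained $K$-round game via the averaged losses, invoke the exact minimax value $\cR(K)=\sqrt{K}$ from \citet{abernethy2008optimal}, and finish with \cref{lem:t-k-inequality}. The only caveat is that Theorem 6 of \citet{abernethy2008optimal} is stated for $n>2$, so for $n=2$ you should note (as the paper does) that the $\sqrt{K}$ upper bound from their Lemma 9 still carries through, which is all your argument needs.
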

\begin{proof}
	Let $ \cR(T) $ denote the minimax regret of the vanilla $ T $-round 
	$ n $-dimensional 
	OCO without a switching constraint. It is defined by \[ 
	\cR(T) = \inf_{x_1\in B^n_2} \sup_{w_1\in B^{*n}_2} \dots \inf_{x_T\in 
		B^n_2}\sup_{w_T\in B^{*n}_2} \left( \sum_{i=1}^T w_i \cdot x_i + 
		\left\| 
	\sum_{i=1}^T w_i \right\| \right)\,.
	\]
	Using the mini-batching argument that we used to show 
	\cref{prop:upper_bound_oco_highd}, we have $ \roco(B^n, 
	B^{*n}, 
	K, T) \le \lceil\frac{T}{K}\rceil \cR(K) $. \footnote{ To be concrete, the 
	minimax regret, $\roco(B^1, 
		B^{*1}, 
		K, T)$ can only increase when we restrict the player to switch 
		precisely every $\frac{T}{K}$ rounds. Then, conditioned on this player 
		strategy, the regret term $\sum w_tx_t+ \left| \sum w_t \right|$ is 
		unchanged by forcing the adversary to also pick the same function on 
		each $\frac{T}{K}$-sized block. Thus, $\frac{T}{K} \cR(K)$ provides a 
		valid upper bound as claimed.}
	 By Theorem 6 of 
	\citet{abernethy2008optimal}, $\cR(K) = \sqrt{K}$ when $n>2$. In fact, when 
	$n=2$, the upper bound of Lemma 9 carries through, so $\cR(K) = \sqrt{K}$ 
	when 
	$n=2$ as well. Thus by \cref{lem:t-k-inequality}, we have \[\roco(B^n_2, 
	B^{*n}_2, K, T) \leq \left\lceil\frac{T}{K}\right\rceil\cR(K) 
	= \left\lceil\frac{T}{K}\right\rceil \sqrt{K}
	\le 
	\frac{2T}{\sqrt{K(K+1)}}\sqrt{K} = \frac{2T}{\sqrt{K+1}} < 
	\frac{2T}{\sqrt{K}} 
	\,.\]
\end{proof}

\begin{prop}\label{prop:non-decrease-in-n}
	For any $p$ and $q$ such that $1\le p,q\le \infty$, the minimax regret 
	$\roco(B^n_p,B^{*n}_q,K,T)$ is non-decreasing in the dimension $n$.
\end{prop}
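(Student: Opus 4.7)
The plan is to reduce the $n$-dimensional game to a sub-game of the $(n+1)$-dimensional one by coupling, via zero-padding of the last coordinate. The key geometric observations are that (i) for any $w\in B^{*n}_q$, the zero-padded vector $\tilde w=(w,0)$ lies in $B^{*(n+1)}_q$ since $\|\tilde w\|_q=\|w\|_q$; and (ii) for any $x\in B^{n+1}_p$, the projection onto the first $n$ coordinates $\tilde x$ lies in $B^n_p$ since $\|\tilde x\|_p\le \|x\|_p$. Moreover, for every $p\in [1,\infty]$, the coordinate projection of $B^{n+1}_p$ is exactly $B^n_p$: one inclusion is (ii), and the other follows from lifting $y\in B^n_p$ to $(y,0)\in B^{n+1}_p$.

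Concretely, I would fix an arbitrary $(n+1)$-dimensional player strategy $\sigma$ (a causal map from observed histories to actions) and define an $n$-dimensional player strategy $\sigma'$ as follows: on observing $w_1,\ldots,w_{t-1}\in B^{*n}_q$, zero-pad each $w_i$ to $\tilde w_i$, feed the resulting history into $\sigma$, and play the first $n$ coordinates of $\sigma$'s recommended action. Because consecutive actions of $\sigma'$ are equal whenever consecutive actions of $\sigma$ are, the switching constraint is inherited. Next, for any adaptive $n$-dimensional adversary $A$, I would construct a companion $(n+1)$-dimensional adversary $A'$ playing against $\sigma$ by the mirror move: after observing $\sigma$'s action $x_t$, project to $\tilde x_t$, feed the history into $A$ to obtain $w_t\in B^{*n}_q$, and play $\tilde w_t=(w_t,0)$. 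In this coupled run, $\sigma$ sees the same zero-padded history that $\sigma'$ internally simulates, so the two games proceed in lockstep.

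The crux is verifying that the two regret contributions agree round-by-round. The running-loss term is identical because $\tilde w_t\cdot x_t=w_t\cdot \tilde x_t$. For the comparator term, the cumulative sum $\sum_t\tilde w_t$ has vanishing last coordinate, so the linear functional $x\mapsto \sum_t\tilde w_t\cdot x$ does not depend on $x_{n+1}$; consequently
\[
\inf_{x\in B^{n+1}_p}\sum_t\tilde w_t\cdot x=\inf_{y\in B^n_p}\sum_t w_t\cdot y,
\]
by the projection identity recorded above. The switching-penalty indicator also matches because $\sigma$ and $\sigma'$ switch on exactly the same rounds. Hence the regret that $A'$ inflicts on $\sigma$ equals the regret that $A$ inflicts on $\sigma'$.

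Taking the supremum over $A$ gives $\sup_{A'}R(\sigma,A')\ge\sup_{A}R(\sigma',A)$, and taking the infimum over $\sigma$ (while noting that the map $\sigma\mapsto\sigma'$ ranges inside the set of all $n$-dimensional player strategies) yields $\roco(B^{n+1}_p,B^{*(n+1)}_q,K,T)\ge\roco(B^n_p,B^{*n}_q,K,T)$, which is the claim. I expect the main subtlety to be the coupled-dynamics bookkeeping — making sure that in the coupled run the adversary $A'$ against $\sigma$ and the adversary $A$ against $\sigma'$ really see histories that are each other's zero-padding/projection — but once the coupling is properly set up, each verification is a one-line calculation using the $\ell_p$/$\ell_q$ projection and embedding properties.
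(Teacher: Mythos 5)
Your proposal is correct and follows essentially the same route as the paper's proof: build the lower-dimensional player by zero-padding the adversary's moves, feeding them to a higher-dimensional strategy, and projecting its recommended action, then note that the inner products and the comparator term are unchanged; your explicit adversary coupling $A\mapsto A'$ just spells out what the paper leaves implicit by working directly with the minimax-optimal strategy. The only small inaccuracy is the claim that $\sigma$ and $\sigma'$ switch on exactly the same rounds—projection can merge distinct actions of $\sigma$, so $\sigma'$ may switch strictly less—but this only decreases the switching penalty on the lower-dimensional side and therefore still yields the needed inequality $\roco(B^{n+1}_p,B^{*(n+1)}_q,K,T)\ge\roco(B^n_p,B^{*n}_q,K,T)$.
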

\begin{proof}
	We will show that for any $m<n$, it holds that $\roco(B^m_p,B^{*m}_q,K,T) 
	\le \roco(B^n_p,B^{*n}_q,K,T)$. We view $B^m_p$ ($B^{*m}_q$, respectively) 
	as the subset of $B^n_p$ ($B^{*n}_q$, respectively) by setting the last 
	$n-m$ coordinates to 0. Next, we show how to convert a minimax optimal 
	player's strategy in the $n$-dimensional game into a player's strategy in 
	the $m$-dimensional game. 
	Let $x_i^*(x_1,w_1,\dots,x_{i-1},w_{i-1}):B^n_p\times B^{*n}_q\times \dots 
	\times B^n_p\times B^{*n}_q\to B^n_p$ be the optimal strategy of the player 
	in the $n$-dimensional game.
	Note that any adversary's choice $w_t\in B^{*m}_q$ can be viewed as a 
	choice in $B^{*n}_q$. At the $t$-th round of the $m$-dimensional game, 
	given the adversary's previous choices $w_1,\dots,w_{t-1}$ and the player's 
	previous choices $x_1,\dots,x_{t-1}$,  
	the player computes $x'_t = x_t^*(x_1,w_1,\dots,x_{t-1},w_{t-1})$ and plays 
	$x_t =P(x'_t)$, where $P$ is the orthogonal projection onto $B^m_p$ (\ie, 
	setting the last $n-m$ coordinates to 0). Notice that $w_t\cdot x_t = 
	w_t\cdot 
	x'_t$. Therefore, in light of \eqref{eq:regret-p-norm}, the regret of the 
	$m$-dimensional game $\sum_{t=1}^T w_t\cdot x_t + \|\sum_{t=1}^T 
	w_t\|_{p/(p-1)}$ equals the regret of the $n$-dimensional game 
	$\sum_{t=1}^T 
	w_t\cdot x'_t + \|\sum_{t=1}^T w_t\|_{p/(p-1)}$, and is thus at most 
	$\roco(B^n_p,B^{*n}_q,K,T)$.
\end{proof}

\begin{prop}[Upper bound for one dimension]\label{prop:upper-bound}
	The minimax regret $ \roco(B^1, 
	B^{*1}, 
	K, T) $ satisfies
	\begin{compactenum}[(a)]
		\item For all $K\ge 1$, $ \roco(B^1, 
		B^{*1}, 
		K, T) \le \left\lceil\frac{T}{K}\right\rceil \min\{ 
		\sqrt{\frac{2(K+1)}{\pi}},  \sqrt{K} \}  
		\le 
		2\sqrt{\frac{2}{\pi}}\frac{T}{\sqrt{K}} < \frac{1.6T}{\sqrt{K}} $; 
		and\label{it:first-1d-upper-bound}
		\item For all $ K\ge 2 $, $ \roco(B^1, 
		B^{*1}, 
		K, T) \le \frac{\sqrt{3}}{2} \lceil \frac{T}{K} \rceil \sqrt{K} < 0.87 
		\lceil \frac{T}{K} \rceil \sqrt{K}
		$.\label{it:second-1d-upper-bound}
	\end{compactenum}
\end{prop}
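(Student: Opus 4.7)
The plan is to combine the mini-batching reduction (as in the proof of \cref{prop:exact_upper_bound_3d}) with sharp bounds on the unconstrained one-dimensional OCO minimax regret over $K$ rounds, which I will denote $\cR_1(K)$. Mini-batching gives
\[
\roco(B^1, B^{*1}, K, T) \le \left\lceil \tfrac{T}{K}\right\rceil \cR_1(K),
\]
so both parts reduce to establishing tight upper bounds on $\cR_1(K)$.

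The central identity I would prove is $\cR_1(K) = \mathbb{E}|S_K|$, where $S_K = \sum_{i=1}^K \epsilon_i$ is a simple symmetric random walk of length $K$ ($\epsilon_i$ i.i.d.\ Rademacher). Proceeding by backward induction, at each round the minimax objective decomposes as $w_t x_t$ plus a function inherited from the future that is convex in $w_t$; hence $\sup_{w_t\in[-1,1]}$ is attained at $w_t = \pm 1$, and the optimal $x_t$ equates the two endpoint values. The resulting recursion $V_t(W) = \tfrac{1}{2}(V_{t+1}(W+1) + V_{t+1}(W-1))$ with boundary $V_{K+1}(W) = |W|$ preserves convexity in $W$ and unwinds to $V_1(0) = \mathbb{E}|S_K|$.

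For part (a), I would bound $\mathbb{E}|S_K|$ two ways: Cauchy--Schwarz with $\mathbb{E}[S_K^2] = K$ yields $\mathbb{E}|S_K| \le \sqrt{K}$, while the closed form $\mathbb{E}|S_K| = \frac{K}{2^{K-1}}\binom{K-1}{\lfloor (K-1)/2\rfloor}$ together with the classical Wallis-type bound $\binom{2n}{n} \le 4^n/\sqrt{\pi(n+1/4)}$ gives, after a parity check on $K$, $\mathbb{E}|S_K| \le \sqrt{2(K+1)/\pi}$. Substituting into the mini-batching estimate proves the first displayed inequality in (a); the final $2\sqrt{2/\pi}\,T/\sqrt{K}$ then follows by applying $\lceil T/K\rceil \le 2T/\sqrt{K(K+1)}$ from \cref{lem:t-k-inequality} to collapse the $\sqrt{K+1}$ factors.

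For part (b), I would sharpen to $\cR_1(K)^2/K \le 3/4$ for $K \ge 2$. The parity identity $\mathbb{E}|S_{2m+1}| = \mathbb{E}|S_{2m+2}|$ (immediate from the closed form) reduces the problem to odd $K = 2m+1 \ge 3$. Setting $f(m) = (2m+1)\binom{2m}{m}^2/4^{2m} = \mathbb{E}|S_{2m+1}|^2/(2m+1)$, a one-line computation yields $f(m+1)/f(m) = (2m+1)(2m+3)/(4(m+1)^2) < 1$, so $f$ is strictly decreasing with $f(1) = 3/4$; combined with the parity identity, this gives $\mathbb{E}|S_K|^2/K \le 3/4$ for all $K \ge 2$. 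The main technical obstacle is the identification $\cR_1(K) = \mathbb{E}|S_K|$: the backward induction is conceptually classical but requires careful preservation of the convex-plus-linear structure of the value function at each step. Once that is in place, the remaining work is elementary combinatorial manipulation of Rademacher expectations.
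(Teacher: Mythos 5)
Your proposal is correct and follows essentially the same route as the paper: mini-batch to reduce to the unconstrained $K$-round one-dimensional minimax value, identify that value with the central-binomial expression $\E|S_K|$, and finish with elementary parity and binomial estimates (your ratio computation for $f(m)$ is exactly the paper's observation that $\cR(K)/\sqrt{K}$ decreases along odd $K$ from the value $\sqrt{3}/2$ at $K=3$). The only differences are minor sourcing choices: you re-derive $\cR(K)=\E|S_K|$ by backward induction where the paper cites \citet{mcmahan2013minimax} (your sketch should note that the value function stays $1$-Lipschitz so the equalizing $x_t$ is feasible), and you obtain the $\sqrt{K}$ term by Cauchy--Schwarz rather than the paper's detour through dimension monotonicity and the two-dimensional bound.
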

\begin{proof}
	As in \cref{prop:exact_upper_bound_3d}, a more careful inspection of the 
	mini-batching 
	argument reveals that the minimax regret $\roco(B^1, B^{*1}, K, T)$ is 
	at most $\lceil\frac{T}{K}\rceil$ 
	times $\cR(K)$, the minimax rate of vanilla OCO.
	 If $ K $ is even, Theorem~10 of 
	 \citep{mcmahan2013minimax} implies that $\cR(K) = 
	 \frac{K}{2^K}\binom{K}{\frac{K}{2}} \le \sqrt{\frac{2K}{\pi}}$.
	 \citet{mcmahan2013minimax} did not report the minimax regret if $ K $ 
	 is odd. If $ K $ is odd, according to (10) of \citep{mcmahan2013minimax}, 
	 we have \[ 
	 \cR(K) = \frac{1}{2^K} \sum_{i=0}^K \binom{K}{i} |2i-K| = \frac{4}{2^K} 
	 \sum_{i=0}^{(K-1)/2} \binom{K}{i} (\frac{K}{2}-i)\,.
	  \]
	  The minuend equals \[ 
	  \sum_{i=0}^{(K-1)/2} \binom{K}{i} \frac{K}{2} = \frac{K}{2}\cdot 
	  \frac{2^K}{2} = \frac{K 2^K}{4}\,.
	   \]
	   The subtrahend is given by \[ 
	   \sum_{i=0}^{(K-1)/2} \binom{K}{i} i = K\sum_{i=1}^{(K-1)/2} 
	   \binom{K-1}{i-1} = K\sum_{i=0}^{(K-3)/2}  
	   \binom{K-1}{i} = 
	   \frac{K}{2}\left(2^{K-1}-\binom{K-1}{\frac{K-1}{2}}\right)\,.
	    \]
	    Putting them together yields \[ 
	   \cR(K) = \frac{K}{2^{K-1}}\binom{K-1}{\frac{K-1}{2}}\,. 
	    \]
	  Next, we verify that if $K$ is odd, $\cR(K) = \cR(K+1) $. We have
	  \begin{align*}
	  	 \cR(K+1) ={}& \frac{K+1}{2^{K+1}}\binom{K+1}{\frac{K+1}{2}} \\
	  	={}& \frac{K+1}{2^{K+1}}\cdot 
	  	\frac{K+1}{\frac{K+1}{2}}\binom{K}{\frac{K-1}{2}} \\
	  	={}& \frac{K+1}{2^K} \binom{K}{\frac{K+1}{2}} \\
	  	={}& \frac{K+1}{2^K}\cdot 
	  	\frac{K}{\frac{K+1}{2}}\binom{K-1}{\frac{K-1}{2}} \\
	  	={}& \cR(K)\,.
	  \end{align*}
In other words, the regret $\cR(K)$ obeys the following pattern \[
\cR(1) = \cR(2) < \cR(3) = \cR(4) < \dots < \cR(2n-1) = \cR(2n) < \cdots\,.
\]
Therefore, if $ K $ is odd, it holds that \[ 
\cR(K) = \cR(K+1) \le \sqrt{\frac{2(K+1)}{\pi}}\,.
 \]
As a result, for any $ K $, even or odd, the following inequality holds \[ 
\cR(K) \le \sqrt{\frac{2(K+1)}{\pi}}\,.
 \]
By \cref{lem:t-k-inequality}, we obtain \begin{align*}
 \roco(B^1, 
B^{*1}, 
K, T) \le{}& \left\lceil\frac{T}{K}\right\rceil \cR(K)
 \le  \left\lceil\frac{T}{K}\right\rceil \sqrt{\frac{2(K+1)}{\pi}}\\
 \le{}&
\frac{2T}{\sqrt{K(K+1)}} \sqrt{\frac{2(K+1)}{\pi}} = 2\sqrt{\frac{2}{\pi}} 
\frac{T}{\sqrt{K}}\,.
 \end{align*}
Additionally, \cref{prop:non-decrease-in-n} and 
\cref{prop:exact_upper_bound_3d} imply \[ 
	\roco(B^1, 
	B^{*1}, 
	K, T) \le \roco(B^2_2, 
	B^{*2}_2, 
	K, T) \le \left\lceil\frac{T}{K}\right\rceil \sqrt{K}\,.
 \]
As a consequence, we prove part (\ref{it:first-1d-upper-bound}) \[ 
	\roco(B^1, 
	B^{*1}, 
	K, T) \le \left\lceil\frac{T}{K}\right\rceil \min\{ 
	\sqrt{\frac{2(K+1)}{\pi}},  \sqrt{K} \}  
	\le 
	2\sqrt{\frac{2}{\pi}}\frac{T}{\sqrt{K}} < \frac{1.6T}{\sqrt{K}}\,.
 \]

Next, we show part (\ref{it:second-1d-upper-bound}). Notice that if $ K $ is 
odd, \[ 
	\frac{\cR(K+2)/\sqrt{K+2}}{\cR(K)/\sqrt{K}}=\frac{\sqrt{K(K+2)}}{K+1} < 1\,.
 \]
 Hence, for any odd $K\geq 3$, $\cR(K)/\sqrt{K}\leq 
 \cR(3)/\sqrt{3}=\frac{\sqrt{3}}{2}$.
 
 Recall that  $\cR(K)/\sqrt{K}\leq \sqrt{\frac{2}{\pi}}$ for when $K$ is 
 even. Therefore, for all $ K\ge 2 $, we have \[ 
 \cR(K)\leq 
 \frac{\sqrt{3}}{2}\sqrt{K}\leq 0.87\sqrt{K}\,.
  \]  
  Thus we obtain \[ 
  \roco(B^1, B^{*1}, K, T) \le \frac{\sqrt{3}}{2} \left\lceil \frac{T}{K} 
  \right\rceil \sqrt{K} < 0.87 \left\lceil \frac{T}{K} 
  \right\rceil \sqrt{K}\,.
   \]
\end{proof}

We now easily show an upper bound on the minimax rate for the
 $ \infty $-norm by the same idea.

\begin{prop}[Upper bound for $ \infty $-norm]\label{prop:infty-norm-upper-bound}
	The minimax regret $\roco(B^{n}_\infty,B^{*n}_\infty,K,T)$ is at most  
	$2\sqrt{\frac{2}{\pi}}\frac{nT}{\sqrt{K}}$.
\end{prop}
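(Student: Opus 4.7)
The plan is to mirror the lower-bound proof (\cref{prop:infty-norm-high-d}), which exploited the coordinate-wise decomposition of the $\infty$-norm regret, and graft it onto the mini-batching reduction used repeatedly in \cref{prop:upper_bound_oco_highd,prop:exact_upper_bound_3d,prop:upper-bound}. The key fact powering everything is that both the player's constraint $\|x\|_\infty \le 1$ and the adversary's constraint $\|w\|_\infty \le 1$ decouple into $n$ independent coordinate constraints $|x_j|, |w_j| \le 1$, and by \eqref{eq:regret-p-norm} the $\infty$-norm regret likewise decouples, since $\|\sum_i w_i\|_1 = \sum_j |\sum_i w_{i,j}|$.

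First I would apply the mini-batching reduction from \cref{prop:upper_bound_oco_highd}, padding the horizon to $T_1 = \lceil T/K\rceil K$ and playing a fixed vector $\bar{x}_i$ (obtained by running an optimal unconstrained OCO algorithm on the averaged epoch losses $\bar f_i$) throughout the $i$-th epoch of length $\lceil T/K\rceil$. This uses at most $K$ switches in total and yields
\[
\roco(B^n_\infty, B^{*n}_\infty, K, T) \le \left\lceil \frac{T}{K} \right\rceil \cR^{n}_\infty(K),
\]
where $\cR^n_\infty(K)$ denotes the minimax regret of unconstrained $K$-round $n$-dimensional OCO with both action set and gradient set equal to $B^n_\infty$. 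It is important that the mini-batching is run in the full $n$-dimensional space rather than once per coordinate, so that the $K$-switch budget (not $nK$) is preserved.

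Next I would invoke the coordinate-wise decomposition. By \eqref{eq:regret-p-norm},
\[
\sum_{i=1}^K w_i \cdot x_i + \left\| \sum_{i=1}^K w_i \right\|_1 = \sum_{j=1}^n \left( \sum_{i=1}^K w_{i,j}\, x_{i,j} + \left| \sum_{i=1}^K w_{i,j} \right| \right),
\]
and the $\infty$-norm constraints decouple across coordinates. The player may therefore independently run an optimal one-dimensional unconstrained OCO strategy on each of the $n$ coordinates, which shows $\cR^n_\infty(K) \le n \cR(K)$, where $\cR(K)$ is the one-dimensional unconstrained minimax regret computed inside the proof of \cref{prop:upper-bound} and shown to satisfy $\cR(K) \le \sqrt{2(K+1)/\pi}$ for all $K \ge 1$ via the McMahan expression $\cR(K) = \frac{K}{2^K}\binom{K}{\lfloor K/2\rfloor}$ (and its odd-$K$ analogue).

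Finally I would combine the two displayed inequalities with \cref{lem:t-k-inequality} to conclude
\[
\roco(B^n_\infty, B^{*n}_\infty, K, T) \le \left\lceil \frac{T}{K} \right\rceil n \sqrt{\frac{2(K+1)}{\pi}} \le \frac{2T}{\sqrt{K(K+1)}} \cdot n \sqrt{\frac{2(K+1)}{\pi}} = 2\sqrt{\frac{2}{\pi}}\, \frac{nT}{\sqrt{K}}.
\]
I do not anticipate a serious obstacle: the argument is a clean assembly of three ingredients already present in the paper (the mini-batching reduction, the coordinate-wise decoupling used for the matching lower bound, and the McMahan one-dimensional constant), with only the mild subtlety noted above about where to apply mini-batching so that the switching budget is respected globally.
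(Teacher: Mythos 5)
Your proof is correct, and it reaches the paper's bound by a slightly different assembly of the same ingredients. The paper's own proof decomposes the \emph{switching-constrained} game coordinate-wise, invoking the identity $\roco(B^{n}_\infty,B^{*n}_\infty,K,T)=n\,\roco(B^1,B^{*1},K,T)$ from the lower-bound argument of \cref{prop:infty-norm-high-d}, and then simply quotes the one-dimensional upper bound $\roco(B^1,B^{*1},K,T)\le 2\sqrt{2/\pi}\,T/\sqrt{K}$ of \cref{prop:upper-bound} (which is itself mini-batching plus the McMahan constant). You instead mini-batch first in the full $n$-dimensional game and decompose only the resulting \emph{unconstrained} $K$-round epoch game into $n$ one-dimensional games, giving $\cR^n_\infty(K)\le n\,\cR(K)\le n\sqrt{2(K+1)/\pi}$, and finish with \cref{lem:t-k-inequality} exactly as the paper does elsewhere. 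The reordering buys you something real: the direction of the paper's equality needed for the upper bound (that per-coordinate constrained strategies can be combined without blowing up the shared switching budget) is glossed over in the paper, since independently run one-dimensional constrained strategies need not switch at synchronized times; your version sidesteps this entirely because after mini-batching the switches are synchronized by construction, and the coordinate decoupling is applied where no switching constraint exists. The constants come out identical, and your observation that mini-batching must be done once in $\bR^n$ (not per coordinate) to keep the budget at $K$ rather than $nK$ is exactly the right point of care.
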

\begin{proof}
	The argument in \cref{prop:infty-norm-high-d} shows \[ 
	\roco(B^{n}_\infty,B^{*n}_\infty,K,T)= 
	n \roco(B^1,B^{*1},K,T)\,.
	 \]
	 The desired upper bound follows from $ \roco(B^1, 
	 B^{*1}, 
	 K, T) \le 2\sqrt{\frac{2}{\pi}}\frac{T}{\sqrt{K}} $ shown in 
	 \cref{prop:upper-bound}.
\end{proof}

\begin{prop}[Unequal block length for $K=3$, informal] \label{unequal_blocks}
The minimax game between player and adversary when $K=3$ has the player choose to make unequally spaced switches. In particular, the first switch happens at approximately $0.29T$ through the game, strictly before $0.33T$.
\end{prop}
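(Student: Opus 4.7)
The plan is to directly analyze the recursive relation $u_3 = \cT u_2$ at $z = 0$, since in the fugal game with $K = 3$ switches and no initial bias, the minimax-optimal length $M_1$ of the first block is exactly the argument that attains the innermost infimum in $(\cT u_2)(0)$. Using the closed form $u_2(z') = (z'^2 + 1)/2$ together with the definition of the fugal operator, I would rewrite
\[
u_3(0) = \inf_{x \in [-1,1]} \max_{w = \pm 1} \inf_{\substack{|z'| < 1 \\ w z' \ge 0}} \frac{(z'^2 + 1)/2 + x z'}{1 + w z'}.
\]

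First, I would exploit the reflection symmetry $(z', w, x) \mapsto (-z', -w, -x)$, which preserves both the objective and the constraint set, to conclude that the outer infimum is attained at $x = 0$ and that the two branches $w = +1$ (with $z' \ge 0$) and $w = -1$ (with $z' \le 0$) yield the same value. It then suffices to minimize $f(z') = (z'^2 + 1)/[2(1 + z')]$ on $z' \in [0, 1)$. Setting $f'(z') = 0$ reduces to the quadratic $z'^2 + 2z' - 1 = 0$, whose nonnegative root is $z'_\star = \sqrt{2} - 1$, and direct substitution gives $f(z'_\star) = \sqrt{2} - 1$, which matches the previously computed value $u_3(0) = \sqrt{2}-1$ and serves as a consistency check.

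The last step is to translate $z'_\star$ back into a block length. The reparametrization introduced just before \cref{cor:u_prime_z_prime} gives $z' = tw/(T - t)$, so with $z = 0$ and $w = 1$ we obtain
\[
\frac{t}{T} \;=\; \frac{z'_\star}{1 + z'_\star} \;=\; \frac{\sqrt{2} - 1}{\sqrt{2}} \;=\; 1 - \frac{1}{\sqrt{2}} \;\approx\; 0.2929,
\]
which is strictly less than $\tfrac{1}{3} \approx 0.3333$. This establishes the unequal spacing claim for the fugal game.

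The main obstacle is not computational but interpretive: the fugal game permits real-valued block lengths and constrains the adversary, so one must justify that the optimal first switch of the genuine switching-constrained OCO game (where $T$ and block lengths are integers and the adversary is unrestricted) occurs at the same location. This is precisely the content of \cref{prop:asymptotic-tightness}, which shows that the minimax behavior of the fugal game coincides asymptotically (as $T \to \infty$ with $K$ fixed) with that of switching-constrained OCO. The reduction encoded in \cref{alg:fugal-for-oco} moreover converts the optimal fugal block lengths into OCO switches within an additive $O(K)$ error, so the first OCO switch occurs near round $(1 - 1/\sqrt{2})T$, strictly before $T/3$, as claimed.
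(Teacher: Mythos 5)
Your proposal is correct and follows essentially the same route as the paper: both identify the optimal normalized switch point $z'=\sqrt{2}-1$ in the fugal recursion $u_3=\cT u_2$ at $z=0$ and translate it via the reparametrization $z'(t)=\frac{Tz+tw}{T-t}$ to obtain $t=\bigl(1-\tfrac{1}{\sqrt{2}}\bigr)T\approx 0.29T<T/3$. The differences are cosmetic: you re-derive $z'=\sqrt{2}-1$ by directly minimizing $\frac{z'^2+1}{2(1+z')}$ (with the attainment at $x=0$ needing, strictly, the monotonicity of the $w=+1$ branch in $x$ in addition to symmetry, exactly as in the paper's computation of $u_4(0)$), whereas the paper reads it off from Case 2 of \cref{prop:fugal-operator-quadratic} with $i=2$, $z=0$; you also add the appeal to \cref{prop:asymptotic-tightness} to transfer the conclusion to the genuine OCO game, which the paper's informal proof leaves implicit.
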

\begin{proof}
The proof of \cref{prop:fugal-operator-quadratic} showed that $z_+$ and $z_-$ are $\pm (\sqrt{2}-1)$, via considering Case 2 of the proof and setting $i=2$ and $z=0$. Without loss of generality, we assume the optimal $z$ is $\sqrt{2}-1$ ($z_+$ and $z_-$ are symmetric). Then, we translate the optimal $z$ into the location of the optimal first switch. Plugging $z'=\sqrt{2}-1, z=0, w=1$ into the reparametrization $z'(t) = \frac{Tz+tw}{T-t}$ of \cref{cor:r-lowerbound}, we get $t = (1-\sqrt{2}/2)T \approx 0.29T$. Therefore, the optimal first switch happens at approximately $0.29T$. 
\end{proof}

\end{document}